\def \y {\mathbf{y}}
\def \x {\mathbf{x}}
\def \g {\mathbf{g}}
\def \z {\mathbf{z}}
\def \u {\mathbf{u}}
\def \r {\mathbf{r}}
\def \w {\mathbf{w}}
\def \s {\mathbf{s}}
\def \R {\mathbb{R}}
\def \m {\mathbf{m}}
\def \A {\mathcal{A}}
\def \v {\mathbf{v}}
\def \p {\mathbf{p}}
\def \a {\mathbf{a}}
\def \B {\mathbf{B}}
\def \A {\mathbf{A}}
\def \V {\mathcal{V}}
\def \m {\mathbf{m}}
\def \K {\mathcal{K}}
\def \V {\mathcal{V}}
\def \B {\mathcal{B}}
\def \S {\mathcal{S}}
\def \wn {\w^*_{\|\cdot\|}}
\newcounter{protocoll}
\newenvironment{protocoll}[1][htb]{%
  \let\c@algorithm\c@protocoll
  \renewcommand{\ALG@name}{Protocol}
  \begin{algorithm}[#1]%
  }{\end{algorithm}
}
\newtheorem{ass}{Assumption}
\newtheorem{lemma}{Lemma}
\newtheorem{theorem}{Theorem}
\newtheorem{definition}{Definition}
\DeclareMathOperator*{\argmin}{argmin}
\DeclareMathOperator*{\argmax}{argmax}
\def \dl {\boldsymbol{\delta}}
\begin{document}

\begin{center}

{\bf{\Large{Faster Margin Maximization Rates for  Generic \\ and Adversarially Robust Optimization Methods}}}

\vspace*{.2in}

{\large{
\begin{tabular}{cccc}
 Guanghui Wang$^1$, Zihao Hu$^1$, Claudio Gentile$^2$\\
 Vidya Muthukumar$^{3,4}$, Jacob Abernethy$^{1,5}$  \\
\end{tabular}}}

\vspace*{.05in}

\begin{tabular}{c}
$^1$College of Computing, Georgia Institute of Technology\\
$^2$Google Research, New York\\
$^3$School of Electrical and Computer Engineering, Georgia Institute of Technology\\
$^4$School of Industrial and Systems Engineering, Georgia Institute of Technology\\
$^5$Google Research, Atalanta\\

  \texttt{\{gwang369,zihaohu,vmuthukumar8\}@gatech.edu}, \texttt{\{abernethyj,cgentile\}@google.com} 
\end{tabular}

\vspace*{.2in}
\date{}

\end{center}

\abstract{First-order optimization methods tend to inherently favor certain solutions over others when minimizing an underdetermined training objective that has multiple global optima. This phenomenon, known as \emph{implicit bias}, plays a critical role in understanding the generalization capabilities of optimization algorithms. Recent research has revealed that in separable binary classification tasks gradient-descent-based methods exhibit an implicit bias for the $\ell_2$-maximal margin classifier. Similarly, generic optimization methods, such as mirror descent and steepest descent, have been shown to converge to maximal margin classifiers defined by alternative geometries. While gradient-descent-based algorithms provably achieve {\em fast} implicit bias rates, corresponding rates in the literature for generic optimization methods are relatively slow. To address this limitation, we present a series of state-of-the-art implicit bias rates for mirror descent and steepest descent algorithms. Our primary technique involves transforming a generic optimization algorithm into an online optimization dynamic that solves a regularized bilinear game, providing a unified framework for analyzing the implicit bias of various optimization methods. Our accelerated rates are derived by leveraging the regret bounds of online learning algorithms within this game framework. We then show the flexibility of this framework by analyzing the implicit bias in  {\em adversarial training}, and again obtain significantly improved convergence rates. 
}

\section{Introduction}\label{sec1}

The optimization objective involved in the training 
of modern (overparameterized) Machine Learning (ML) models is typically \emph{underdetermined}, meaning that it presents infinitely many global optima.
Yet, even unregularized first-order optimization methods are observed to converge to solutions that generalize well to test data, as multiple empirical studies have repeatedly confirmed (e.g., \citet{zhang2021understanding,neyshabur2014search}).
Moreover, robust optimization methods such as \emph{adversarial training} are empirically observed to achieve solutions that generalizes well even in the presence of adversarial perturbations of data (e.g., \citet{madry2018towards,li2020implicit}).
These observations have spurred interest in what is commonly called the \emph{implicit bias} of these optimization methods: namely, \emph{(a) which solution (i.e. global minimum) is favored by a particular first-order optimization method, and (b) at what speed do the parameters of the model converge to this solution?}

This paper addresses these questions in the regime of underdetermined linear classification.
When the training data is separable, the optimization objective is typically the unregularized empirical risk measured through a convex loss function $r(\cdot)$ that acts as a suitable surrogate to the discontinuous $0-1$ loss (see Equation~\eqref{eqn:the ERM function} for a formal definition).
Specializing to exponentially-tailed loss functions (which include the popular logistic loss), we have a rich asymptotic theory that addresses question (a), linking the implicit bias of an optimization method to its geometry.
The pioneering works~\citet{ji2018risk,soudry2018implicit} first characterized the implicit bias of gradient descent (GD) by the solution that maximizes the (normalized) margin measured in Euclidean distance; this is commonly called the $\|\cdot\|_2$-maximal margin classifier. Subsequently,~\citet{gunasekar2018characterizing} showed that the implicit bias of the steepest descent algorithm with respect to a general norm $\|\cdot\|$ is the corresponding $\|\cdot\|$-maximal margin classifier, and~\citet{sun2022mirror} showed that the implicit bias of the mirror descent algorithm with the potential $\|\cdot\|_q^q$ (for $q > 1$) is the corresponding $\|\cdot\|_q$-maximal margin classifier. Therefore, these richer families of algorithms can adapt to different data geometries by varying the choice of norm or potential function.
On the side of 
Adversarial Training (AT), GD augmented with adversarial perturbations in a bounded $\ell_s$-norm (called $\ell_s$-AT as shorthand) is known to converge to the maximum $(2,s)$-mixed-norm margin classifier, which could yield improved robustness properties depending on the choice of $s$~\cite{charles2019convergence,li2020implicit}.
The choice of $s$, and therefore the choice of the optimization algorithm, affects the entire nature of the eventual solution, thus playing a pivotal role in robustness.

Question (b), i.e., the rate of \emph{parameter or margin convergence} to these implicit biases, has been addressed in part for specific optimization methods, but the picture remains incomplete and complex.
Even for the special case of Euclidean geometry, parameter or margin convergence analyses present several technical challenges due to the non-smoothness of the margin function, the presence of multiple global minima of the original optimization objective function, and the fact that, for exponentially tailed losses such as the logistic loss, all of these minima are attained at infinity, meaning that the \emph{direction} of the parameter is what needs to be considered~\cite{dudik2022convex}.
The initial works on GD only established a slow rate of $\mathcal{O}\left(\frac{\log n}{\log T}\right)$, where $T$ is the time horizon and $n$ is the cardinality of the dataset~\cite{soudry2018implicit,ji2018risk}.
Since then, GD was shown to attain better rates of $\mathcal{O}\left(\frac{\log n + \log T}{\sqrt{T}}\right)$~\cite{nacson2019convergence} and $\mathcal{O}\left(\frac{\log n}{T}\right)$~\cite{ji2021characterizing} with a more aggressive step size schedule, and the fastest known rate of $\mathcal{O}\left(\frac{\log n}{T^2}\right)$ additionally imbues GD with momentum or Nesterov's acceleration~\cite{ji2021fast,wang2022accelerated}.
The picture of margin convergence rates remains much more limited for generic optimization methods, with the fastest known rate being $\mathcal{O}\left(\frac{\log n + \log T}{\sqrt{T}}\right)$ for steepest descent methods~\cite{gunasekar2018characterizing} and $\mathcal{O}\left(\frac{\log n}{T^{1/4}}\right)$ for mirror descent methods\footnote{This result comes with a caveat that the potential function needs to be strongly convex \emph{and} strongly smooth with respect to a general norm, thus limiting it to the Euclidean geometry.}~\cite{li2021implicit}.
For GD augmented with adversarial training, the fastest known rate is $\mathcal{O}\left(\frac{\text{poly}(n)}{\sqrt{T}}\right)$ for $\ell_2$-perturbations of the data, with all other perturbation norms (i.e.,~$s \neq 2$) yielding a much slower $\mathcal{O}\left(\frac{\log n}{\log T}\right)$ rate~\cite{li2020implicit}.
The required analyses for these methods are generally quite complex and idiosyncratic, and each tends to rely on specific details of the particular optimization procedure.

\subsection{Main results and techniques}
In this paper, we provide the fastest known rates for margin maximization and parameter convergence for generic optimization methods and adversarially robust optimization methods run with the exponential loss function, as summarized below.
\begin{itemize}
    \item First, we study a \emph{weighted-average} version of mirror descent with the squared $\ell_q$-norm $\frac{1}{2} \|\cdot\|_q^2$ as the potential for $q \in (1,2]$.
    We show that with an appropriately chosen step size, the algorithm achieves a faster $\|\cdot\|_q$-margin maximization rate on the order of  $\textstyle \mathcal{O}\left(\frac{\log n\log T}{(q-1)T}\right)$. We also further improve the rate to $\textstyle \mathcal{O}\left(\frac{1}{T(q-1)}+\frac{\log n \log T}{T^2}\right)$ with a more aggressive step size. When $q=2$, the algorithm reduces to average  GD, and our rate $\mathcal{O}\left(\frac{1}{T}+\frac{\log n \log T}{T^2}\right)$ is a $\log n$-factor tighter than the $\textstyle \mathcal{O}\left(\frac{\log n}{T}\right)$ rate of the last-iterate of GD \citep{ji2021characterizing}. 
   \item Next, for the steepest descent  with strongly convex norm, we show the margin maximization rate can  be improved from $\mathcal{O}\left(\frac{\log n+\log T}{\sqrt{T}}\right)$ to $\mathcal{O}\left(\frac{\log n}{T}\right)$.
   \item We then demonstrate that an even faster $\mathcal{O}\left(\frac{\log n}{T^2(q-1)}\right)$ $\|\cdot\|_q$-margin maximization rate can be achieved in two ways: (a) mirror descent with Nesterov acceleration, or (b) steepest descent with extra gradient and momentum.
   \item Moving to adversarial training, we show that for $s\in(1,2]$, $\ell_s$-AT utilizing Normalized Gradient Descent (NGD) converges at a rate of $\mathcal{O}\left(\frac{\log n}{{T}}\right)$ towards the $(2,s)$-mix-norm max-margin classifier. 
   \item When further equipped with Nesterov-style acceleration, $\ell_s$-AT achieves a faster $\mathcal{O}\left(\frac{\log n}{T^2}\right)$ rate 
   { for $s\in(1,2]$, and $\mathcal{O}\left(\frac{\log n}{T}\right)$ for $s>2$.} Somewhat surprisingly, the fastest rates of AT end up matching those of optimization on clean data, at least in the case of linear classification.
\end{itemize}

We summarized our main results in Table \ref{tab:my-table}.
The essential premise for our approach is that \emph{Empirical Risk Minimization (ERM) with generic optimization methods can be equivalently viewed as solving a {regularized} bilinear game with online learning dynamics}. Within this framework, we design new pairs of online learning methods whose outputs (and, by extension, the outputs of the corresponding generic optimization methods) automatically maximize the margin. The convergence rates are determined by the time-averaged regret bounds of these online learning algorithms \emph{when played against each other}, which turn out to be much faster than the worst-case $\mathcal{O}(1/\sqrt{T})$ rate.
In addition to yielding these faster rates, the convergence analysis is often very simple---indeed, the main nontriviality in our approach is the identification of the correct pair of online learning dynamics, and proving their equivalence.
A block diagram illustration of this game framework is provided in Fig.~\ref{fig:framework}.

\begin{figure}
\centering
\begin{minipage}{0.9\textwidth}
\begin{table}[H]
\centering
\resizebox{0.99\textwidth}{!}{
\begin{tabular}{@{}ccccc@{}}
\toprule
\textbf{Algo.} &
  \textbf{Ref.} &
  \textbf{Rate} &
  \textbf{Step size/notes} \\ \midrule
\multirow{4}{*}{Mirror Descent} &
  Theorem \ref{thm:mirror margin} &
 $\mathcal{O}\left(\frac{\log T}{T(q-1)}\right)$  &
  $\frac{1}{\text{function value}}$ \\  
  \cmidrule(l){2-5} 
 &
  Theorem \ref{cor:u2222} &
  $\mathcal{O}\left(\frac{1}{T} + \frac{\log T}{T^2(q-1)}\right)$ &
  $\frac{t}{\text{function value}}$ \\ \cmidrule(l){2-5} 
 &
  Theorem \ref{thm:mirror momentum} &
  $\mathcal{O}\left(\frac{\V_T}{T^2(q-1)}+\frac{\log n \log T}{T^2}\right)$ &
  Momentum \\  
  \midrule
\multirow{1}{*}{Steepest Descent} &
    Theorem \ref{thm:steep:acc::main} &
  $\mathcal{O}\left(\frac{\log n}{{T}}\right)$ &
   $\frac{1}{\text{function value}}$\\ 
   \midrule
\multirow{2}{*}{Accelerated Algorithms} &
  \multirow{2}{*}{Theorem \ref{thm:accel}} &
  \multirow{2}{*}{$\mathcal{O}\left(\frac{\log n}{T^2(q-1)}\right)$} &
  \begin{tabular}[c]{@{}c@{}}MD with \\ Nesterov acceleration\end{tabular} \\ \cmidrule(l){5-5} 
 &
   &
   &
  \begin{tabular}[c]{@{}c@{}}SD with extra \\ gradient and momentum\end{tabular} \\ \midrule
  \multirow{5}{*}{$\ell_s$-AT} 
 &
  Theorem \ref{thm:GDBAT} &
  $\mathcal{O}\left(\frac{\log n}{T}\right)$, for $s\in(1,2]$ &
  $\ell_s$-AT with GD \\ \cmidrule(l){2-5} 
 &
  \multirow{2}{*}{Theorem \ref{thm:acccccc}}&
  $\mathcal{O}\left(\frac{\log n}{T^2}\right)$, for $s\in(1,2]$ &
   \multirow{2}{*}{$\ell_s$-AT with Accelerated Methods }\\
   \cmidrule(l){5-5} 
 &
    &
$\mathcal{O}\left(\frac{\log n}{{T}}\right)$, for $s\in(2,\infty)$   &\\
  
  \bottomrule
\end{tabular}}
\caption{Fast margin maximization rates for generic optimization methods and adversarial training.}
\label{tab:my-table}
\end{table}
\end{minipage}
\end{figure}

\citet{wang2022accelerated} were the first to draw parallels between \emph{Nesterov-accelerated GD} for ERM and solving the bilinear game through an online dynamic. However, it was still open whether this kind of analysis suited other optimization geometries. 
We reveal, through a simpler, streamlined and unified analysis, that the game framework can in fact encompass implicit bias analyses for a range of generic optimization methods. 
We also derive auxiliary results beyond the main results mentioned above, summarized below.
\begin{itemize}   
    \item By selecting suitable online learning algorithms, we obtain a momentum-based data-dependent MD algorithm with an $\mathcal{O}\left(\frac{\V_T}{T^2(q-1)}+\frac{\log n \log T}{T^2}\right)$ $\|\cdot\|_q$-margin maximization rate, where $\V_T=\sum_{t=2}^{T}\|\p_t-\p_{t-1}\|^2_1$ is the path-length of a series of distributions on the training data $\p_t$. In the worst case, this reduces to the margin maximization rate of MD, but this could be much tighter if $\V_T$ is sublinear in $T$.
    \item Apart from margin maximization rates, we also bound the corresponding directional error, i.e., the $\ell_q$-distance between the maximal margin classifier and the normalized output of the generic methods, which are also controlled by the regret bounds of two-players playing against each other. This kind of convergence rates are new for most of the generic methods. In general, we show the directional errors are typically a square-root factor worse than the margin maximization rates. 
    \item For steepest descent, by setting the norm to the general norm $\|\cdot\|$ and the $\ell_2$-norm respectively, we can recover the algorithms and theoretical guarantees in \citet{nacson2019convergence,ji2021characterizing} under the game framework. This implies that these algorithms can also be viewed as solving a \emph{regularized} bilinear game using online learning algorithms, offering a deeper understanding of the role of implicit bias in optimization methods.
\end{itemize}
As we can see, this self-contained description of the two-player bilinear game framework effectively captures a gamut of generic optimization methods on clean data.
On the other hand, the more complex procedures of robust optimization and adversarial training, which involve an additional step of selecting perturbations on input data, does not fit a bilinear game framework.
To address this challenge, we extend our game framework to a novel general-sum, regularized multilinear game with \emph{multiple players} to accommodate the perturbation process.
In particular, we add $n$ new players to this game, each corresponding to individual perturbations on training examples.

Compared to the preceding two-player bilinear setting, the inclusion of the additional $n$ players
presents new challenges. For instance, when analyzing $\ell_p$-AT with NGD, the online algorithms that are needed for proving algorithm equivalence {suffer a divergent average regret if we naively apply standard bounds. We overcome this hurdle by providing a novel and much tighter regret bound for our specific problem (see the proof of Theorem \ref{thm:GDBAT} for details).} Identifying the correct low-regret online methods that offer algorithm equivalence for the other two methods is also non-trivial.

\subsection{Additional Related Work}
Our discussion on the implicit bias and its convergence rates 
has so far been restricted to classification-oriented losses $r(z)$, such as the logistic loss and exponential loss, that attain their minimum at infinity, and optimization geometries that are strongly convex.
The implicit bias of regression problems, where the square loss $\ell(z) = z^2$ is used, has also been studied.
As indicated in~\citet{gunasekar2018characterizing,sun2022mirror,vardi2022implicit}, the analysis for square loss is ``fundamentally different", since the loss is not minimized at infinity.
Within the context of classification tasks and classification loss functions, the rates of implicit bias convergence have also been studied for AdaBoost (to the maximum-$\ell_1$-margin classifier at a $\mathcal{O}(\frac{1}{\sqrt{T}})$ rate~\cite{telgarsky2013margins}) and adaptive optimization methods such as Adam~\cite{gunasekar2018characterizing,wang2022does}.

The strategy of solving a zero-sum game using online learning algorithms playing against each other has been extensively studied, primarily through the lens of \emph{independent learning agents} (e.g., \cite{Predictable:NIPS:2013,daskalakis2018training,oftl_md,daskalakis2019last,ICML'22:TVgame}). In contrast, our central motivation and challenge lies in identifying the exact equivalent forms of generic optimization algorithms under the regularized bilinear (or multilinear) game dynamic. Our framework is also motivated by the line of research that employs the \emph{Fenchel-game} to elucidate commonly used convex optimization methods \citep{abernethy2018faster,oftl_md,wang2021no}. However, our framework diverges significantly from these approaches. These works focus on the convergence of the optimization problem itself, while our framework emphasizes that the choice of optimization algorithm, which solely targets the minimization of empirical risk, has a significant impact on maximizing the margin, which we might view as an ``algorithmic externality.'' It is important to emphasize that margin guarantees can not arise from convergence of the ERM objective alone, as there are typically multiple global minima in ERM minimization. Our analysis also considers an entirely different min-max problem than that of the Fenchel game \citep{wang2021no}; thus, the correspondences we establish between optimization algorithms and online dynamics also differ. Finally, we note that previous work has also analyzed the implicit bias through direct primal optimization analyses (e.g., \cite{nacson2019convergence,sun2022mirror}) or using a dual perspective (e.g., \cite{ji2021characterizing,ji2021fast}). For the former analyses, it is unclear whether and how faster rates can be obtained. For the latter, it remains an open question how to extend the framework beyond the $\ell_2$-geometry, which in some sense was the motivation for the present work.

Finally, the effectiveness of adversarial training in enhancing model robustness against adversarial attacks has been widely studied in practice~\cite{zhang2019theoretically,carmon2019unlabeled,raghunathan2020understanding,rice2020overfitting,sanyal2020benign}. However, it often comes with increased computational costs~\cite{madry2018towards}, prompting researchers to explore the convergence rates even in simpler linear settings.
The previously obtained slow rates~\cite{charles2019convergence,li2020implicit} left open the possibility that AT was indeed slower than optimization on clean data; our results show that this is in fact not the case.

\section{Preliminaries}
We first describe our basic setting, along with standard assumptions and definitions.\\[-3mm]

\noindent \textbf{Notation}\  \ We use lower case bold face letters $\x,\y$ to denote vectors, lower case letters $a,b$ to denote scalars, and upper case bold face letters $\A,\mathbf{B}$ to denote matrices. For a vector $\x\in\R^d$, we use $x_i$ to denote the $i$-th component of $\x$. For a matrix $\A\in\R^{n\times d}$, let $\A_{(i,:)}$ be its $i$-th row, $\A_{(:,j)}$ the $j$-th column, and $\A_{(i,j)}$ the $i$-th element of the $j$-th column. $\forall \x\in\R^d$, we use $\|\cdot\|$ to denote a general norm in $\R^d$, $\|\cdot\|_*$ its dual norm, $\|\x\|_p$ the $p$-norm of $\x$, defined as $\|\x\|_p=(\sum_{i=1}^d|x_i|^p)^{1/p}$. We use $\|\cdot\|_q$ to denote the dual norm of $p$-norm, where $\frac{1}{p}+\frac{1}{q}=1$. We denote $\B_{\|\cdot\|}$ the $\|\cdot\|$-ball, defined as $\B_{\|\cdot\|}=\{\x\in\R^d|\|\x\|\leq 1\}$. $\forall \x,\x'\in\R^d$, we define the Bregman divergence between $\x$ and $\x'$ with respect to a strictly convex potential function $\Phi(\x)$ as  
$D_{\Phi}(\x,\x')=\Phi(\x)-\Phi(\x')-\nabla \Phi(\x')^{\top}(\x-\x').$ For a positive integer $n$, we denote  $\{1,\dots,n\}$ as $[n]$, and the $(n-1)$-dimensional probability simplex as $\Delta^n$. Let $E:\Delta^n\mapsto\R$ be the negative entropy function, defined as $E(\p)=\sum_{i=1}^np_i\log p_i, \forall \p\in\Delta^n$.\\[-3mm]

\noindent \textbf{Basic setting}\ \ Consider a  set of $n$ data points $\mathcal{S}=\{(\mathbf{x}^{(i)},y^{(i)})\}_{i=1}^n$, where $\mathbf{x}^{(i)}\in\mathbb{R}^d$ is the feature vector for the $i$-th example, and $y^{(i)}\in\{-1,+1\}$ the  corresponding binary label. We are interested the optimization trajectory of first-order methods for minimizing the following unbounded and unregularized empirical risk:
\begin{equation}
\label{eqn:the ERM function}
\textstyle \min\limits_{\w\in\R^d}L(\w)=\frac{1}{n}\sum_{i=1}^nr(\w^{\top}\x^{(i)};y^{(i)}),  
\end{equation}
where $\w\in\R^d$ is a linear classifier, $r:\R\times\{\pm 1\}\mapsto \R$ is the loss function. In this work we focus on the  exponential loss, given by $r(\w^{\top}\x;y)=\exp(-y\x^{\top}\w)$. We introduce the following standard  assumption and definitions.
\begin{definition}[$\|\cdot\|$-margin]
For a linear classifier $\w\in\R^d$ and a norm $\|\cdot\|$, we define its normalized $\|\cdot\|$-margin as
$$\widetilde{\gamma}(\w)=\frac{\min\limits_{i\in[n]}y^{(i)}\w^{\top}\x^{(i)}}{\|\w\|}=\frac{\min\limits_{\p\in\Delta^n}\p^{\top}\A\w}{\|\w\|},$$
where $\A=[\dots;y^{(i)}\x^{(i)\top};\dots]\in\R^{n\times d}$ is the matrix that contains all data.
\end{definition}
\begin{ass}
\label{ass:only}
Assume $\S$ is linearly separable and bounded with respect to some norm $\|\cdot\|$. More specifically, we assume  $\exists \w^*_{\|\cdot\|}\in\B_{\|\cdot\|}$, s.t., 
$\wn=\argmax_{\|\w\|\leq 1}\min_{i\in[n]}y^{(i)}\x^{(i)\top}\w,$ whose margin $\widetilde{\gamma}(\w^*_{\|\cdot\|})=\gamma>0.$ We refer to $\wn$ as the $\|\cdot\|$-maximal margin classifier. Note that, for any $\w\in\R^d$, if $\widetilde{\gamma}(\w)=\gamma$, $\w$ and $\w^*_{\|\cdot\|}$ are at the same direction.  
\end{ass}
\begin{ass}
\label{ass:main:ass}
$\forall i\in [n]$, $\|\x^{(i)}\|_*\leq 1$. That is, the feature vectors are bounded in a unit-ball with respect to the $\|\cdot\|_*$-norm.  
\end{ass}
\begin{definition}[$\|\cdot\|$-Margin maximization rate and $\|\cdot\|$-directional error] Suppose Assumption \ref{ass:only} is satisfied. We consider a sequence of solutions $\w_1,\dots,\w_t,\dots$, and state that $\w_t$ converges to $\w^*_{\|\cdot\|}$ if either $\lim_{t\rightarrow\infty}\widetilde{\gamma}(\w_t)\rightarrow \gamma$, or $\lim_{t\rightarrow \infty}\|\frac{\w_t}{\|\w_t\|}-\w^*_{\|\cdot\|}\|\rightarrow 0$. We define the upper bound on $|\gamma-\widetilde{\gamma}(\w_t)|$ the $\|\cdot\|$-margin maximization rate, and $\|\frac{\w_t}{\|\w_t\|}-\w^*_{\|\cdot\|}\|$ the $\|\cdot\|$-directional error. 

\end{definition}

\section{A Game Framework for Maximizing the Margin}
\label{section:gameframework}

\begin{figure}[t]
\centering
\resizebox{0.9\textwidth}{!}{\begin{minipage}{\textwidth}
\centering
\begin{tikzpicture}[node distance=4.6cm,scale=0.9, every node/.style={scale=0.85}]
  \node (box1) [rectangle, draw]{ \begin{tabular}{cc} 
        \multirow{1}{*} Solving the regularized bilinear game with online learning:  \\      $\max\limits_{\w\in\R^d}\min\limits_{\p\in\Delta^n}\p^{\top}A\w-\Phi(\w) $  \\ 
    \end{tabular} }; %
  \node (box2) [rectangle, draw, below right of=box1] {\begin{tabular}{cc} 
        \multirow{1}{*} Margin maximization rate: $\gamma-\mathcal{O}\left(C_T\right)$ \\ directional error: $\mathcal{O}\left(\sqrt{C_T}\right)$ 
    \end{tabular}};
  \node (box3) [rectangle, draw, below left of=box1] {\begin{tabular}{cc} 
        \multirow{1}{*} Empirical risk minimization \\ with generic optimization methods 
    \end{tabular}};
  
  \draw [->] (box1) -- node[midway,above right] {Plug in regret bounds} (box2);
  \draw [->] (box3) -- node[midway,above left] {Identify equivalent forms} (box1);
\end{tikzpicture}
\caption{Illustration of the game framework for implicit bias analysis. In Section \ref{section:gameframework}, we show that solving a regularized bilinear game with online learning algorithms (top box) can directly maximize the margin, and the convergence rate is on the same order of the averaged regret $C_T$ (right box); In Sections \ref{sec:main result}, we prove that minimizing the empirical risk with a series of generic optimization methods (left box) is equivalent to using online learning algorithms to solve the regularized bilinear game. Thus, the implicit bias rates can be directly obtained by plugging in the regret bounds.}

\label{fig:framework}
\end{minipage}}
\end{figure}

In this section, we present a general game framework and demonstrate that solving this game with online learning algorithms can directly maximize the margin and minimize the directional error. Then, in Section \ref{sec:main result}, we show that many generic optimization methods can be considered to be solving this game with different online dynamics. As a result, the margin maximization rate (and also the directional error) of these optimization methods are exactly characterized by the regret bounds of the corresponding online learning algorithms.
We illustrate this procedure in Figure \ref{fig:framework}. The game objective is defined as follows:
\begin{equation}
\label{eqn:defn:game}    \max_{\w\in\R^d}\min\limits_{\p\in\Delta^n} g(\p,\w)=\p^{\top}\A\w - \Phi(\w),
\end{equation}
where $\Phi(\w)=\frac{1}{2}\|\w\|^2$ is a regularizer and $\|\cdot\|$ denotes some \emph{general norm} in $\R^d$. Inspired by previous work in this vein~\citep{wang2021no,wang2022accelerated}, we apply a weighted no-regret dynamic protocol (summarized in Protocol \ref{pro:no-regret for game}) to solve the game. We first give a brief introduction of Protocol \ref{pro:no-regret for game}, and then present the theorem about the margin of its output. In Protocol \ref{pro:no-regret for game}, the players of the zero-sum game attempt to find the equilibrium by applying online learning algorithms. In each round $t$, the $\p$-player first picks a decision $\p_t$, and passes a weighted loss function to the $\w$-player, defined as $$\alpha_th_t(\w)=-\alpha_t(\p_t^{\top}\A\w-\Phi(\w))=-\alpha_t g(\p_t,\w).$$ Then, the $\w$-player observes the loss, picks a decision $\w_t$, and passes a weighted loss function 
$$\alpha_t\ell_t(\p)=\alpha_t(\p^{\top}\A\w_t-\Phi(\w_t))=\alpha_tg(\p,\w_t),$$ 
to the $\p$-player. (Note that the order of the two players can also be reversed.) After $T$ iterations, the algorithm outputs the weighted sum of the $\w$-player's decisions: $\widetilde{\w}_T=\sum_{t=1}^T\alpha_t\w_t$. Under this framework, we define the weighted regret upper bound of both players respectively as 
\begin{align*}
   \sum_{t=1}^T \alpha_t\ell_t(\p_t)-\min\limits_{\p\in\Delta^n}  \sum_{t=1}^T \alpha_t\ell_t(\p)\leq   \text{Reg}^{\p}_T,\  
   \sum_{t=1}^T \alpha_th_t(\w_t)-\min\limits_{\w\in\R^d}  \sum_{t=1}^T \alpha_th_t(\w) \leq  \text{Reg}^{\w}_T.
\end{align*}
Further, we denote the upper bound on the \emph{average} weighted regret by $C_T=(\text{Reg}_T^{\p}+\text{Reg}_T^{\w})/\sum_{t=1}^T\alpha_t$. We have the following conclusion on the margin and directional error of $\widetilde{\w}_T$, which is proved in Section~\ref{appendix:Theorem 1}.

\begin{figure}[t]
\centering
\begin{minipage}{0.9\textwidth}
\begin{protocoll}[H]
\resizebox{0.99\linewidth}{!}{
\caption{No-regret dynamics with weighted OCO for solving $g(\p,\w)$}
\begin{minipage}{0.99\textwidth}
\begin{algorithmic}[1]
\label{pro:no-regret for game}
\STATE \textbf{Initialization}: $\textsf{OL}^{\w}$, $\textsf{OL}^{\p}$. //  The online algorithms for choosing $\w$ and $\p$.
\FOR{$t=1,\dots,T$}
\STATE $\w_t\leftarrow \textsf{OL}^{\w}$;
\STATE $\textsf{OL}^{\p}\leftarrow\alpha_t,\ell_t(\cdot)$; // Define $\ell_t(\cdot)=g(\w_t,\cdot)$
\STATE $\p_t\leftarrow\textsf{OL}^{\p}$;
\STATE $\textsf{OL}^{\w}\leftarrow\alpha_t,h_t(\cdot)$; // Define $h_t(\cdot)=-g(\cdot,\p_t)$
\ENDFOR
\STATE \textbf{Output}: $\widetilde{\w}_T=\sum_{t=1}^{T}\alpha_t\w_t.$
\end{algorithmic}
\end{minipage}
}
\end{protocoll}
\end{minipage}
\end{figure}

\begin{theorem}
\label{thm:margin}
Suppose Assumption \ref{ass:only} holds with respect to some general norm $\|\cdot\|$. Consider solving the two-player zero-sum game defined in  \eqref{eqn:defn:game} by applying Protocol \ref{pro:no-regret for game}.
Then $\widetilde{\w}_T$ will have a positive margin on round $T$ if $C_T\leq \frac{\gamma^2}{4}$. Moreover, as long as $C_T\leq \frac{\gamma^2}{4}$, we have   
\begin{equation}
\label{eqn:normalized margin}
\frac{\min\limits_{\p\in\Delta^n}\p^{\top}\A\widetilde{\w}_T}{\left\|\widetilde{\w}_T\right\|} \geq \gamma - \frac{4C_T}{\gamma^2}.
\end{equation}
If $\Phi(\w)$ is $\lambda$-strongly convex with respect to the norm $\|\cdot\|$, we have 
$$\left\| \frac{\widetilde{\w}_T}{\|\widetilde{\w}_T\|}-\w_{\|\cdot\|}^*\right\|\leq \frac{8\sqrt{2}}{\gamma^2\sqrt{\lambda}}\sqrt{C_T}.$$
\end{theorem}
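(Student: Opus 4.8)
The plan is to analyze the weighted-average iterate $\widetilde{\w}_T = \sum_{t=1}^T \alpha_t \w_t$ via the standard ``no-regret implies near-equilibrium'' argument, and then translate the resulting guarantees on the value of the game $g$ into a statement about the margin $\widetilde\gamma(\widetilde\w_T)$, and finally (for the last claim) into a statement about the direction of $\widetilde\w_T$.

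First I would establish the near-saddle-point property. Summing the two weighted-regret inequalities and using that $\ell_t(\cdot) = g(\w_t,\cdot)$ and $h_t(\cdot) = -g(\cdot,\p_t)$, the cross terms $\sum_t \alpha_t g(\p_t,\w_t)$ cancel, giving
\[
\min_{\p\in\Delta^n}\sum_{t=1}^T \alpha_t g(\p,\w_t)\ \ge\ -\,\text{Reg}^{\p}_T\ \ \text{and}\ \ \max_{\w\in\R^d}\sum_{t=1}^T \alpha_t g(\p_t,\w)\ \le\ \text{Reg}^{\w}_T + \Big(\text{term from }\p\text{ side}\Big),
\]
so after dividing by $\sum_t\alpha_t$ and using convexity of $g(\p,\cdot)$ in its first argument and linearity in the second (Jensen on the $\p$-side, concavity of $-\Phi$ on the $\w$-side) one gets, for the normalized weights $\bar\alpha_t = \alpha_t/\sum_s\alpha_s$ and $\widetilde\w_T$ as the output,
\[
\min_{\p\in\Delta^n} g(\p,\widetilde\w_T)\ \ge\ \max_{\w}\min_{\p} g(\p,\w)\ -\ C_T\ \ge\ \frac{\gamma^2}{2} - C_T,
\]
where the last step uses Assumption~\ref{ass:only}: plugging $\w = \gamma\,\w^*_{\|\cdot\|}$ into $\min_\p \p^\top\A\w - \frac12\|\w\|^2$ gives value at least $\gamma\cdot\gamma - \frac12\gamma^2 = \gamma^2/2$ (since $\|\w^*_{\|\cdot\|}\|\le 1$ and its margin is $\gamma$). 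Writing $M := \min_\p \p^\top\A\widetilde\w_T$, this reads $M - \frac12\|\widetilde\w_T\|^2 \ge \gamma^2/2 - C_T$, and simultaneously from the max-side inequality one extracts an upper bound of the shape $M - \frac12\|\widetilde\w_T\|^2 \le$ something, plus the companion bound $M \le \gamma\|\widetilde\w_T\|$ which holds for \emph{any} $\w$ by definition of the margin. Combining $M \le \gamma\|\widetilde\w_T\|$ with $M - \frac12\|\widetilde\w_T\|^2 \ge \gamma^2/2 - C_T$ pins $\|\widetilde\w_T\|$ into an interval around $\gamma$; in particular $\|\widetilde\w_T\| > 0$ once $C_T \le \gamma^2/4$, which gives positivity of the margin, and a short computation turns these two inequalities into $M/\|\widetilde\w_T\| \ge \gamma - 4C_T/\gamma^2$, proving \eqref{eqn:normalized margin}.

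For the directional error, the idea is that $\widetilde\w_T$ is an approximate maximizer of the strongly concave function $\w \mapsto \min_\p \p^\top\A\w - \Phi(\w)$ (approximate in the sense of the $C_T$ gap just derived, after noting its true maximizer is proportional to $\w^*_{\|\cdot\|}$ with value $\gamma^2/2$), so $\lambda$-strong convexity of $\Phi$ upgrades the $O(C_T)$ suboptimality in function value to an $O(\sqrt{C_T/\lambda})$ bound on $\|\widetilde\w_T - \gamma\w^*_{\|\cdot\|}\|$. Then I would pass from this bound on the unnormalized distance to a bound on $\big\|\widetilde\w_T/\|\widetilde\w_T\| - \w^*_{\|\cdot\|}\big\|$ using the triangle inequality together with the two-sided control on $\|\widetilde\w_T\|$ from the previous paragraph (which keeps it bounded away from $0$ and near $\gamma$); tracking constants should yield the claimed $\frac{8\sqrt2}{\gamma^2\sqrt\lambda}\sqrt{C_T}$.

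The main obstacle I anticipate is the strong-convexity step: the objective $\min_\p \p^\top \A\w - \Phi(\w)$ is a pointwise minimum of affine functions minus $\Phi$, so it is $\lambda$-strongly concave, but one must be careful that the relevant ``first-order optimality'' inequality still holds despite nonsmoothness of the $\min_\p$ term — the clean way is to note $\widetilde\w_T$ need not be optimal, only $C_T$-suboptimal, and to invoke the standard fact that for a $\lambda$-strongly concave $F$ with maximizer $\w^\star$, $F(\w^\star) - F(\w) \ge \frac{\lambda}{2}\|\w - \w^\star\|^2$, applied with $F(\w) = \min_\p \p^\top\A\w - \Phi(\w)$, $\w^\star = \gamma\w^*_{\|\cdot\|}$. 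The only remaining bookkeeping is then converting the unnormalized error to the normalized one while chasing the constant, which is routine but is where factors like $8\sqrt2$ enter and must be handled with care.
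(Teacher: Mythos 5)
Your proposal is correct and arrives at the same first-order consequences (the gap lemma giving $m(\overline{\w}_T)\ge \gamma^2/2 - C_T$, the triangle-inequality$+$strong-concavity argument for the directional error), but it takes a genuinely different and cleaner route to the step that lower-bounds $\|\widetilde{\w}_T\|$, which is the crux of the whole argument. The paper's proof derives the lower bound on $\|\widetilde{\w}_T\|$ by invoking the feature boundedness of Assumption~\ref{ass:main:ass} (i.e., $\|\x^{(i)}\|_*\le 1$) together with H\"older's inequality to get
\[
\|\widetilde{\w}_T\|\ \ge\ \|y\x\|_*\|\widetilde{\w}_T\|\ \ge\ y\x^\top\widetilde{\w}_T\ \ge\ \min_{\p\in\Delta^n}\p^\top\A\widetilde{\w}_T,
\]
and then feeds the separate lower bound on the unnormalized margin into this inequality (note that Assumption~\ref{ass:main:ass} is actually needed here, though the theorem statement only invokes Assumption~\ref{ass:only}). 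Your route instead uses only the \emph{definitional} inequality $M := \min_\p\p^\top\A\overline{\w}_T \le \gamma\|\overline{\w}_T\|$ (which holds because $\gamma$ is the maximal normalized margin, with no appeal to bounded features), and then combines it with $M-\tfrac12\|\overline{\w}_T\|^2\ge\gamma^2/2-C_T$ to get $\bigl(\|\overline{\w}_T\|-\gamma\bigr)^2\le 2C_T$; the two-sided control on the norm follows. Carrying this through the division yields $M/\|\overline{\w}_T\|\ge\sqrt{\gamma^2-2C_T}\ge\gamma-2C_T/\gamma$, which is a \emph{stronger} bound than the paper's $\gamma-4C_T/\gamma^2$ (since $\gamma\le 1$), and likewise $\|\overline{\w}_T\|\ge\gamma(1-1/\sqrt2)$ is a stronger lower bound than the paper's $\gamma^2/4$, improving the constant in the directional-error bound by a factor of $\gamma$. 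The tradeoff is that your route implicitly relies on $\overline{\w}_T\ne 0$ to divide, which your interval bound does establish under $C_T\le\gamma^2/4$. Two small notational cautions if you write this up: you apply the gap lemma and the quadratic argument to the normalized average $\overline{\w}_T=\widetilde{\w}_T/\sum_t\alpha_t$, not to $\widetilde{\w}_T$ directly (margin and direction are scale-invariant so the final claims transfer, but the unnormalized $\widetilde{\w}_T$ does not satisfy $m(\widetilde{\w}_T)\ge\gamma^2/2-C_T$); and the ``max-side upper bound on $M-\tfrac12\|\overline{\w}_T\|^2$'' you mention extracting is not actually needed for either part of the theorem.
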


Theorem \ref{thm:margin} shows that the output of Protocol \ref{pro:no-regret for game}, denoted as $\widetilde{\w}_T$, achieves a positive margin when the average regret $C_T\leq \frac{\gamma^2}{4}$. In the following sections, we demonstrate that with appropriately chosen online learning algorithms $C_T$ always decreases with respect to $T$; in fact $C_T \to 0$ as $T \to \infty$. Therefore, once the condition $C_T\leq \frac{\gamma^2}{4}$ is met for a particular value $T_0$, it will also be met for all $T \geq T_0$. Thereafter, $\widetilde{\w}_T$ continues to increase the $\|\cdot\|$-margin and converges to the maximum $\|\cdot\|$-margin classifier, and the rate is directly characterized by $C_T$.  Since $C_T$ is the average regret of the online learning algorithms, better bounds on $C_T$ lead to a less stringent condition on large enough $T.$ Finally, we note that the condition on sufficiently large $T$ is also (explicitly or implicitly) required in all previous work on the non-asymptotic margin maximization rates of generic methods \citep{nacson2019convergence,li2021implicit,sun2022mirror}.  
\subsection{Proof of Theorem \ref{thm:margin}}
\label{appendix:Theorem 1}

Define $m(\w)=\min_{\p\in\Delta^n} g(\p,\w)$, $\overline{\w}_T=\frac{1}{\sum_{t=1}^T\alpha_t}\sum_{t=1}^T\alpha_t\w_t=\frac{1}{\sum_{t=1}^T\alpha_t}{\widetilde{\w}_T}$. We introduce the following lemma, which shows that using online learning for solving the game defined in \eqref{eqn:defn:game} maximizes $m(\w)$.  

\begin{lemma}[\citet{abernethy2018faster}]
\label{lem:margin:gap}
Consider solving the game defined in \eqref{eqn:defn:game} with the online learning dynamic defined in Protocol \ref{pro:no-regret for game}. 
We have, for all $\w\in\R^d$,
$m(\w)-m(\overline{\w}_T)\leq \frac{\emph{Reg}_T^{\p}+\emph{Reg}_T^{\w}}{\sum_{t=1}^T\alpha_t}.$
\end{lemma}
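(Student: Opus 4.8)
The statement to prove is Lemma~\ref{lem:margin:gap}, attributed to \citet{abernethy2018faster}, which bounds the suboptimality of the weighted-average iterate $\overline{\w}_T$ in maximizing $m(\w) = \min_{\p \in \Delta^n} g(\p,\w)$ by the sum of the two players' average weighted regrets.

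\textbf{Approach.} The plan is to exploit the standard folklore connection between no-regret dynamics and minimax equilibria, adapted to the weighted setting and to the particular asymmetry of the game (the $\p$-player minimizes, the $\w$-player maximizes, and only the $\w$-player's iterates are averaged). First I would normalize by writing $A_T = \sum_{t=1}^T \alpha_t$ and recalling that $\overline{\w}_T = \frac{1}{A_T}\sum_t \alpha_t \w_t$. Since $g(\p,\w)$ is linear in $\p$ for each fixed $\w$, Jensen's inequality (in fact equality on the convex-combination side, and the $\min$ being concave in the relevant sense) gives
\[
m(\overline{\w}_T) = \min_{\p \in \Delta^n} g(\p, \overline{\w}_T) \geq \frac{1}{A_T}\sum_{t=1}^T \alpha_t \min_{\p \in \Delta^n} g(\p, \w_t),
\]
because $g(\p,\overline{\w}_T) = \frac{1}{A_T}\sum_t \alpha_t g(\p,\w_t) \geq \frac{1}{A_T}\sum_t \alpha_t m(\w_t)$ for every fixed $\p$, and then we take the min over $\p$ on the left. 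Actually we want a lower bound on $m(\overline{\w}_T)$ in terms of the dynamics, so I will be careful about the direction: the displayed inequality is exactly what we need.

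\textbf{Key steps in order.} (1) Lower-bound $m(\overline{\w}_T)$ by $\frac{1}{A_T}\sum_t \alpha_t g(\p_t, \w_t)$ minus a regret term: specifically, for any fixed $\p$, $g(\p_t,\w_t) \leq g(\p,\w_t) + (\text{per-round contribution to } \text{Reg}^{\p}_T)$ is \emph{not} quite it; instead use that $m(\overline{\w}_T) \geq \frac{1}{A_T}\sum_t \alpha_t g(\hat\p, \w_t)$ evaluated at the minimizing $\hat\p$ for $\overline{\w}_T$, and then the $\p$-player's regret guarantee says $\sum_t \alpha_t \ell_t(\p_t) - \sum_t \alpha_t \ell_t(\hat\p) \leq \text{Reg}^{\p}_T$, i.e. $\sum_t \alpha_t g(\p_t,\w_t) - \sum_t \alpha_t g(\hat\p,\w_t) \leq \text{Reg}^{\p}_T$, hence $\frac{1}{A_T}\sum_t \alpha_t g(\hat\p,\w_t) \geq \frac{1}{A_T}\sum_t \alpha_t g(\p_t,\w_t) - \frac{\text{Reg}^{\p}_T}{A_T}$. (2) Now for any target $\w \in \R^d$: the $\w$-player's regret guarantee with loss $h_t(\cdot) = -g(\cdot,\p_t)$ gives $\sum_t \alpha_t h_t(\w_t) - \sum_t \alpha_t h_t(\w) \leq \text{Reg}^{\w}_T$, i.e. $-\sum_t \alpha_t g(\w_t,\p_t) + \sum_t \alpha_t g(\w,\p_t) \leq \text{Reg}^{\w}_T$, so $\frac{1}{A_T}\sum_t \alpha_t g(\p_t,\w_t) \geq \frac{1}{A_T}\sum_t \alpha_t g(\p_t, \w) - \frac{\text{Reg}^{\w}_T}{A_T}$. (3) Chain (1) and (2): $m(\overline{\w}_T) \geq \frac{1}{A_T}\sum_t \alpha_t g(\p_t,\w) - \frac{\text{Reg}^{\p}_T + \text{Reg}^{\w}_T}{A_T} \geq \min_{\p}\frac{1}{A_T}\sum_t \alpha_t g(\p,\w) $\dots — here I need $\frac{1}{A_T}\sum_t \alpha_t g(\p_t,\w) \geq \min_{\p} g(\p,\w) = m(\w)$, which holds since each $g(\p_t,\w) \geq m(\w)$. (4) Rearranging gives $m(\w) - m(\overline{\w}_T) \leq \frac{\text{Reg}^{\p}_T + \text{Reg}^{\w}_T}{A_T}$, which is the claim.

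\textbf{Main obstacle.} The calculation itself is routine once the bookkeeping is set up; the only real subtlety — and the step I would double-check most carefully — is getting the \emph{directions} of all inequalities consistent given the minus signs in the loss definitions ($\ell_t = g(\w_t,\cdot)$ for the minimizing $\p$-player, $h_t = -g(\cdot,\p_t)$ for the maximizing $\w$-player) and the fact that linearity/convexity is used on the $\p$-side (where $g$ is affine, so Jensen is tight) but only the trivial pointwise bound $g(\p_t,\w)\geq m(\w)$ is used on the $\w$-side. I would also note that no strong convexity of $\Phi$ is needed for this lemma — it is a pure no-regret-to-equilibrium argument — and that the result is symmetric in spirit to the classical Freund–Schapire analysis, just with weights $\alpha_t$ and a one-sided averaging. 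Since the lemma is cited verbatim from \citet{abernethy2018faster}, I would keep the proof short, essentially presenting steps (1)–(4) as a three-line display chain.
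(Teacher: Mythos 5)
Your proof is correct and is exactly the standard ``no-regret dynamics yield an approximate equilibrium'' argument from \citet{abernethy2018faster}, which the paper cites rather than re-derives; steps (1)--(4) chain together with the inequality directions all facing the right way. One small slip worth flagging: in your opening display you write
\[
g(\p,\overline{\w}_T) = \frac{1}{A_T}\sum_{t=1}^T \alpha_t\, g(\p,\w_t),
\]
as if $g$ were linear in $\w$. It is not --- $g(\p,\w) = \p^{\top}\A\w - \Phi(\w)$ has the concave $-\Phi(\w)$ term --- so this should be $\geq$, by Jensen applied to the convex $\Phi$. The direction happens to be the one you need, so nothing downstream breaks, and your step (1) (comparing $m(\overline{\w}_T) = g(\hat{\p},\overline{\w}_T)$ to $\frac{1}{A_T}\sum_t \alpha_t g(\hat{\p},\w_t)$) is valid; but the justification should say ``concavity of $g$ in $\w$'' rather than linearity, because if $\Phi$ were non-convex the inequality would flip and the argument would fail. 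Relatedly, your remark that ``no strong convexity of $\Phi$ is needed'' is right, but plain convexity of $\Phi$ \emph{is} needed for exactly this step --- it is not a cost-free observation.
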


\noindent Based on Lemma \ref{lem:margin:gap} and the definition of $m(\cdot)$, we can write
\begin{align*}
  m\left(\overline{\w}_T\right) =  m\left(\frac{\widetilde{\w}_T}{\sum_{t=1}^T\alpha_t}\right) = {} & \min\limits_{\p\in\Delta^n}\p^{\top}\A\frac{\widetilde{\w}_T}{\sum_{t=1}^T\alpha_t} -\frac{1}{2}\left\|\frac{\widetilde{\w}_T}{\sum_{t=1}^T\alpha_t}\right\|^2\notag\\
 \geq {} & m\left(\left\|\frac{\widetilde{\w}_T}{\sum_{t=1}^T\alpha_t}\right\|\w^*_{\|\cdot\|}\right) -\frac{\text{Reg}_T^{\p}+\text{Reg}_T^{\w}}{\sum_{t=1}^T\alpha_t}\notag \\
 = {} & \gamma\left\|\frac{\widetilde{\w}_T}{\sum_{t=1}^T\alpha_t}\right\| - \frac{1}{2}\left\|\frac{\widetilde{\w}_T}{\sum_{t=1}^T\alpha_t}\right\|^2 -\frac{\text{Reg}_T^{\p}+\text{Reg}_T^{\w}}{\sum_{t=1}^T\alpha_t},
\end{align*}
which implies that 
\begin{equation}
\label{eqn:proof:main:1}
\frac{\min\limits_{\p\in\Delta^n}\p^{\top}\A\widetilde{\w}_T}{\left\|\widetilde{\w}_T\right\|} \geq \gamma - \frac{\text{Reg}_T^{\p}+\text{Reg}_T^{\w}}{\|\widetilde{\w}_T\|}~.   
\end{equation}
The above  proof follows the main idea in \citet{wang2022accelerated}. Next, we turn to lower bound $\|\widetilde{\w}_T\|$. Note that since ${\w}_T$ (and therefore $\widetilde{\w}_T$) does not have a simple explicit form, the the technique for lower bounding the norm in \citet{wang2022accelerated} fails and 
we need to find a new approach. Let $(\x,y)\in\{(\x^{(i)},y^{(i)})\}_{i=1}^n$ be a data point. We have
\begin{equation}
\label{eqn:proof:main:2}
 \|\widetilde{\w}_T\|\geq  \|y\x\|_*\|\widetilde{\w}_T\| \geq y\x^{\top}\widetilde{\w}_T\geq \min\limits_{\p\in\Delta^n}\p^{\top}\A\widetilde{\w}_T,
\end{equation}
where the first inequality is due to assumption that $\|\x\|_*\leq 1$, and the second inequality is derived from the Cauchy-Schwarz inequality. To proceed, we need a lower bound on the unnormalized margin of $\widetilde{\w}_T$. We have 
\begin{align}
\label{eqn:lower bound of bar wT}
  m\left(\overline{\w}_T\right) =  m\left(\frac{\widetilde{\w}_T}{\sum_{t=1}^T\alpha_t}\right) = {} & \min\limits_{\p\in\Delta^n}\p^{\top}\A\frac{\widetilde{\w}_T}{\sum_{t=1}^T\alpha_t} -\frac{1}{2}\left\|\frac{\widetilde{\w}_T}{\sum_{t=1}^T\alpha_t}\right\|^2\notag\\
 \geq {} & m\left(\gamma\w_{\|\cdot\|}^*\right) -\frac{\text{Reg}_T^{\p}+\text{Reg}_T^{\w}}{\sum_{t=1}^T\alpha_t}\notag \\
 = {} & \min\limits_{\p\in\Delta^n}\p^{\top}\A\w_{\|\cdot\|}^* -\frac{1}{2}\|\gamma\w_{\|\cdot\|}^*\|^2 - \frac{\text{Reg}_T^{\p}+\text{Reg}_T^{\w}}{\sum_{t=1}^T\alpha_t} \notag\\
 = {} & \frac{\gamma^2}{2}-\frac{\text{Reg}_T^{\p}+\text{Reg}_T^{\w}}{\sum_{t=1}^T\alpha_t},
\end{align}
where for the first inequality we apply Lemma \ref{lem:margin:gap} and compare $m(\widetilde{\w}_T)$ with that of $\gamma \w^*_{\|\cdot\|}$, and the last equality is derived based on Assumption \ref{ass:only} (the margin of $\w^*_{\|\cdot\|}$ is $\gamma$, and $\|\w^*_{\|\cdot\|}\|=1$). \eqref{eqn:lower bound of bar wT} suggests that
\begin{equation}
\label{eqn:proof:main:3}
\min\limits_{\p\in\Delta^n}\p^{\top}\A\widetilde{\w}_T\geq \underbrace{\frac{1}{2}\frac{\left\|\widetilde{\w}_T\right\|^2}{\sum_{t=1}^T\alpha_t}}_{\geq 0}+
\frac{\gamma^2}{2}\sum_{t=1}^T\alpha_t - \left(\text{Reg}_T^{\p}+\text{Reg}_T^{\w}\right) \geq \frac{\gamma^2}{2}\sum_{t=1}^T\alpha_t - \left(\text{Reg}_T^{\p}+\text{Reg}_T^{\w}\right).  
\end{equation}
Note that, to plug in the lower bound of $\widetilde{\w}_T$, we need to ensure the RHS of \eqref{eqn:proof:main:3} is positive.  When $\frac{\gamma^2}{2}\sum_{t=1}^T\alpha_t\geq 2\left(\text{Reg}_T^{\p}+\text{Reg}_T^{\w}\right)$, we have
$$\|\widetilde{\w}_T\|\geq \frac{\gamma^2}{4}\sum_{t=1}^T\alpha_t+\left[\frac{\gamma^2}{4}\sum_{t=1}^T\alpha_t-\left(\text{Reg}_T^{\p}+\text{Reg}_T^{\w}\right)\right]\geq \frac{\gamma^2}{4}\sum_{t=1}^T\alpha_t.$$
Combining \eqref{eqn:proof:main:1}, \eqref{eqn:proof:main:2}, and \eqref{eqn:proof:main:3}, we have 
$
\frac{\min\limits_{\p\in\Delta^n}\p^{\top}\A\widetilde{\w}_T}{\left\|\widetilde{\w}_T\right\|} \geq \gamma - \frac{4\left(\text{Reg}_T^{\p}+\text{Reg}_T^{\w}\right)}{\gamma^2\sum_{t=1}^T\alpha_t}=\frac{4C_T}{\gamma^2}.  
$
Note that to apply \eqref{eqn:proof:main:3}, we need $\frac{\gamma^2}{2}\sum_{t=1}^T\alpha_t - 2\left(\text{Reg}_T^{\p}+\text{Reg}_T^{\w}\right)\geq 0$.\\

\noindent Finally, we focus on the distance between $\frac{\widetilde{\w}_T}{\|\widetilde{\w}_T\|}$ and $\w^*_{\|\cdot\|}$ for the case where $\Phi(\w)$ is strongly convex with respect to $\|\cdot\|$. This part of the proof is motivated by Theorem 4 of \citet{ramdas2016towards}, who show that a variant of the perceptron algorithm can converge to the $\ell_2$-maximum margin classifier in an $\mathcal{O}(1/\sqrt{t})$ convergence rate.  We have 
\begin{align}
\left\|\frac{\widetilde{\w}_T}{\|\widetilde{\w}_T\|}-\w_{\|\cdot\|}^* \right\| =  & \left\|\frac{\overline{\w}_T}{\|\overline{\w}_T\|}-\w_{\|\cdot\|}^* \right\| =  \frac{\|\overline{\w}_T-\|\overline{\w}_T\|\w_{\|\cdot\|}^*\|}{\|\overline{\w}_T\|} \notag\\
= {} & \frac{\|\overline{\w}_T-\gamma \w_{\|\cdot\|}^*+\gamma \w_{\|\cdot\|}^*-\|\overline{\w}_T\|\w_{\|\cdot\|}^*\|}{\|\w\|}\notag\\
 \leq {} & \frac{\|\overline{\w}_T-\gamma \w_{\|\cdot\|}^*\| + |\gamma-\|\overline{\w}_T\||}{\|\overline{\w}_T\|}\notag\\
= {} & \frac{\|\overline{\w}_T-\gamma \w_{\|\cdot\|}^*\| + |\|\gamma\w^*_{\|\cdot\|}\|-\|\overline{\w}_T\||}{\|\overline{\w}_T\|}
\leq {} \frac{2\|\overline{\w}_T-\gamma\w_{\|\cdot\|}^*\|}{\|\overline{\w}_T\|}~,\label{eqn:appb:11111}
\end{align}
where the first inequality is based on the Minkowski inequality and the fact that $\|\w^*_{\|\cdot\|}\|=1$. Next, note that $m(\w)$ is $\lambda$-strongly concave with respect to $\|\cdot\|$,  and $\gamma \w^*_{\|\cdot\|}$ maximize $m(\w)$. This is because it is easy to see that the optimal solution of $m(\w)$ always lies in the direction of $\w^*_{\|\cdot\|}$, and we only need to decide the norm. Let $c>0$ be some constant, we have $m\left(c\w^*_{\|\cdot\|}\right)=c\gamma -\frac{1}{2}c^2$. The function is maximized when $c=\gamma$, which implies that the optimal solution is $\gamma\w^*_{\|\cdot\|}$. Combining these facts with Lemma \ref{lem:margin:gap}, we have 
\begin{equation}
\label{eqn:appb:22222222}
  \frac{\lambda}{2}\|\overline{\w}_T-\gamma\w^*_{\|\cdot\|}\|^2\leq m(\gamma \w^*_{\|\cdot\|})-m(\overline{\w}_T)\leq \frac{\text{Reg}_T^{\p}+\text{Reg}_T^{\w}}{\sum_{t=1}^T\alpha_t}.  
\end{equation}
Finally, combining with \eqref{eqn:proof:main:2}, we have when $\frac{\gamma^2}{2}\sum_{t=1}^T\alpha_t-2\left(\text{Reg}_T^{\p}+\text{Reg}_T^{\w}\right)\geq 0$,
$$
\|\overline{\w}_T\|=\frac{1}{\sum_{t=1}^T\alpha_t}\|\widetilde{\w}_T\|\geq \frac{1}{\sum_{t=1}^T\alpha_t}\left[ \frac{\gamma^2}{4}\sum_{t=1}^T\alpha_t\right]=\frac{\gamma^2}{4}~.
$$
Combining with \eqref{eqn:appb:11111} and \eqref{eqn:appb:22222222}, we obtain
$\left\|\frac{\widetilde{\w}_T}{\|\widetilde{\w}_T\|}-\w^*_{\|\cdot\|} \right\|\leq  \frac{8\sqrt{2(\text{Reg}^{\p}_T+\text{Reg}_T^{\w}})}{\gamma^2\sqrt{\lambda\sum_{t=1}^T\alpha_t}}$. 

\section{Implicit Bias of Generic Methods}
\label{sec:main result}

In this section, we show that average mirror descent and steepest descent can find their equivalent online learning forms under Protocol \ref{pro:no-regret for game}. Thus, their margin maximization rates can be directly characterized by the corresponding average regret $C_T$. For clarity, we use $\v_t$ to denote the classifier updates in the original methods, and $\w_t$ the update under the game framework. Note that Theorem \ref{thm:margin} clearly implies that the convergence rate of the directional error is always a square-root worse than that of the margin maximization rate. Thus, we only present the margin maximization rates here; the corresponding rates on directional error are presented in Section \ref{section:detailed proof}.  
\subsection{Mirror-Descent-Type of Methods }
\label{sec:mirror descent}
First, we consider minimizing \eqref{eqn:the ERM function} by applying the following mirror descent algorithm:
\begin{equation}  
\label{eqn:mirror:descent:defn}
\v_t=\argmin\limits_{\v\in\R^d}\eta_t\left\langle{\nabla L(\v_{t-1})},\v \right\rangle + D_{\Phi}(\v,\v_{t-1}),
\end{equation}
where $D_{\Phi}(\v,\v_{t-1})$ is the Bregman divergence between $\v$ and $\v_{t-1}$, and $\Phi(\v)$ is a strongly convex potential function that defines the mirror map. 
Note that since the feasible domain in \eqref{eqn:mirror:descent:defn} is unbounded, we can rewrite the algorithm in the following form $\nabla \Phi(\v_t)=\nabla \Phi(\v_{t-1})-\eta_t\nabla L(\v_{t-1}).$

In this paper, we consider weighted-average mirror descent with the squared $q$-norm, i.e., $\Phi(\w)=\frac{1}{2}\|\w\|_q^2$, where $q\in(1,2]$, and demonstrate that this optimization algorithm can enable faster $\|\cdot\|$-margin maximization rates. 
The detailed update rule is summarized in the left box of Algorithm \ref{alg:mirror descent}. It is worth noting that this type of regularizer is $(q-1)$-strongly convex with respect to $\|\cdot\|_q$, and can be updated efficiently in closed form as below\footnote{This expression appears in~\citep[Section 6.7,][]{orabona2019modern} and we reproduce it for completeness.}: for each coordinate $i\in[d]$, we have
\begin{equation}
\begin{split}
\label{eqn:efffmirror}
 \widehat{v}_{t,i} = {} & \text{sign}(v_{t-1,i})|v_{t-1,i}|^{q-1}\|\v_{t-1}\|_q^{2-q} - \eta_t{[\nabla L(\v_{t-1})]_i},\\
 v_{t,i} = {} & \text{sign}( \widehat{v}_{t,i})| \widehat{v}_{t,i}|^{p-1}\| \widehat{\v}_{t}\|_p^{2-p}.
\end{split}
\end{equation}
We make a few final observations about this algorithm: 1) Instead of using the weighted sum $\widetilde{\v}_T$, we could output the weighted average $\frac{\widetilde{\v}_t}{\sum_{s=1}^t \alpha_s}$ without altering the margin or directional convergence rate. This is attributed to the scale-invariance of the margin, i.e., $\forall c>0, \w\in\R^d$, $\widetilde{\gamma}(\w)=\widetilde{\gamma}(c\w)$. The same argument applies to the directional error. 2) The use of the weighted average is standard in the analysis of mirror descent (e.g., Section 4.2 of \citet{bubeck2015convex}). This paper shows that using non-uniform weights is advantageous for achieving rapid margin maximization rates; 3) The per-round computational complexity of \eqref{eqn:efffmirror} is $\mathcal{O}(d)$, which is similar to the $p$-mirror-descent variant in~\cite{sun2022mirror}.

For Algorithm \ref{alg:mirror descent}, we have the following theorem. We defer its proof in Section \ref{appendix:Proof of section 3}, along with a more general theorem that allows a general configuration of the parameters $\eta_t$, $\alpha_t$ and $\beta_t$.

\begin{theorem}
\label{thm:mirror margin}
Suppose Assumption \ref{ass:only} holds wrt $\|\cdot\|_q$-norm for $q\in(1,2]$. For the left box of Algorithm \ref{alg:mirror descent}, let $\eta_t=\frac{1}{L(\v_{t-1})}$. For the right box, let $\alpha_t=1$, and $\beta_1=1$, $\beta_{t}=\frac{1}{t-1}$. Then the methods in the two boxes of Algorithm \ref{alg:mirror descent} are identical, in the sense that $\widetilde{\v}_T=\widetilde{\w}_T$. Moreover, we have the average regret upper bound 
$C_T = \frac{\left(\frac{2}{q-1}+2\log n\right)(\log T+2)}{T}.$ Therefore, the algorithm achieves a positive margin  when $T$ is sufficiently large such that $ T\geq \frac{4 \left(\frac{2}{q-1}+2\log n\right)(\log T+2)}{\gamma^2}$. We have the convergence rate
\begin{equation}
    \label{eqn:thorem1:mirror}    \frac{\min\limits_{\p\in\Delta^n}\p^{\top}\A\widetilde{\v}_T}{\left\|\widetilde{\v}_T\right\|_q} \geq \gamma -  \frac{4 \left(\frac{2}{q-1}+2\log n\right)(\log T+2)}{\gamma^2T}=\gamma - {O}\left(\frac{\log n \log T}{(q-1)\gamma^2T}\right).
\end{equation}
\end{theorem}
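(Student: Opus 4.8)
The plan is to establish the result in two parts: first, the \emph{algorithmic equivalence} $\widetilde{\v}_T = \widetilde{\w}_T$, which identifies the mirror descent iterates $\v_t$ with the $\w$-player's decisions under Protocol~\ref{pro:no-regret for game} for a suitable choice of online learners $\textsf{OL}^{\w}$ and $\textsf{OL}^{\p}$; and second, the \emph{regret accounting} that bounds $C_T$ and then feeds it into Theorem~\ref{thm:margin} to obtain \eqref{eqn:thorem1:mirror}. The second part is essentially mechanical once the first is done: with $C_T = O\!\left(\frac{\log n \log T}{(q-1)T}\right)$, Theorem~\ref{thm:margin} applies verbatim (noting that $\Phi(\w)=\tfrac12\|\w\|_q^2$ is $(q-1)$-strongly convex w.r.t.\ $\|\cdot\|_q$), the condition $C_T \le \gamma^2/4$ becomes the stated threshold on $T$, and the margin bound $\gamma - 4C_T/\gamma^2$ is exactly \eqref{eqn:thorem1:mirror}.

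For the equivalence, I would first note that the $\w$-player faces the loss $h_t(\w) = -g(\p_t,\w) = -\p_t^\top \A \w + \tfrac12\|\w\|_q^2$. The natural online learner here is Follow-the-Regularized-Leader (equivalently, a ``Be-the-Leader''-type update) against the linear parts: since the quadratic regularizer is already built into $h_t$, the $\w$-player can simply play the exact minimizer of the cumulative loss, $\w_t = \argmin_{\w}\sum_{s<t}\alpha_s h_s(\w)$ or a one-step-lookahead variant, which has a closed form via $\nabla\Phi(\w_t) = \sum_{s}\alpha_s \A^\top \p_s / \sum_s \alpha_s$ — precisely a running average of (negative) gradients, i.e.\ a mirror-descent-style update in the dual. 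The $\p$-player, facing linear losses $\ell_t(\p) = \p^\top \A\w_t - \Phi(\w_t)$ over the simplex, should run an entropic online learner (normalized exponential weights / FTRL with negative entropy $E(\p)$), which makes $\p_t$ proportional to $\exp\!\big(-\sum_{s<t}\alpha_s \A\w_s \cdot (\text{mixing})\big)$; this is where the per-example exponential-loss weights $r(\w^\top\x^{(i)})=\exp(-y^{(i)}\x^{(i)\top}\w)$ appear, matching the gradient $\nabla L(\v) = \tfrac1n \A^\top \exp(-\A\v)$ of the ERM objective. The crux is to choose the weights $\alpha_t$ and the entropic learner's step-size/mixing parameters $\beta_t$ so that the induced recursion on $\p_t$ reproduces exactly the normalized loss-gradient vector at $\v_{t-1}$, and then to verify by induction on $t$ that $\w_t = \v_t$; the choices $\eta_t = 1/L(\v_{t-1})$, $\alpha_t = 1$, $\beta_1=1$, $\beta_t = 1/(t-1)$ are exactly what make the telescoping work, with $\eta_t = 1/L(\v_{t-1})$ absorbing the normalization $\tfrac1n\sum_i \exp(\cdot)$ that turns the raw exponential weights into a probability vector in $\Delta^n$.

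For the regret bound, I would bound $\text{Reg}_T^{\w}$ and $\text{Reg}_T^{\p}$ separately. The $\w$-player, playing (Be-the-)Leader against losses that are $(q-1)$-strongly convex, incurs negative or telescoping regret up to lower-order terms — this is the standard FTL-on-strongly-convex-losses argument and is where the $\frac{1}{q-1}$ factor enters. The $\p$-player's regret is the standard exponential-weights bound over $\Delta^n$: with the weighting scheme it comes out as $O\!\big((\log n)(\log T)\big)$ after summing the per-round $\alpha_t^2$-type terms (the $\log T$ coming from $\sum_{t=1}^T \alpha_t/(\sum_{s\le t}\alpha_s) \sim \log T$ with $\alpha_t=1$), and the $\log n$ from the entropy range. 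Dividing by $\sum_t \alpha_t = T$ gives $C_T = \frac{(2/(q-1) + 2\log n)(\log T + 2)}{T}$ as claimed. I expect the main obstacle to be the first part — pinning down the precise online-learning pair and carrying out the inductive verification that the two update rules coincide coordinate-by-coordinate, especially tracking how the $1/L(\v_{t-1})$ step size interacts with the simplex normalization and the $\beta_t = 1/(t-1)$ schedule. The regret computation, by contrast, is routine given the right learners, and Theorem~\ref{thm:margin} does the rest.
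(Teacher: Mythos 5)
Your overall plan is right—establish the algorithm equivalence, bound the two players' regrets, and invoke Theorem~\ref{thm:margin}—and your identification of the $\w$-player is essentially correct (the paper uses the one-step-lookahead variant you hedge towards, $\w_t=\argmin_\w\sum_{j\le t}\alpha_j h_j(\w)$, i.e.\ FTL$^+$, and its regret is indeed negative because each $h_j$ is $(q-1)$-strongly convex). But your identification of the $\p$-player is wrong, and this is the crux of the whole argument. You propose that the $\p$-player runs FTRL with negative entropy, i.e.\ $\p_t \propto \exp\bigl(-\eta\sum_{s<t}\alpha_s \A\w_s\bigr)$. The paper instead uses what it calls \emph{regularized greedy}: $\p_t=\argmin_{\p\in\Delta^n}\alpha_t\ell_{t-1}(\p)+\beta_t D_E(\p,\mathbf{1}/n)$, which minimizes only the \emph{previous round's} loss plus a time-varying regularizer, giving $p_{t,i}\propto\exp\bigl(-\tfrac{\alpha_t}{\beta_t}y^{(i)}\x^{(i)\top}\w_{t-1}\bigr)$. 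These are genuinely different updates: with $\alpha_t=1$, $\beta_t=1/(t-1)$ you get $\p_t\propto\exp\bigl(-(t-1)\A\w_{t-1}\bigr)=\exp(-\A\v_{t-1})$, which matches the normalized ERM gradient at $\v_{t-1}$, whereas FTRL would give $\p_t\propto\exp\bigl(-\eta\A\sum_{s<t}\w_s\bigr)$; since $\v_{t-1}=(t-1)\w_{t-1}\ne\sum_{s<t}\w_s$ in general, the FTRL $\p$-player does \emph{not} reproduce the mirror descent gradient and the claimed equivalence $\widetilde{\v}_T=\widetilde{\w}_T$ would fail.

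The misidentification also propagates to your regret accounting. You describe the $\p$-player's regret as ``the standard exponential-weights bound over $\Delta^n$,'' but the regularized-greedy $\p$-player has \emph{linear} worst-case regret (the paper explicitly flags this). Its regret bound, after applying Young's inequality, contains a positive term of the form $\sum_t\tfrac{(q-1)\sum_{j<t}\alpha_j}{4}\|\w_t-\w_{t-1}\|_q^2$ in addition to the $2\log n\sum_t\beta_t$ term; the argument only closes because the $\w$-player's FTL$^+$ regret is $-\sum_t\tfrac{(q-1)\sum_{j<t}\alpha_j}{2}\|\w_t-\w_{t-1}\|_q^2$, which dominates and cancels it. So the sublinear $C_T$ is a consequence of \emph{cross-cancellation between the two players playing against each other}, not of the $\p$-player alone being a no-regret learner. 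Without naming this cancellation structure and the regularized-greedy update, the proof as you sketch it does not go through.
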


\begin{figure}
\centering
\begin{minipage}{0.9\textwidth}
\begin{algorithm}[H]
\scalebox{0.91}{
\caption{Mirror Descent [Recall $\ell_t(\p)=g(\p,\w_t),$ and $h_t(\w)=-g(\p_t,\w)$] }
\fbox{
  \begin{minipage}[t]{0.45\textwidth}
  \begin{algorithmic}[1]
  
\vspace{0.2cm}


\FOR{$t=1,\dots,T$}

\vspace{0.04cm}

\STATE \hspace{-0.3cm}$\nabla \Phi(\v_t)=\nabla\Phi(\v_{t-1}) - \eta_t \nabla L(\v_{t-1})$

\ENDFOR

\vspace{0.03cm}
\STATE \textbf{Output}: $\widetilde{\v}_{T}=\sum_{t=1}^T\frac{1}{t}\v_t$  

\vspace{0.15cm}
  \end{algorithmic}
  \end{minipage}
}
\hfill
\fbox{
  \begin{minipage}[t]{0.55\textwidth}
  \begin{algorithmic}
  \STATE \hspace{-0.25cm}$\p$-player: $\p_t=  \argmin\limits_{\p\in\Delta^n} \alpha_t\ell_{t-1}(\p)+\beta_tD_{E}\left(\p,\frac{\mathbf{1}}{n}\right)$
  \STATE \hspace{-0.25cm}$\w$-player: $\w_t =  \argmin\limits_{\w\in\R^d} \sum_{j=1}^{t}\alpha_j h_j(\w) $
  \vspace{0.1cm}
  \STATE \hspace{-0.25cm}{\bf Output:} $\widetilde{\w}_T=\sum_{t=1}^T\alpha_t\w_t$
  \vspace{0.05cm}
  \end{algorithmic}
  \end{minipage}
}
\label{alg:mirror descent}
}
\end{algorithm}
\end{minipage}
\end{figure}

The first part of Theorem \ref{thm:mirror margin} indicates that the mirror descent algorithm can be described as two players using certain cleverly designed online learning algorithms to solve the \emph{regularized} bilinear game in \eqref{eqn:defn:game}. For the $\p$-player, we propose a new and unusual online learning algorithm, which we call \emph{regularized greedy}, defined as:
$$\textstyle \p_t=  \argmin_{\p\in\Delta^n} \alpha_t\ell_{t-1}(\p)+\beta_tD_{E}\left(\p,\frac{\mathbf{1}}{n} \right),$$
Essentially, in round $t$, the $\p$-player minimizes the previous round's loss function, $\ell_{t-1}$, plus a regularizer at round $t$, and the two terms are balanced by the parameters $\alpha_t$ and ${\beta_t}$. On the other hand, 
we select the \emph{follow-the-leader$^+$} algorithm for the $\w$-player: 
$$ \textstyle\w_t =  \argmin_{\w\in\R^d} \sum_{j=1}^{t}\alpha_jh_j(\w),$$
which returns the solution that minimize the cumulative loss so far. The $+$ sign in the name is because the algorithm can pick the decision $\w_t$ \emph{after} seeing its loss function. 
This is an interesting and unusual design because the regularized greedy algorithm will clearly suffer a worst-case \emph{linear} regret for the $\p$-player. Luckily, we find that for our specific problem, the dominating term of the the $\p$-player's regret bound can be canceled by the $\w$-player's regret bound, which is negative as the corresponding algorithm used is \emph{clairvoyant}, i.e.~can see the current loss $\ell_t$ before making a decision at round $t$.
This ensures that sublinear (and more generally fast) rates are possible. Note that $\beta_t$ and $\alpha_t$ will influence both the regret bound and the algorithm equivalence analysis, so finding the right parameter configuration that works for both is a non-trivial task. We make the choice $\beta_t = \frac{\alpha_t}{\sum_{i=1}^{t-1}\alpha_i}$, which ensures both algorithmic equivalence and sublinear regret bounds.

The second part of Theorem \ref{thm:mirror margin} shows that the average regret $C_T$ of Algorithm \ref{alg:mirror descent} is on the order of $\mathcal{O}\left(\frac{\log n\log T}{(q-1)\gamma^2T}\right)$. Therefore, by plugging in Theorem \ref{thm:margin}, we observe that the margin shrinks on the order of $\textstyle \gamma-\mathcal{O}\left(\frac{\log n\log T}{\gamma^2(q-1)T}\right)$, and the implicit bias convergence rate is $\mathcal{O}\left(\frac{\log n\log T}{\gamma^2(q-1)\sqrt{T}}\right).$ Next, we show an improved rate with a more aggressive step size of order  $\mathcal{O}\left(\frac{t}{L(\v_t)}\right)$ instead of  $\mathcal{O}\left(\frac{1}{L(\v_t)}\right)$. The proof is given in Section \ref{appendix:Proof of section 3}.
\begin{theorem}
\label{cor:u2222}
Suppose Assumption \ref{ass:only} holds wrt $\|\cdot\|_q$-norm for $q\in(1,2]$. For the left box of Algorithm \ref{alg:mirror descent}, let $\eta_t=\frac{t}{L(\v_{t-1})}$, and let the final output be $\widetilde{\v}_{T}=\sum_{t=1}^T\frac{2}{t+1}\v_t$. For the right box, let $\alpha_t=t$, and $\beta_1=1$, $\beta_{t}=\frac{2}{t-1}$. Then the two algorithms are identical, in the sense that $\widetilde{\v}_T=\widetilde{\w}_T$. Moreover, when $T\geq  \sqrt{\frac{8\left[\frac{4T}{q-1}  + 4\log n \log T+1+2\log n\right]}{\gamma^2}}$, we have 
\begin{equation}    \frac{\min\limits_{\p\in\Delta^n}\p^{\top}\A\widetilde{\v}_T}{\left\|\widetilde{\v}_T\right\|_q} \geq \gamma - \frac{32}{\gamma^2T(q-1)}
-\frac{8(4\log n \log T+1+2\log n)}{\gamma^2T^2}~.
\end{equation}
\end{theorem}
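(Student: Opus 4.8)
# Proof Proposal for Theorem \ref{cor:u2222}

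\textbf{Overall strategy.} The plan is to follow exactly the same two-part template used for Theorem \ref{thm:mirror margin}: first establish the algorithmic equivalence $\widetilde{\v}_T = \widetilde{\w}_T$ between the aggressive-step-size mirror descent (left box, with $\eta_t = t/L(\v_{t-1})$ and output weights $2/(t+1)$) and Protocol \ref{pro:no-regret for game} instantiated with the regularized-greedy $\p$-player (now with $\alpha_t = t$, $\beta_1 = 1$, $\beta_t = 2/(t-1)$) and the FTL$^+$ $\w$-player; then bound the average weighted regret $C_T$ and invoke Theorem \ref{thm:margin}. Since Theorem \ref{thm:margin} requires the regret comparison against $\gamma\w^*_{\|\cdot\|_q}$ and the condition $C_T \le \gamma^2/4$, and here $\sum_{t=1}^T \alpha_t = \sum_{t=1}^T t = T(T+1)/2 = \Theta(T^2)$, the stated rate $\gamma - O(1/(T(q-1))) - O(\log n \log T / T^2)$ will emerge once I show $\mathrm{Reg}_T^{\p} + \mathrm{Reg}_T^{\w} = O(T/(q-1) + \log n \log T)$ and divide by $\sum_t \alpha_t = \Theta(T^2)$.

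\textbf{Step 1: algorithmic equivalence.} I would unroll both recursions. For the game side, the $\w$-player plays FTL$^+$: $\w_t = \argmin_{\w} \sum_{j=1}^t \alpha_j h_j(\w) = \argmin_\w \sum_{j=1}^t \alpha_j(\Phi(\w) - \p_j^\top \A\w)$, which by first-order optimality gives $\nabla\Phi(\w_t) = \frac{1}{\sum_{j\le t}\alpha_j}\sum_{j=1}^t \alpha_j \A^\top \p_j$ — i.e. $\nabla\Phi$ evaluated at $\w_t$ is a weighted average of the data-gradients. Telescoping the mirror-descent recursion $\nabla\Phi(\v_t) = \nabla\Phi(\v_{t-1}) - \eta_t \nabla L(\v_{t-1})$ with $\eta_t = t/L(\v_{t-1})$ and using $\nabla L(\v) = -\frac{1}{n}\sum_i \exp(-y^{(i)}\v^\top\x^{(i)})\,y^{(i)}\x^{(i)} = -\A^\top \p(\v)$ where $p_i(\v) \propto \exp(-(\A\v)_i)$ is (proportional to) the softmax distribution, one gets $\nabla\Phi(\v_t) = \sum_{s=1}^t \frac{s}{L(\v_{s-1})}\cdot\frac{1}{n}\sum_i \exp(-(\A\v_{s-1})_i) y^{(i)}\x^{(i)}$; the key observation is that $\frac{1}{n L(\v_{s-1})}\exp(-(\A\v_{s-1})_i)$ is exactly the normalized softmax probability $\widehat p_i$, so $\nabla\Phi(\v_t) = \sum_{s=1}^t s\cdot \A^\top \widehat\p_{s-1}$ — matching the game side provided $\alpha_t = t$ and the $\p$-player's output $\p_t$ equals this softmax distribution. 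The remaining work is to verify that the regularized-greedy update $\p_t = \argmin_{\p\in\Delta^n}\alpha_t \ell_{t-1}(\p) + \beta_t D_E(\p, \mathbf{1}/n)$ with $\ell_{t-1}(\p) = \p^\top\A\w_{t-1} - \Phi(\w_{t-1})$ reproduces the softmax of $\A\v_{t-1}$: writing out the KKT conditions, $\log p_{t,i} \propto -\frac{\alpha_t}{\beta_t}(\A\w_{t-1})_i + \log(1/n) + \text{const}$, so $\p_t$ is the softmax of $\frac{\alpha_t}{\beta_t}\A\w_{t-1}$, and I need $\alpha_t/\beta_t = \sum_{i=1}^{t-1}\alpha_i$ (so that $\w_{t-1}$ scaled by this factor lines up with $\v_{t-1}$). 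Indeed $\beta_t = \alpha_t/\sum_{i<t}\alpha_i = t/(t(t-1)/2) = 2/(t-1)$ for $t\ge 2$, matching the claimed parameters; the $t=1$ base case with $\beta_1 = 1$ I would check separately. This induction, matching $\v_{t-1}$ to a rescaled $\w_{t-1}$ and $\widehat\p_{t-1}$ to $\p_t$, gives $\widetilde\v_T = \widetilde\w_T$ up to the output reweighting (the $\frac{2}{t+1}$ versus $\alpha_t = t$ discrepancy is harmless by scale-invariance of the margin, but I'd track constants to get the clean bound).

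\textbf{Step 2: regret bounds.} For the $\w$-player (FTL$^+$ / ``clairvoyant''), I would use the standard fact that follow-the-leader with the benefit of seeing the current loss incurs \emph{non-positive} regret, and moreover, because $\Phi = \frac12\|\cdot\|_q^2$ is $(q-1)$-strongly convex w.r.t.\ $\|\cdot\|_q$, one gets a strongly-negative term $\mathrm{Reg}_T^{\w} \le -\frac{q-1}{2}\sum_{t=1}^T \alpha_t \|\w_t - \w_{t-1}\|_q^2$ (this is the be-the-leader / strong-convexity refinement that makes the cancellation work). For the $\p$-player (regularized greedy), the analysis of Theorem \ref{thm:mirror margin} gives a bound whose leading term is a telescoping sum of Bregman divergences $\beta_t D_E(\p^*, \mathbf 1/n)$ type contributions plus a term of the form $+\frac{\alpha_t^2}{\beta_t}\|\w_t - \w_{t-1}\|_q^2 \cdot \text{(const)}$ coming from the one-step-ahead mismatch of greedy; with $\alpha_t = t$, $\beta_t = 2/(t-1)$, this mismatch term is $O(t^3/t)\cdot\|\w_t-\w_{t-1}\|_q^2 = O(t^2)\|\w_t-\w_{t-1}\|_q^2$, and since $\sum_t \alpha_t = \Theta(T^2)$ while $\alpha_t = t$, this is comparable to $\alpha_t \cdot t$ — so I need the $\w$-player's negative term $-\frac{q-1}{2}\alpha_t\|\w_t-\w_{t-1}\|_q^2$ to absorb it, which forces the explicit $1/(q-1)$ factor in the final rate. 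The entropy/regularization part contributes $\sum_t \beta_t \cdot D_E \lesssim \log n \cdot \sum_{t\ge 2} \frac{2}{t-1} = O(\log n \log T)$, and a lower-order residual contributes the $+1+2\log n$. Summing: $\mathrm{Reg}_T^{\p} + \mathrm{Reg}_T^{\w} \le O\!\big(\frac{T}{q-1}\big) + O(\log n \log T) + O(\log n)$, and dividing by $\sum_t \alpha_t = T(T+1)/2$ gives $C_T = O\!\big(\frac{1}{T(q-1)}\big) + O\!\big(\frac{\log n \log T}{T^2}\big)$. Plugging into Theorem \ref{thm:margin} yields \eqref{eqn:thorem1:mirror}-style bound with the explicit constants $32/(\gamma^2 T(q-1))$ and $8(4\log n\log T + 1 + 2\log n)/(\gamma^2 T^2)$ as stated, and the threshold condition $C_T \le \gamma^2/4$ becomes the displayed $T \ge \sqrt{8[\,4T/(q-1) + 4\log n\log T + 1 + 2\log n\,]/\gamma^2}$.

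\textbf{Main obstacle.} The delicate point — and the only genuinely non-routine step — is the \emph{exact} bookkeeping in the cancellation between the positive $O(t^2)\|\w_t - \w_{t-1}\|_q^2$ term in the regularized-greedy regret and the negative $-\frac{q-1}{2}\alpha_t\|\w_t-\w_{t-1}\|_q^2$ term from the strongly-convex FTL$^+$ analysis. With the aggressive weights $\alpha_t = t$ the positive term is \emph{larger} by a factor $t$ than it was in Theorem \ref{thm:mirror margin}, so a naive application would leave an uncancelled $\Theta(\sum_t t^2 \|\w_t - \w_{t-1}\|_q^2)$ which need not be sublinear. The resolution must exploit that $\|\w_t - \w_{t-1}\|_q$ itself shrinks (the $\w$-iterates are weighted averages of bounded data-gradients, so consecutive differences scale like $\alpha_t/\sum_{j\le t}\alpha_j = \Theta(1/t)$ in the $\nabla\Phi$-image, hence $\|\w_t - \w_{t-1}\|_q = O(1/t)$ after inverting the $(q-1)$-strongly-convex map), which makes $t^2\|\w_t-\w_{t-1}\|_q^2 = O(1)$ per round and $\Theta(T)$ in total — exactly the $O(T/(q-1))$ term. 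Getting the constant $32$ right requires carefully combining this per-round $O(1/(q-1))$ bound with the Bregman-telescoping, so I would isolate that computation as a lemma.
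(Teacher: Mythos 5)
Your overall template (algorithmic equivalence followed by $C_T$ bookkeeping and Theorem~\ref{thm:margin}) matches the paper, and your Step~1 is essentially right: with $\alpha_t=t$ one has $\sum_{j\le t}\alpha_j = t(t+1)/2$, so $\alpha_t/\beta_t = \sum_{i<t}\alpha_i$ forces $\beta_t = 2/(t-1)$, and the output weights line up exactly because $\frac{2}{t+1}\v_t = \frac{2}{t+1}\cdot\frac{t(t+1)}{2}\w_t = \alpha_t\w_t$, so $\widetilde{\v}_T = \widetilde{\w}_T$ with no ``harmless scale factor'' needed. The paper routes this through a general-$\alpha_t$ theorem (Theorem~\ref{thm:mirror margin:general}) and specializes to $\alpha_t = t$, but your direct instantiation is fine.

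The gap is in Step~2. You write $\text{Reg}^{\w}_T \le -\frac{q-1}{2}\sum_t \alpha_t\|\w_t-\w_{t-1}\|_q^2$. That coefficient is wrong: FTL$^+$ on the cumulatively strongly convex losses $\sum_{j\le t}\alpha_j h_j$ yields a coefficient proportional to the \emph{cumulative} strong convexity, $\text{Reg}^{\w}_T \le -\sum_t \frac{(q-1)\sum_{s<t}\alpha_s}{2}\|\w_t-\w_{t-1}\|_q^2$, which with $\alpha_t = t$ is $\Theta(t^2)$, not $\Theta(t)$. That missing factor of $t$ is precisely what makes the paper's cancellation exact: after Young's inequality on the $\p$-player's term $\alpha_t\|\w_t-\w_{t-1}\|_q$ (with constant $\frac{(q-1)\sum_{j<t}\alpha_j}{2}$), the quadratic residual $\frac{(q-1)\sum_{j<t}\alpha_j}{2}\|\w_t-\w_{t-1}\|_q^2$ vanishes against the $\w$-player's negative term, leaving only $\sum_{t\ge 2}\frac{2\alpha_t^2}{(q-1)\sum_{j<t}\alpha_j} = O(T/(q-1))$ plus the entropy contribution $2\log n\sum_t\beta_t = O(\log n\log T)$. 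Because you hold the smaller bound for the $\w$-player, you conclude there is an uncancelled $\Theta(t^2)\|\w_t-\w_{t-1}\|_q^2$ and then reach for a direct estimate on $\|\w_t-\w_{t-1}\|_q$.

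That alternative estimate is a genuinely different route and \emph{can} be made to work, but the way you apply it gives the wrong $q$-dependence. From $\nabla\Phi(\w_t) - \nabla\Phi(\w_{t-1}) = \frac{\alpha_t}{S_t}\bigl(\A^\top\p_t - \nabla\Phi(\w_{t-1})\bigr)$ with $S_t = \sum_{j\le t}\alpha_j$, and $(q-1)$-strong convexity of $\Phi$ (equivalently $\frac{1}{q-1}$-smoothness of $\Phi^*$), one gets $\|\w_t-\w_{t-1}\|_q \le \frac{2\alpha_t}{(q-1)S_t} = O\bigl(\frac{1}{(q-1)t}\bigr)$. If you now substitute this into the \emph{quadratic} term $t^2\|\w_t-\w_{t-1}\|_q^2$ as you propose, you obtain $O\bigl(\frac{1}{(q-1)^2}\bigr)$ per round, hence $O\bigl(\frac{T}{(q-1)^2}\bigr)$ total and a final rate $O\bigl(\frac{1}{T(q-1)^2}\bigr)$ — a $\frac{1}{q-1}$ factor worse than the stated $\frac{32}{\gamma^2 T(q-1)}$. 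The fix is to skip Young's inequality entirely and apply your direct estimate to the \emph{linear} term $\alpha_t\|\w_t-\w_{t-1}\|_q \le \frac{2\alpha_t^2}{(q-1)S_t} = O\bigl(\frac{1}{q-1}\bigr)$, which sums to $O(T/(q-1))$ and recovers the correct dependence with $\text{Reg}^{\w}_T \le 0$ sufficing. So either adopt the paper's exact cancellation (which requires the correct FTL$^+$ coefficient), or use your direct bound on the linear term — but the hybrid you describe (Young's inequality, then direct bound on the resulting quadratic term, with the understated negative $\w$-regret) does not yield the claimed rate.
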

Observe that the margin maximization rate in Theorem \ref{cor:u2222} is $\mathcal{O}\left(\frac{1}{(q-1)\gamma^2T}\right)+\mathcal{O}\left(\frac{\log n\log T}{\gamma^2T^2}\right)$. Compared to \eqref{eqn:thorem1:mirror}, it has a better dependence on $\log n$ and $\log T$.

\begin{figure}
\centering
\begin{minipage}{0.9\textwidth}
\begin{algorithm}[H]
\scalebox{0.91}{
\caption{ Momentum-based MD }
\fbox{
  \begin{minipage}[t]{0.47\textwidth}
  \begin{algorithmic}[1]
  \label{alg:momentum mirror}
\vspace{0.1cm}
\FOR{$t=1,\dots,T$}

\STATE 
 \hspace{-0.1cm}$\nabla \Phi(\v_t) 
    =\nabla \Phi(\v_{t-1}) - \eta_t  {\nabla L({\v}_{t-1})}$\\
    \vspace{0.08cm}
  \hspace{-0.1cm}  $- \left( \widehat{\eta}_t{\nabla L({\v}_{t-1})} - {\eta}_{t-1}{\nabla L({\v}_{t-2})}\right)$

\ENDFOR
\STATE \textbf{Output}: $\widetilde{\v}_{T}=\sum_{t=1}^T\frac{1}{t}\v_t$
  \end{algorithmic}
  \end{minipage}
}
\hfill
\fbox{
  \begin{minipage}[t]{0.53\textwidth}
  \begin{algorithmic}
  \vspace{0.05cm}
  \STATE \hspace{-0.25cm}$\w$-player: \\
\hspace{-0.25cm}$\w_t=\argmin_{\w\in\R^d}\sum_{i=1}^{t-1}\alpha_i h_i(\w)+\alpha_th_{t-1}(\w)$
  
  \STATE \hspace{-0.25cm}$\p$-player: \\
\hspace{-0.25cm}{\centering $\p_t=\argmin_{\p\in\Delta^n}
  \alpha_t\ell_t(\p)+\beta_t D_{E}(\p,\p_0)$}
  \STATE 
  \hspace{-0.25cm}{\bf Output:} $\widetilde{\w}_T=\sum_{t=1}^T\alpha_t\w_t$
    \vspace{0.05cm}
  \end{algorithmic}
  \end{minipage}
}
}
\end{algorithm}
\end{minipage}
\end{figure}

Finally, we focus on a momentum-based MD method, which is given in Algorithm \ref{alg:momentum mirror}. For this algorithm, we have the following guarantee, which is proved in Section \ref{section:thm:mirror momentum}.
\begin{theorem}
\label{thm:mirror momentum}
Suppose Assumption \ref{ass:only} holds wrt $\|\cdot\|_q$-norm for $q\in(1,2]$. For the left box of Algorithm \ref{alg:momentum mirror}, let $\eta_t=\frac{t}{L(\v_{t-1})}$, and $\widehat{\eta}_{t}=\frac{t-1}{L(\v_{t-1})}$. For the second box, let $\alpha_t=t$, and $\beta_t=\frac{2}{t+1}$. Then the methods in the two boxes of Algorithm \ref{alg:momentum mirror} are identical, in the sense that $\widetilde{\v}_T=\widetilde{\w}_T$. Moreover, when $T\geq \frac{\sqrt{8\left(4\log n \log T +\frac{2T}{q-1} \right)}}{\gamma}$, we have 
\begin{equation}
    \label{eqn:thorem4:mirror}    \frac{\min\limits_{\p\in\Delta^n}\p^{\top}\A\widetilde{\v}_T}{\left\|\widetilde{\v}_T\right\|_q} \geq \gamma - \frac{\sum_{t=1}^T\|\p_t-\p_{t-1}\|^2_1}{\gamma^2(q-1)T^2}-\frac{32\log n\log T}{\gamma^2T^2}~. 
\end{equation}
\end{theorem}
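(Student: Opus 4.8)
The plan is to follow the two-step template laid out in Figure \ref{fig:framework}: first establish the algorithmic equivalence $\widetilde{\v}_T = \widetilde{\w}_T$ between the momentum-based MD in the left box and the online-dynamics in the right box of Algorithm \ref{alg:momentum mirror}, and then plug a regret bound for this specific pair of online learners into Theorem \ref{thm:margin} (with $\lambda = q-1$). For the equivalence, I would unroll the $\w$-player's update. Since $h_i(\w) = -g(\p_i,\w) = -\p_i^\top\A\w + \tfrac12\|\w\|_q^2$ and the $\w$-player is an "optimistic FTL$^+$" playing $\w_t = \argmin_{\w}\sum_{i=1}^{t-1}\alpha_i h_i(\w) + \alpha_t h_{t-1}(\w)$, the first-order optimality condition gives $\nabla\Phi(\w_t) = \big(\sum_{i=1}^{t-1}\alpha_i\big)^{-1}$-type weighted combination of the $\A^\top\p_i$'s plus the extra optimistic term $\alpha_t\A^\top\p_{t-1}$, all divided by $\sum_{i=1}^{t}\alpha_i$ — wait, more precisely $\nabla\Phi(\w_t)\cdot(\sum_{i=1}^{t-1}\alpha_i + \alpha_t) = \sum_{i=1}^{t-1}\alpha_i\A^\top\p_i + \alpha_t\A^\top\p_{t-1}$. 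Taking the difference between consecutive rounds and using $\alpha_t = t$ so that $\sum_{i=1}^t\alpha_i = t(t+1)/2$, I expect the telescoped recursion for $\nabla\Phi(\w_t)$ to match exactly the left-box recursion $\nabla\Phi(\v_t) = \nabla\Phi(\v_{t-1}) - \eta_t\nabla L(\v_{t-1}) - (\widehat\eta_t\nabla L(\v_{t-1}) - \eta_{t-1}\nabla L(\v_{t-2}))$ once one observes that $\nabla L(\v_{t-1})$ is proportional to $-\A^\top\p_t$ for the specific exponential-weights choice $\p_t \propto (\exp(-[\A\v_{t-1}]_i))_i$, which is precisely what the $\p$-player's regularized-greedy step with negative-entropy regularizer $D_E(\p,\p_0)$ and the stated $\beta_t$ produces. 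The step sizes $\eta_t = t/L(\v_{t-1})$, $\widehat\eta_t = (t-1)/L(\v_{t-1})$ are reverse-engineered to make the normalization constants line up; verifying this is bookkeeping but must be done carefully.

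For the regret bound, I would bound $\Reg_T^{\w}$ and $\Reg_T^{\p}$ separately. The $\w$-player uses an optimistic/clairvoyant-type FTL on strongly convex losses (each $h_i$ inherits $(q-1)$-strong convexity from $\tfrac12\|\cdot\|_q^2$), so its regret should be \emph{negative} and in fact cancel the leading term of the $\p$-player's regret — this is the same cancellation phenomenon exploited in Theorem \ref{thm:mirror margin}, and I would cite/adapt the corresponding lemma in Section \ref{appendix:Proof of section 3}. The $\p$-player runs optimistic-FTRL (regularized greedy with prediction $\ell_t$, not $\ell_{t-1}$, since the box writes $\alpha_t\ell_t(\p)$), whose regret against a sequence of losses enjoys a bound of the form $\beta_T^{-1}$-weighted-diameter plus $\sum_t \alpha_t^2/\beta_t \cdot \|\text{gradient difference}\|^2$. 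Because the predicted loss is the true loss, the gradient-difference term becomes $\|\A^\top(\p_t - \p_{t-1})\|$-type quantities, contributing the path-length $\sum_{t=1}^T\|\p_t-\p_{t-1}\|_1^2$; the entropy-diameter term over $\Delta^n$ contributes $\log n$, and with $\alpha_t = t$, $\beta_t = 2/(t+1)$, $\sum_{t=1}^T\alpha_t = \Theta(T^2)$, the averaged regret $C_T$ comes out as $\mathcal{O}\big(\tfrac{\log n \log T}{T^2} + \tfrac{\V_T}{(q-1)T^2}\big)$, where the $1/(q-1)$ factor tracks the strong-convexity modulus of $\Phi$ entering the $\w$-player's (negative) regret and the stability of the mirror step. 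Feeding this into \eqref{eqn:normalized margin} of Theorem \ref{thm:margin} yields \eqref{eqn:thorem4:mirror}, with the large-$T$ threshold $T \geq \sqrt{8(4\log n\log T + 2T/(q-1))}/\gamma$ coming from the requirement $C_T \leq \gamma^2/4$ after substituting the crude worst-case bound $\V_T \leq \mathcal{O}(T)$ into the path-length term.

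The main obstacle I anticipate is the regret analysis of the optimistic $\p$-player \emph{in the weighted, time-varying-regularization setting}: standard optimistic-FTRL bounds assume either fixed regularizer or a monotone schedule, and here $\beta_t = 2/(t+1)$ is decreasing while $\alpha_t = t$ is increasing, so the effective regularization strength $\beta_t/\alpha_t$ shrinks like $1/t^2$; I need a bound that (i) correctly handles the $D_E(\p,\p_0)$ initial-distance term appearing only at the first effective round, (ii) produces the gradient-difference/path-length term with the right $\alpha_t^2$ weighting, and (iii) is tight enough that, after dividing by $\sum_t\alpha_t = \Theta(T^2)$, nothing worse than $\log T$ survives in the numerator. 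A secondary subtlety is making the $\w$-player's negative regret genuinely cancel the $\p$-player's leading term rather than merely dominating it up to constants — getting the constants to work requires the $\alpha_t,\beta_t,\eta_t,\widehat\eta_t$ choices to be mutually consistent, which is exactly why the theorem pins down all four schedules simultaneously.
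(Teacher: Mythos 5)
Your overall template is right---prove algorithmic equivalence, then plug the two players' regret bounds into Theorem~\ref{thm:margin}---and your reading of the equivalence step is essentially correct. However, your regret analysis has the two players' roles swapped, and this is not a cosmetic slip: it would lead you to attempt bounds that do not hold for the algorithms actually used.

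In Theorem~\ref{thm:mirror momentum} the \emph{$\p$-player is the clairvoyant one}. Its update $\p_t = \argmin_{\p\in\Delta^n}\alpha_t\ell_t(\p)+\beta_t D_E(\p,\p_0)$ uses the \emph{current} loss $\ell_t$, so its regret is bounded by a single-step optimality argument, $\Reg_T^{\p}\leq 2\sum_t\beta_t\log n\leq O(\log n\log T)$, with no path-length term at all. Your own premise actually points this out: if ``the predicted loss is the true loss,'' an optimistic-style prediction-error term is identically zero, so there is no $\|\p_t-\p_{t-1}\|_1$-type quantity to extract from the $\p$-player. Conversely, the \emph{$\w$-player is not clairvoyant}: it plays $\w_t=\argmin_\w\sum_{i=1}^{t-1}\alpha_i h_i(\w)+\alpha_t h_{t-1}(\w)$, i.e.\ optimistic FTL with hint $h_{t-1}$, which is generally \emph{not} the realized loss $h_t$. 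Its regret is therefore not negative; it is bounded (via the optimistic-FTL decomposition from \citet{wang2021no} and the $(q-1)$-strong convexity of $\tfrac12\|\cdot\|_q^2$) by $\tfrac{1}{2(q-1)}\sum_t\tfrac{\alpha_t^2}{\sum_{i\le t}\alpha_i}\|\p_t-\p_{t-1}\|_1^2$, and this is precisely where the path-length term $\V_T$ in the statement comes from. The ``cancellation'' you invoke---negative clairvoyant regret of one player absorbing a linear term of the other---is the mechanism of Theorem~\ref{thm:mirror margin}; here the only cancellation is \emph{internal} to the $\w$-player's analysis, where the negative strong-convexity term $-\sum_t\tfrac{(q-1)\sum_{i\le t}\alpha_i}{2}\|\w_t-\widehat\w_{t+1}\|_q^2$ kills the cross term produced by Young's inequality.

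Because of this swap, the two obstacles you anticipate are the wrong ones: you do not need an optimistic-FTRL bound for a time-varying regularizer on the simplex (the $\p$-player's bound is elementary), and you do not need a cross-player cancellation with matched constants. What you \emph{do} need, and do not have in your plan, is the optimistic-FTL regret decomposition for the $\w$-player over the unbounded domain $\R^d$ with strongly convex cumulative losses, together with the observation that $\nabla h_t(\w)-\nabla h_{t-1}(\w)=-\A^\top(\p_t-\p_{t-1})$ has dual norm controlled by $\|\p_t-\p_{t-1}\|_1$ under Assumption~\ref{ass:main:ass}. Once you replace your regret attribution with the correct one, the rest of your plan (equivalence via scale-invariance $\nabla\Phi(c\w)=c\nabla\Phi(\w)$, worst-case $\V_T=O(T)$ to derive the burn-in threshold, and the final plug-in to \eqref{eqn:normalized margin}) goes through as you describe.
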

The above theorem shows that, for sufficiently large $T$, the margin maximization rate can be data-dependent. Note that $\sum_{t=1}^T\|\p_{t}-\p_{t-1}\|^2_1\leq 2T$, so in the worst case, the bound reduces to the results in Theorem \ref{cor:u2222}, but it can become significantly better when  $\sum_{t=1}^T\|\p_{t}-\p_{t-1}\|^2_1$ is small. We expect that when $T$ is very large, $\p_T$ will change very slowly as we already know that the direction of $\widetilde{\v}_T$ will converge --- however, turning this into a precise faster rate dependent on the original training data geometry, i.e.~$\A$, is an intriguing open question. 

\subsection{Steepest Descent}
\label{section:SD}

\begin{figure}
\centering
\begin{minipage}{0.9\textwidth}
\begin{algorithm}[H]
\scalebox{0.91}{
\caption{Steepest Descent [Recall $\ell_t(\p)=g(\p,\w_t),$ and $h_t(\w)=-g(\p_t,\w)$]}
\fbox{
  \begin{minipage}[t]{0.39\textwidth}
  \begin{algorithmic}[1]
  \label{alg:steepet descent:main}
\vspace{0.1cm}
\FOR{$t=1,\dots,T$}
\STATE $\s_{t-1} = \argmin\limits_{\|\s\|\leq 1} \s^{\top}\nabla L(\v_{t-1})$
\STATE $\v_{t}=\v_{t-1} + \eta_{t-1} \s_{t-1}$
\ENDFOR
\STATE \textbf{Output}: $\v_{T}$
  \end{algorithmic}
  \end{minipage}
}
\hfill
\fbox{%
     \begin{minipage}[t]{0.61\textwidth}
  \begin{algorithmic}
    \STATE $\w$-player: 
$\w_t =  \argmin\limits_{\w\in\R^d} \left\langle \delta_{t-1}\alpha_{t-1}\nabla h_{t-1}(\w_{t-1}),\w\right\rangle$\\
\hspace{3.3cm}$+ D_{\frac{1}{2}\|\cdot\|^2}(\w,\w_{t-1})$
  \STATE $\p$-player: $ \p_t=   \argmin\limits_{\p\in\Delta^n}\sum_{i=1}^{t}\alpha_i\ell_{i}(\p)+D_{E}\left(\p,\frac{\mathbf{1}}{n}\right)$
  \STATE 
 {\bf Output:} $\widetilde{\w}_T=\sum_{t=1}^T\alpha_t{\w}_t.$
 
  \end{algorithmic}
  \end{minipage}
}

}
\end{algorithm}

\end{minipage}
\end{figure}

Next, we consider the steepest descent method under a \emph{general} norm $\|\cdot\|$. For a succinct description of this algorithm see~\citet{Convex-Optimization}. For completeness, we have also described this algorithm in the left box of Algorithm \ref{alg:steepet descent:main}. In each iteration $t$, the algorithm first identifies the steepest direction with respect to the norm $\|\cdot\|$ (Step 2). It then adjusts the decision towards this direction using a specific step size $\eta_t$ (Step 3). After $T$ iterations, the algorithm yields the final iteration $\v_T$. In the following, we show that an $\mathcal{O}\left(\frac{\lambda +\log n}{\gamma^2\lambda T}\right)$ $\|\cdot\|$-margin maximization rate can be achieved when the squared-norm, i.e. $\frac{1}{2}\|\cdot\|^2$ is $\lambda$-strongly convex (e.g., $\frac{1}{2}\|\cdot\|_q^2$ is $(q-1)$-strongly convex wrt $\|\cdot\|_q$). The proof of this result provided in Section \ref{appendix:sddddd}. Moreover, we note that the slower $\mathcal{O}\left(\frac{\log n+\log T}{\sqrt{T}}\right)$ of \citet{nacson2019convergence} rate can be recovered as a special case for norms that are not necessarily strongly convex.
\begin{theorem}
\label{thm:steep:acc::main}
Suppose Assumption \ref{ass:only} holds wrt a general norm $\|\cdot\|$, and $\frac{1}{2}\|\cdot\|^2$ is $\lambda$-strongly convex wrt $\|\cdot\|$. Let $\eta_t = \frac{\alpha_t \|\nabla L(\w_t)\|}{L(\w_t)}$, and $\delta_{t-1}=\alpha_{t-1}$. Then the methods in the two boxes of Algorithm \ref{alg:steepet descent:main}  are are equivalent, in the sense that $\v_T=\widetilde{\w}_T$. Moreover, let $\alpha_t=\frac{\lambda}{2}$. Then 
$C_T= \frac{\frac{\lambda}{4}+\log n }{T\lambda}.$
Therefore, when $T\geq\frac{\lambda + 4\log n}{\lambda\gamma^2}$, we have 
$$\frac{\min\limits_{\p\in\Delta^n}\p^{\top}\A{\v_T}}{\|\v_T\|}\geq \gamma - \frac{\lambda + 4\log n}{\gamma^2T\lambda}.$$
\end{theorem}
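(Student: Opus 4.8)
\textbf{Proof proposal for Theorem~\ref{thm:steep:acc::main}.}
The plan is to follow exactly the two-step template advertised in Figure~\ref{fig:framework}: (i) establish the algorithmic equivalence $\v_T=\widetilde{\w}_T$ between the two boxes of Algorithm~\ref{alg:steepet descent:main}, and then (ii) bound the average weighted regret $C_T$ of the two online players and plug into Theorem~\ref{thm:margin}. For step (i), I would first unwind the $\w$-player's update. Since $h_t(\w)=-g(\p_t,\w)=-\p_t^\top\A\w+\tfrac12\|\w\|^2$, we have $\nabla h_{t-1}(\w_{t-1})=-\A^\top\p_{t-1}+$ (a subgradient of $\tfrac12\|\cdot\|^2$ at $\w_{t-1}$); the point is that $\A^\top\p_{t-1}=\nabla L$-type quantity. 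More precisely, I want to show by induction that $\w_t$ (game side) equals $\v_t$ (steepest-descent side) and that $\p_t$ is exactly the normalized-exponential-loss weighting $p_{t,i}\propto \exp(-y^{(i)}\x^{(i)\top}\v_t)$, i.e.\ $\A^\top\p_t = \nabla L(\v_t)/L(\v_t)$. This is why the $\p$-player uses follow-the-regularized-leader with the entropic regularizer $D_E(\p,\mathbf 1/n)$ and cumulative linear losses $\sum_{i\le t}\alpha_i\ell_i(\p)=\sum_{i\le t}\alpha_i\p^\top\A\w_i$ minus the ($\p$-independent) $\Phi$ terms: the FTRL solution is the softmax of $-\sum_{i\le t}\alpha_i\A\w_i$, and with $\alpha_i=\lambda/2$ constant this matches the exponential-loss weights at $\v_t$ up to normalization. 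Given that identification, the $\w$-player update is a mirror/gradient step $\w_t=\w_{t-1}-\delta_{t-1}\alpha_{t-1}\nabla h_{t-1}(\w_{t-1})$ projected through $D_{\frac12\|\cdot\|^2}$, whose minimizer is the linear-map step that, after matching $\delta_{t-1}=\alpha_{t-1}$ and $\eta_t=\alpha_t\|\nabla L(\w_t)\|/L(\w_t)$, reproduces $\v_t=\v_{t-1}+\eta_{t-1}\s_{t-1}$ with $\s_{t-1}$ the unit-norm steepest direction (the normalization by $\|\nabla L\|$ and the dual-norm structure of $\tfrac12\|\cdot\|^2$ convert the gradient step into the $\|\cdot\|$-steepest-descent step). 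I would carry this out as a clean induction on $t$.

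For step (ii), the $\w$-player is \emph{clairvoyant} (optimistic/``FTL$^+$''-style: it sees $h_{t-1}$ and takes a single proximal step), so its weighted regret is bounded — in fact, by strong convexity of $\tfrac12\|\cdot\|^2$ and the standard optimistic-mirror-descent telescoping, $\mathrm{Reg}_T^{\w}$ is controlled by $\sum_t \alpha_t^2\|\nabla h_t-\nabla h_{t-1}\|_*^2/\lambda$ minus a negative stability term; since $\alpha_t=\lambda/2$ is constant and the loss gradients $-\A^\top\p_t$ live in a bounded set (Assumption~\ref{ass:main:ass} gives $\|\A^\top\p\|_*\le 1$), this contributes an $O(\lambda)$-type constant, giving the $\tfrac\lambda4$ in the numerator. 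The $\p$-player runs FTRL with a \emph{fixed} entropic regularizer $D_E(\p,\mathbf 1/n)$ (not a growing one), whose regret against a fixed comparator is at most the regularizer range $\max_\p D_E(\p,\mathbf1/n)=\log n$ plus the usual nonpositive ``be-the-leader'' correction — hence $\mathrm{Reg}_T^{\p}\le \log n$. Therefore $\mathrm{Reg}_T^{\p}+\mathrm{Reg}_T^{\w}\le \log n + \tfrac\lambda4$, and dividing by $\sum_{t=1}^T\alpha_t = T\lambda/2$ would give $C_T \le \tfrac{\log n + \lambda/4}{T\lambda/2}$; I'd reconcile the factor of two with the stated $C_T=\frac{\lambda/4+\log n}{T\lambda}$ by tracking constants carefully (the $\tfrac12$ vs.\ $1$ likely comes from how $\alpha_t$ enters the $\w$-player's single-step bound). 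Finally, with $C_T=\frac{\lambda/4+\log n}{T\lambda}$, the condition $C_T\le\gamma^2/4$ of Theorem~\ref{thm:margin} becomes $T\ge\frac{4(\lambda/4+\log n)}{\lambda\gamma^2}=\frac{\lambda+4\log n}{\lambda\gamma^2}$, and the margin bound $\gamma - 4C_T/\gamma^2 = \gamma-\frac{\lambda+4\log n}{\gamma^2 T\lambda}$ follows immediately.

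The main obstacle I anticipate is step (i), the algorithmic equivalence — specifically, verifying that the single proximal/linearized step of the $\w$-player under the potential $\tfrac12\|\cdot\|^2$ \emph{exactly} (not approximately) reproduces the steepest-descent step for a \emph{general} norm, including the precise step-size matching $\eta_t=\alpha_t\|\nabla L(\w_t)\|/L(\w_t)$. For $q$-norms this is the closed form already recorded in~\eqref{eqn:efffmirror}, but for a general $\lambda$-strongly-convex squared norm one must argue via the first-order optimality condition $\nabla(\tfrac12\|\cdot\|^2)(\w_t)=\nabla(\tfrac12\|\cdot\|^2)(\w_{t-1})-\delta_{t-1}\alpha_{t-1}\nabla h_{t-1}(\w_{t-1})$ and the fact that $\argmin_{\|\s\|\le1}\s^\top\g = -\partial(\tfrac12\|\cdot\|_*^2)(\g)/\|\g\|_*$ type identity linking the steepest direction to the dual map — keeping the normalization bookkeeping consistent between the "unit-norm direction times step size" form and the "raw gradient step" form is where errors creep in. A secondary subtlety is confirming that the $\p$-player's FTRL iterate is genuinely the exponential-loss reweighting of the \emph{current} $\v_t$ and not lagged by one step; this hinges on whether the $\p$-player in this box sees $\ell_t$ (clairvoyant) or $\ell_{t-1}$, and the stated update $\sum_{i=1}^t\alpha_i\ell_i(\p)$ indicates it is clairvoyant, which must be consistent with the $\w$-player being the one that moves first in Protocol~\ref{pro:no-regret for game}. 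Once the equivalence and the two one-line regret bounds are in place, the theorem is a direct corollary of Theorem~\ref{thm:margin}.
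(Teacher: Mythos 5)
Your step (i), the algorithmic equivalence, follows essentially the same route as the paper: identify $\A^\top\p_t$ with the normalized gradient $-\nabla L(\widetilde{\w}_t)/L(\widetilde{\w}_t)$ via the FTRL/softmax correspondence, and translate the $\w$-player's proximal step into the steepest direction via the identity $\|\a\|_*\,\argmax_{\|\s\|\le 1}\s^\top\a=\argmin_\x(-\a^\top\x+\tfrac12\|\x\|^2)$ (the paper states this as Lemma~\ref{lem:steepst and mirror}, and also observes that the OMD step collapses to best response when $\delta_{t-1}\alpha_{t-1}=1$, using BR for the equivalence and OMD for the regret). That part of your plan is sound, although as a detail you have the clairvoyance backwards: in the bottom box, the $\w$-player uses the lagged gradient $\nabla h_{t-1}(\w_{t-1})$ (ordinary OMD), while the $\p$-player minimizes $\sum_{i=1}^{t}\alpha_i\ell_i$, i.e.\ it sees the \emph{current} $\ell_t$ before choosing $\p_t$; the FTRL$^+$ clairvoyant is the $\p$-player, not the $\w$-player.

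The genuine gap is in step (ii). You propose to bound the two regrets \emph{separately}: $\mathrm{Reg}^{\p}_T\le\log n$ (discarding the nonpositive be-the-leader correction) and $\mathrm{Reg}^{\w}_T\approx\sum_t\alpha_t^2\|\nabla h_t-\nabla h_{t-1}\|_*^2/\lambda$, which you assert ``contributes an $O(\lambda)$-type constant.'' This is not true: with $\alpha_t=\lambda/2$ each summand can be of order $\lambda$ (since $\|\nabla h_t-\nabla h_{t-1}\|_*=\|\A^\top(\p_t-\p_{t-1})\|_*\le\|\p_t-\p_{t-1}\|_1\le 2$), so the sum is $O(T\lambda)$ in the worst case. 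Dividing by $\sum_t\alpha_t=T\lambda/2$ then gives $C_T=O(1)$, not $O(1/T)$, and the whole argument collapses — you never reach the regime $C_T\le\gamma^2/4$ needed to invoke Theorem~\ref{thm:margin}. The paper's actual argument relies on a \emph{cross-player cancellation}: the $\w$-player's OMD regret has the positive path-length term $\sum_t\frac{\alpha_t}{\lambda}\|\p_t-\p_{t-1}\|_1^2$, while the $\p$-player's FTRL$^+$ regret has the \emph{negative} stability term $-\sum_t\frac12\|\p_t-\p_{t-1}\|_1^2$ (which you explicitly throw away). Choosing $\alpha_t=\lambda/2$ makes the coefficients $\alpha_t/\lambda=1/2$ match exactly, so the two sums cancel and only the constant $\alpha_T\Phi(\u)\le\lambda/4$ and $\log n$ survive, giving $\mathrm{Reg}^{\p}_T+\mathrm{Reg}^{\w}_T\le\lambda/4+\log n$ and hence $C_T=O(1/T)$. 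Keeping the negative stability term of the clairvoyant player and tuning $\alpha_t$ so it exactly offsets the other player's path-length penalty is the entire mechanism behind the fast rate; without it the proof does not go through.
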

The first part of Theorem \ref{thm:steep:acc::main} elucidates the equivalent online dynamic of the steepest descent algorithm, which is also depicted in the right box of Algorithm \ref{alg:steepet descent:main}. The $\w$-player employs the standard online mirror descent (OMD) algorithm \citep{Intro:Online:Convex}, while the $\p$-player utilizes FTRL$^+$, i.e., $\p_t$ is selected by minimizing the cumulative loss observed so far, coupled with a regularization term. The crux of our algorithm equivalence analysis lies in evaluating the output of the $\w$-player. For this, we initially prove that given $\delta_{t}=\frac{1}{\alpha_t}$, the OMD algorithm condenses to best-response (BR), that is, $\w_t=\argmin_{\w\in\R^d}\alpha_th_{t-1}(\w)$. We then prove that BR's output coincides with the steepest descent direction. The second part of Theorem  \ref{thm:steep:acc::main} shows that the average regret of this online learning dynamic is $\mathcal{O}\left(\frac{\lambda +\log n}{\gamma^2\lambda T}\right)$, which leads to the corresponding fast margin maximization/small direction error. We note that the favorable average regret is made possible by allowing the two players to play against each other, rather than plugging in worst-case regret bounds.
\subsection{Even Faster Rates with Accelerated Generic Methods}

\begin{figure}
\centering
\begin{minipage}{0.9\textwidth}
\begin{algorithm}[H]
{
\caption{Accelerated Methods [Recall $\ell_t(\p)=g(\p,\w_t),$ and $h_t(\w)=-g(\p_t,\w)$]}
\fbox{
  \begin{minipage}[t]{0.4\textwidth}
  \begin{algorithmic}[1]
  \label{alg:accel}
\FOR{$t=1,\dots,T$}
\STATE $\v_t = \beta_{t,1}\widetilde{\v}_{t-1} + \beta'_{t,1}{\z}_{t-1}$
\STATE $\nabla \Phi(\z_t) = \nabla \Phi(\z_{t-1})-\eta_t \nabla L(\v_t)$ 
\STATE $\widetilde{\v}_t=\beta_{t,2}\widetilde{\v}_{t-1}+\beta'_{t,2}\z_t$
\ENDFOR
\STATE \textbf{Output}: $\widetilde{\v}_{T}$
\vspace{0.05cm}
  \end{algorithmic}
  \end{minipage}
}
\hfill
\fbox{
  \begin{minipage}[t]{0.53\textwidth}
  \begin{algorithmic}[1]
\FOR{$t=1,\dots,T$}
\STATE $\g_t = \beta_{t,3}\g_{t-1} + \beta'_{t,3}\nabla L(\beta_{t,4}\v_{t-1}+\beta'_{t,4}\s_{t-1})$
\STATE $\s_t = \argmin_{\|\s\|\leq 1}\s^{\top} \g_t$
\STATE ${\v}_t={\v}_{t-1}+\eta_t\s_t$
\ENDFOR
\STATE \textbf{Output}: ${\v}_{T}$

  \end{algorithmic}
  \end{minipage}
}

\hfill

\fbox{
  \begin{minipage}{0.969\textwidth}
  \begin{algorithmic}
    \STATE $\p$-player: 
$\p_t=\argmin_{\p\in\Delta^n}
  \sum_{i=1}^{t-1}\alpha_i\ell_i(\p)+ \alpha_t \ell_{t-1}(\p) + \frac{1}{c}D_{E}(\p,\p_0)$
  \STATE $\w$-player: $\w_t=\argmin_{\w\in\R^d}\sum_{i=1}^{t}\alpha_i h_i(\w)$
  \STATE 
 {\bf Output:} $\widetilde{\w}_T=\sum_{t=1}^T\alpha_t\w_t$
  \end{algorithmic}
  \end{minipage}
}
}
\end{algorithm}
\end{minipage}
\end{figure}
In the preceding subsections, we showed that, with suitable step sizes, steepest descent and average mirror descent can achieve an $\mathcal{O}\left(\frac{\log n\log T}{T}\right)$ margin maximization rate. We now aim to derive even faster rates using two approaches, as illustrated in the top two boxes of Algorithm \ref{alg:accel}. The left box introduces a Nesterov-acceleration-based mirror descent \citep{nesterov1988approach,tseng2008accelerated}: In each iteration $t$, the algorithm initially performs an extra update to yield $\v_t$ (Step 2), then executes a mirror descent step with the gradient at $\v_t$ (Step 3), and finally calculates a moving average (Step 4).
On the other hand, the right box depicts a momentum-based steepest descent algorithm: In each iteration, the method maintains a momentum term $\g_t$ with an additional gradient (Step 2), then identifies the steepest direction with respect to $\g_t$ (Step 3), and applies this direction to update the decision (Step 4). At first glance, these two algorithms appear markedly different. However, we show that with appropriately chosen parameters, they are actually equivalent, in the sense that they both correspond to the online dynamic in the bottom box of Algorithm \ref{alg:accel}. More specifically, we provide the following theoretical guarantee, which is proved in Section \ref{sec:thm:accel}.
\begin{theorem}
\label{thm:accel}
Suppose Assumption \ref{ass:only} holds wrt a general norm $\|\cdot\|$, and $\frac{1}{2}\|\cdot\|^2$ is $\lambda$-strongly convex wrt $\|\cdot\|$. For the left box, let $\beta_{t,1}=\frac{\lambda}{4}$, $\beta'_{t,1}=\frac{\lambda}{2(t-1)}$, 
$\beta_{t,2}=1$, $\beta'_{t,2}=\frac{2}{t+1}$, and  $\eta_t=\frac{t}{L(\v_t)}$. For the right box, let $\beta_{t,3}=\frac{t-1}{t+1}$, $\beta_{t,4}=\frac{\lambda}{4}$, $\beta'_{t,4}=\frac{\lambda t\|\g_{t-1}\|_*}{4}$,
$\beta'_{t,3}=\frac{2}{(t+1)L(\beta_{t,4}\v_{t-1}+\beta'_{t,4}\s_{t-1})}$, and $\eta_t=t\|\g_t\|_*$. For the bottom box, let $c=\frac{\lambda}{4}$, $\alpha_t=t$. Then all three methods in Algorithm \ref{alg:accel} are identical, in the sense that $\widetilde{\v}_T=\v_T=\widetilde{\w}_T$. Moreover, when 
$T\geq \frac{4\sqrt{2\log n}}{\sqrt{\lambda}\gamma}$, we have $$\frac{\min_{\p\in\Delta^n}\p^{\top}A\widetilde{\w}_T}{\|\widetilde{\w}_T\|}\geq \gamma - \frac{32\log n}{\gamma^2T^2\lambda}.$$
\end{theorem}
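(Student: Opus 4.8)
The plan is to follow the same two-stage template already used for Theorems \ref{thm:mirror margin}, \ref{cor:u2222} and \ref{thm:steep:acc::main}: first establish the \emph{algorithm equivalence} among the three boxes of Algorithm \ref{alg:accel}, and then bound the \emph{average weighted regret} $C_T$ and invoke Theorem \ref{thm:margin}. For the equivalence, I would work from the online dynamic in the bottom box outward. The $\w$-player runs \textsf{FTL}$^+$ on the losses $h_j(\w) = -g(\p_j,\w) = -\p_j^\top \A \w + \tfrac12\|\w\|^2$, so $\w_t = \argmin_{\w}\sum_{j\le t}\alpha_j(-\p_j^\top\A\w+\tfrac12\|\w\|^2)$, which has the closed form $\w_t = (\sum_{j\le t}\alpha_j \A^\top\p_j)/(\sum_{j\le t}\alpha_j)$; equivalently $\widetilde\w_t = \sum_{j\le t}\alpha_j\w_j$ satisfies a simple recursion. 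The $\p$-player runs an optimistic/one-step-look-ahead FTRL with the entropic regularizer $\tfrac1c D_E(\p,\p_0)$ on the losses $\ell_i(\p) = g(\w_i,\p) = \p^\top\A\w_i - \tfrac12\|\w_i\|^2$ (the quadratic term is $\p$-independent), so $\p_t$ is an exponential-weights update on the cumulative plus one extra copy of the last gradient $\A\w_{t-1}$. Then I would show that substituting these updates into the definition $\widetilde\w_T = \sum_t\alpha_t\w_t$ reproduces exactly the Nesterov-mirror-descent recursion of the left box (identifying $\z_t$ with the mirror-space iterate $\nabla\Phi^{-1}$ of the $\w$-player's aggregate, $\v_t$ with the extrapolated point, and $\widetilde\v_t$ with the running weighted average), using that $-\nabla_\w h_t$ evaluated along the trajectory equals $\A^\top\p_t - \w$, and that $\A^\top\p_t$ is, up to normalization, $\nabla L(\v_t)$ because the exponential-weights $\p_t$ are precisely the loss-rescaled sample weights $\exp(-\v_t^\top\x^{(i)})/(\sum_j \exp(-\v_t^\top\x^{(j)}))$ that appear in $\nabla L$ of the exponential loss. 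The same aggregate, re-expressed in the steepest-descent primal via $\s_t = \argmin_{\|\s\|\le1}\s^\top\g_t$ and the momentum recursion on $\g_t$, gives the right box; this is where the specific choices $\beta_{t,3}=\frac{t-1}{t+1}$, $\eta_t = t\|\g_t\|_*$, etc., are forced. I would verify the parameter dictionary by induction on $t$, checking the base case and that one step of each of the three updates preserves the correspondence.

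For the regret bound, the $\w$-player uses \textsf{FTL}$^+$ (clairvoyant: it sees $h_t$ before choosing $\w_t$), so $\mathrm{Reg}_T^\w \le 0$ by the standard be-the-leader argument, exactly as in the earlier proofs. The $\p$-player uses optimistic FTRL with an entropic regularizer of strength $1/c = 4/\lambda$; its regret against any $\p\in\Delta^n$ is bounded by the regularizer range $\frac1c D_E(\p,\p_0) \le \frac1c\log n = \frac{4\log n}{\lambda}$ plus a sum of "prediction error" terms of the form $\alpha_t\langle \A\w_{t-1}-\A\w_t,\p_t-\p_{t+1}\rangle$ minus the FTRL stability terms; because the look-ahead predicts the true next gradient and the iterates $\w_t$ of \textsf{FTL}$^+$ are slowly varying (a weighted average changing by $O(1/t)$), these cross terms telescope/cancel and leave only the constant $\frac{4\log n}{\lambda}$, so $\mathrm{Reg}_T^\p \le \frac{4\log n}{\lambda}$ up to lower-order terms. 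With $\alpha_t = t$ we have $\sum_{t\le T}\alpha_t = \Theta(T^2)$, hence $C_T = (\mathrm{Reg}_T^\p + \mathrm{Reg}_T^\w)/\sum_t\alpha_t \le \frac{8\log n}{\lambda T(T+1)} \le \frac{8\log n}{\lambda T^2}$. Plugging into Theorem \ref{thm:margin}: the condition $C_T \le \gamma^2/4$ becomes $T^2 \ge 32\log n/(\lambda\gamma^2)$, i.e. $T \ge \frac{4\sqrt{2\log n}}{\sqrt\lambda\,\gamma}$, and the margin bound gives $\widetilde\gamma(\widetilde\w_T) \ge \gamma - 4C_T/\gamma^2 \ge \gamma - \frac{32\log n}{\gamma^2\lambda T^2}$, as claimed.

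The main obstacle I anticipate is not the regret bookkeeping — which mirrors the earlier theorems — but the triple algorithm equivalence: reconciling the Nesterov-mirror-descent form, the momentum-steepest-descent form, and the online dynamic simultaneously. In particular, the steepest-descent box carries a nonstandard momentum coefficient $\beta_{t,3} = \frac{t-1}{t+1}$ and a step size $\eta_t = t\|\g_t\|_*$ that must be shown to reproduce, after normalization by $\|\g_t\|_*$, the same direction $\s_t$ as the mirror update produces in the $\|\cdot\|^2$-geometry; this uses the duality $\nabla\Phi = \nabla(\tfrac12\|\cdot\|^2)$ maps the primal ball to the dual ball and that $\argmin_{\|\s\|\le1}\s^\top\g = -\nabla\Phi^*(\g)/\|\nabla\Phi^*(\g)\|$-type identities, but only cleanly when $\tfrac12\|\cdot\|^2$ is $\lambda$-strongly convex, which is exactly the hypothesis. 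I would isolate this as a lemma ("one step of each of the three updates, under the stated parameter dictionary, preserves $\widetilde\v_t = \v_t = \widetilde\w_t$ and the auxiliary correspondences") and prove it by induction, since all the delicate constant-chasing lives there; the convergence-rate conclusion then follows immediately from Theorem \ref{thm:margin} with essentially no further work.
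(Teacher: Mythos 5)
Your equivalence sketch is broadly in line with the paper's: work outward from the bottom-box online dynamic, use the closed form $\w_t=(\sum_{j\le t}\alpha_j\A^\top\p_j)/(\sum_{j\le t}\alpha_j)$ from $\textsf{FTL}^+$, identify $\A^\top\p_t$ with $-\nabla L(\cdot)/L(\cdot)$ via the exponential-weights form of the optimistic FTRL step, and re-express the aggregate in the steepest-descent primal via the scaling identity (the paper isolates this as Lemma~\ref{lem:steepst and mirror}). That part is fine.

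The regret argument, however, has a genuine gap. You claim $\mathrm{Reg}_T^{\w}\le 0$ (true but loose) and then assert $\mathrm{Reg}_T^{\p}\le\frac{4\log n}{\lambda}$ ``up to lower-order terms'' because the look-ahead predicts the next gradient and the $\textsf{FTL}^+$ iterates are slowly varying. This is not what happens, and even made rigorous it does not give the claimed rate. Optimistic FTRL on the $\p$-player gives $\mathrm{Reg}_T^{\p}\le\frac{4\log n}{\lambda}+\frac{\lambda}{8}\sum_t t^2\|\w_t-\w_{t-1}\|^2$; the stability term does not vanish or telescope inside the $\p$-player's regret alone. The ``slowly varying'' estimate $\|\w_t-\w_{t-1}\|=O(1/t)$ only bounds this term by $O(T)$, which after dividing by $\sum_t\alpha_t=\Theta(T^2)$ yields $C_T=O\bigl(\tfrac{\log n}{T^2}+\tfrac{1}{T}\bigr)$ --- an $O(1/T)$ rate, not the claimed $O(\log n/T^2)$.

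The missing idea is that the cancellation is a \emph{cross}-cancellation between the two players' regrets, driven by the $\lambda$-strong convexity of $\frac12\|\cdot\|^2$: $h_t$ is $\lambda$-strongly convex, so $\textsf{FTL}^+$ yields a strictly negative regret of the form $\mathrm{Reg}_T^{\w}\le -\sum_t\frac{\lambda t(t-1)}{4}\|\w_t-\w_{t-1}\|^2$ (cf.\ Lemma~3 of \citet{wang2021no}), and for $t\ge2$ one has $\frac{t^2}{8}\le\frac{t(t-1)}{4}$, so the $\p$-player's positive movement terms are absorbed and only $\frac{4\log n}{\lambda}$ survives. Without invoking this negative-regret term --- i.e.\ by treating the $\w$-player's regret as merely nonpositive --- you cannot recover the $O(\log n/(\lambda T^2))$ bound, and hence neither the stated condition $T\ge\frac{4\sqrt{2\log n}}{\sqrt\lambda\gamma}$ nor the final margin estimate.
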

Theorem \ref{thm:accel} reveals that the two strategies implemented in Algorithm \ref{alg:accel} yield an optimal $\mathcal{O}\left(\frac{\log n}{[\gamma^2T^2]}\right)$ rate.  It is worth noting that a similar online dynamic to the one detailed in the bottom box of Algorithm \ref{alg:accel} was also considered by \citet[Algorithm 5,][]{wang2022accelerated}. Nonetheless, there are some crucial distinctions: 1) Their work only demonstrated that this dynamic could achieve a positive margin, leaving open questions regarding whether the margin can be maximized (i.e., converge to $\gamma$), and if so, what the margin maximization rate would be; 2) They only presented the online dynamic, without its equivalent optimization form.

\section{Implicit Bias in Adversarial Training}
\label{section: game}
In this section, we broaden the scope of the proposed framework in \eqref{eqn:defn:game} to  explore the implicit bias of first-order methods in adversarial training, and provide several state-of-the-art results on the margin maximization rate. 

\subsection{Basic Setting}
We focus on the empirical risk minimization problem as defined in Equation \eqref{eqn:the ERM function}. Our interest lies in examining the optimization trajectory of the following standard Adversarial Training with $\ell_s$-perturbation ($\ell_s$-AT) \cite{goodfellow2014explaining,madry2018towards} (where $s\geq 1$ is a constant chosen by the optimizer). For all training rounds $t\in[T]$, the algorithm is run as below:
\begin{equation}
\begin{cases}
\label{eqn:general:AT}
\textstyle
  &    \hspace{-5pt}\forall i\in[n], \dl^{(i)}_t\leftarrow\argmax\limits_{\|\dl\|_s\leq\epsilon} \left[\exp(-y^{(i)}(\x^{(i)}+\dl)^{\top}\v_{t-1})\right];\\
      &   \hspace{-5pt}  \forall i\in[n], \widetilde{\x}_t^{(i)}\leftarrow\x^{(i)}+\dl^{(i)}_t;\\
      &   \hspace{-5pt} 
 \widetilde{\S}_t\leftarrow \{(\widetilde{\x}_t^{(i)},y^{(i)})\}_{i=1}^n;\\
      &   \hspace{-5pt} 
 \v_t=\mathcal{A}\left(\v_{t-1},\nabla L(\v_{t-1};\widetilde{\S}_{t})\right);
    \end{cases}        
\end{equation}
To be more specific, in each round $t$ of this procedure,  the adversary first generates a noise $\dl^{(i)}_t$ for each data point $i\in[n]$ by maximizing the loss corresponding to the learner's prediction $\v_{t-1}$, and adds it to the corresponding feature vector. The added noise $\dl^{(i)}_t$ is ``small" in the sense that $\|\dl^{(i)}_t\|_s \leq \epsilon$. This perturbed data forms a new data set $\widetilde{\mathcal{S}}_t$. After that, the learner updates the decision by performing some first-order method $\mathcal{A}$ on $\widetilde{\mathcal{S}}_t$. Given the dataset $\S=\{(\mathbf{x}^{(i)},y^{(i)})\}_{i=1}^n$, 
and some noise tolerance $\epsilon > 0$, the \textit{$(2,s)$-mix-norm margin}, or \emph{robust margin} of $\S$ \cite{charles2019convergence,li2020implicit} is defined as 
$$\textstyle \gamma_{2,s} := \min\limits_{\p\in\Delta^n}\min\limits_{\|\dl^{(i)}\|_s\leq \epsilon, \forall i\in[n]} \frac{\sum_{i=1}^np_iy^{(i)}(\x^{(i)}+\dl^{(i)})^{\top}\w}{\|\w\|_2}.$$
We say that a dataset is \textit{linearly separable with $(2,s)$-mix-norm margin $\gamma_{2,s}$} 
when the associated mix-norm margin $\gamma_{2,s}$ is strictly positive. We will use $\w_{2,s}^*$ to refer to any maximizer of the form
$$\textstyle \argmax\limits_{\|\w\|_2\leq 1}\min\limits_{\p\in\Delta^n}\min\limits_{\|\dl^{(i)}\|_s\leq \epsilon, \forall i\in[n]} \sum_{i=1}^np_iy^{(i)}(\x^{(i)}+\dl^{(i)})^{\top}\w.$$
Note that the $(2,s)$-mix-norm max-margin classifier provides the maximal $\ell_2$-normalized margin when perturbed with $\ell_s$-norm bounded noise, which is a natural robust classifier against $\ell_s$-perturbation. Moreover, previous work has shown that GD in $\ell_s$-AT will asymptotically converge to this particular classifier~\cite{li2020implicit}. 
Finally, we have the following relationship between the mix-norm margin and the original margin. 
\begin{lemma}[Condition for $\gamma_{2,s}>0$]
\label{lemmmmm:1}
Let $\gamma_2$ be the $\ell_2$-margin of $\w^*_{\|\cdot\|_2}$. 
If $\gamma_2>0$ and $\epsilon< \frac{\gamma_2}{\|\w_{\|\cdot\|_2}^*\|_r}$, then $\gamma_{2,s}>0$, where $\|\cdot\|_r$ is the dual norm of the $\|\cdot\|_s$-norm.
\end{lemma}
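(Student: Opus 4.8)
The plan is to lower-bound $\gamma_{2,s}$ by exhibiting a single feasible robust classifier — the clean $\ell_2$-maximal margin classifier $\wnt$ — and showing it retains a strictly positive worst-case perturbed margin. Reading the definition of $\gamma_{2,s}$ with the outer maximization over $\w$ made explicit (as in the displayed expression for $\w_{2,s}^*$), and using $\|\wnt\|_2=1$ from Assumption~\ref{ass:only}, we obtain
\begin{equation*}
\gamma_{2,s}\;\ge\;\min_{\p\in\Delta^n}\;\min_{\|\dl^{(i)}\|_s\le\epsilon,\,\forall i\in[n]}\;\sum_{i=1}^n p_i\,y^{(i)}\bigl(\x^{(i)}+\dl^{(i)}\bigr)^{\top}\wnt,
\end{equation*}
so it suffices to prove the right-hand side is positive.

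Next I would bound each summand from below, splitting it into a clean term and a perturbation term. For the clean term, Assumption~\ref{ass:only} with $\|\cdot\|=\|\cdot\|_2$ gives $y^{(i)}\x^{(i)\top}\wnt\ge\gamma_2$ for every $i\in[n]$; here I use that the unit-ball constraint defining $\wnt$ is saturated (scaling $\w$ scales $\min_i y^{(i)}\x^{(i)\top}\w$ linearly when $\gamma_2>0$, forcing $\|\wnt\|_2=1$), so the clean margin constant is exactly $\gamma_2$. For the perturbation term, I would apply H\"older's inequality with the conjugate pair $(s,r)$, $\tfrac{1}{s}+\tfrac{1}{r}=1$: since $\|\cdot\|_r$ is the dual norm of $\|\cdot\|_s$ and $\|\dl^{(i)}\|_s\le\epsilon$,
\begin{equation*}
\bigl|\,y^{(i)}\dl^{(i)\top}\wnt\,\bigr|\;=\;\bigl|\dl^{(i)\top}\wnt\bigr|\;\le\;\|\dl^{(i)}\|_s\,\|\wnt\|_r\;\le\;\epsilon\,\|\wnt\|_r.
\end{equation*}
Combining the two pieces, $y^{(i)}\bigl(\x^{(i)}+\dl^{(i)}\bigr)^{\top}\wnt\ge\gamma_2-\epsilon\|\wnt\|_r$ for every $i$ and every admissible perturbation.

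Finally I would average over $i$ against an arbitrary $\p\in\Delta^n$: since the $p_i$ are nonnegative and sum to one, the convex combination is also bounded below by $\gamma_2-\epsilon\|\wnt\|_r$, uniformly over $\p$ and over the perturbations; taking the two inner minima preserves this bound, giving $\gamma_{2,s}\ge\gamma_2-\epsilon\|\wnt\|_r$, which is strictly positive exactly under the hypothesis $\epsilon<\gamma_2/\|\wnt\|_r$. There is no real obstacle in this argument; the only points that need care are (i) checking $\|\wnt\|_2=1$ so that the clean-margin constant is exactly $\gamma_2$, and (ii) invoking the correct $\ell_s$--$\ell_r$ duality rather than, say, an $\ell_2$ bound on the perturbation — the mismatch between the perturbation geometry ($\ell_s$) and the classifier geometry ($\ell_2$) is precisely what produces the $\|\wnt\|_r$ factor in the threshold on $\epsilon$.
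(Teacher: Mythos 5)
Your proof is correct and takes essentially the same approach the paper sketches: it exhibits $\wnt$ as a feasible witness, bounds the perturbation term via $\ell_s$--$\ell_r$ H\"older duality to get $\gamma_{2,s}\ge\gamma_2-\epsilon\|\wnt\|_r$, and concludes positivity under the hypothesis on $\epsilon$. The paper gives only the one-line idea ("note that $\w^*_{\|\cdot\|_2}$ achieves a positive mix-norm margin, so the mix-norm margin of $\w_{2,s}^*$ is at least as large"); you have simply filled in the H\"older step and the $\|\wnt\|_2=1$ normalization that the paper leaves implicit.
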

Lemma \ref{lemmmmm:1} can be proven by simply noting that $\w_{\|\cdot\|_2}^*$ achieves a positive mix-norm margin in this case, and thus the mix-norm margin of $\w_{2,s}^*$ should be even larger. 
\subsection{Understanding $\ell_s$-AT Via The Game Framework}

\begin{figure}
\centering
\begin{minipage}{0.9\textwidth}
\begin{protocoll}[H]
\scalebox{0.95}{
\begin{minipage}{0.99\textwidth}
\caption{No-regret dynamics with weighted OCO for solving $g'(\p,\w,\{\dl^{(i)}\}_{i=1}^n)$}

\begin{algorithmic}[1]
\label{pro:no-regret for game:2}
\STATE \textbf{Initialization}: $\textsf{OL}^{\w}$, $\textsf{OL}^{\p}$, $\{\textsf{OL}^{\dl^{(i)}}\}_{i=1}^n$.
\FOR{$t=1,\dots,T$}
\STATE $\w_t\leftarrow \textsf{OL}^{\w}$
\STATE $\textsf{OL}^{\p}\leftarrow\alpha_t,\ell_t(\cdot)$\qquad \qquad // $  \ell_t(\p) = \sum_{i=1}^np_{i}y^{(i)}\x^{(i)\top}\w_t$\\

\STATE 
$\forall i\in[n]$,  $\textsf{OL}^{\dl^{(i)}}\leftarrow\alpha_t,s^{(i)}_t(\cdot)$ \quad
// $ s^{(i)}_t(\dl^{(i)}) = y^{(i)}\dl^{(i)\top}\w_t, \forall i\in[n] $

\STATE $\p_t\leftarrow \textsf{OL}^{\p}$, 
$\forall i\in[n]$,   $\dl_{t}^{(i)}\leftarrow\textsf{OL}^{\dl^{(i)}}$
\STATE $\textsf{OL}^{\w}\leftarrow\alpha_t,h_t(\cdot)$\qquad \quad  // $h_t(\cdot)=-g'\left(\w,\p_t,\{\dl_t^{(i)}\}_{i=1}^n\right)$ 
\ENDFOR
\STATE \textbf{Output}: $\widetilde{\w}_T\leftarrow\sum_{t=1}^{T}\alpha_t\w_t~.$
\end{algorithmic}
\end{minipage}
}
\end{protocoll}
\end{minipage}
\end{figure}
\noindent To accommodate the extra perturbation process, we extend the two-player game in \eqref{eqn:defn:game} to a multi-player game, given by 
\begin{equation}
\label{eqn:the general game}
\max\limits_{\w\in\R^d}\min\limits_{\p\in\Delta^n}\min\limits_{\substack{\|\mathbf{\dl}^{(i)}\|_s\leq \epsilon, \\ \forall i\in[n]}}g'(\p,\w,\{\dl^{(i)}\}_{i=1}^n)=\sum_{i=1}^np_{i}y^{(i)}\x^{(i)\top}\w + \frac{1}{n}\sum_{i=1}^ny^{(i)}\dl^{(i)\top}\w -\frac{1}{2}\|\w\|_2^2~,
\end{equation}
and the corresponding online dynamic is presented in Protocol \ref{pro:no-regret for game:2}. Compared with Protocol  \ref{pro:no-regret for game}, the main difference is that there are $n$ extra $\dl^{(i)}$-players (Steps 5-7) that pick the adversarial noise. To be more specific, after the $\w$-player makes the decision, the $\p$-player and the $\dl^{(i)}$-players observe their loss functions $\ell_t(\p)$ and $s_t^{(i)}(\dl^{(i)})$, 
along with the weight $\alpha_t>0$. After that, these players pick their decision $\p_t$ and $\dl^{(i)}_t$ based on the corresponding online algorithms 
$\textsf{OL}^{\p}$ and  $\textsf{OL}^{\dl^{(i)}}$ they apply. Finally, the $\w$-player observes the weight $\alpha_t$ and its loss $h_t(\cdot)$. Applying Protocol \ref{pro:no-regret for game:2} to solve  \eqref{eqn:the general game} yields the following conclusion analog to Theorem \ref{thm:margin}. The proof is in Section \ref{proof:Theorem:77}.
\begin{theorem}
\label{thm:main:normalized margin}
Suppose Assumption \ref{ass:main:ass} holds with respect to the $\ell_2$-norm, and $\S$ is linearly separable with $(2,s)$-mix-norm margin $\gamma_{2,s}$. Then, applying  Protocol \ref{pro:no-regret for game:2} to solve game \eqref{eqn:the general game} with noise level $\epsilon\in\left[0,\frac{\gamma_2}{\|\w_{\|\cdot\|_2}^*\|_r}\right)$ ensures 
\begin{align}
\min\limits_{\p\in\Delta^n}\min\limits_{\|\dl^{(i)}\|_s\leq \epsilon, \forall i\in[n]} \sum_{i=1}^np_iy^{(i)}(\x^{(i)}+\dl^{(i)})^{\top}\frac{\widetilde{\w}_T}{\|\widetilde{\w}_T\|_2}
\geq  \gamma_{2,s} - \min\left\{\frac{4 C_T}{\gamma^2_{2,s}},\frac{\sum_{t=1}^T\alpha_t}{\|\widetilde{\w}_T\|_2}C_T\right\},
\label{eqn:thm:1:main equality}
\end{align}
and 
\begin{equation}
\label{eqn:theorem 111121}
    \left\|\frac{\widetilde{\w}_T}{\|\widetilde{\w}_T\|}-\w_{2,s}^* \right\|_2\leq   \min\left(\frac{\sum_{t}\alpha_t}{\|\widetilde{\w}_T\|_2},\frac{4}{\gamma_{2,s}^2}\right){\sqrt{8C_T}},
\end{equation}
where $C_T:=\frac{\emph{Reg}_T^{\w}+\emph{Reg}_T^{\p}+\frac{1}{n}\sum_{i=1}^n\emph{Reg}^{\dl^{(i)}}_T}{\sum_{t=1}^T\alpha_t}.$
\end{theorem}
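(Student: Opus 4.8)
The plan is to reduce the $(n{+}2)$-player game in~\eqref{eqn:the general game} to an effective two-player convex--concave game and then reuse the argument of Section~\ref{appendix:Theorem 1} almost verbatim. For each fixed $\w$, since $\min_{\|\dl^{(i)}\|_s\le\epsilon}y^{(i)}\dl^{(i)\top}\w=-\epsilon\|\w\|_r$ for every $i$ (with $\|\cdot\|_r$ the dual of $\|\cdot\|_s$) and $\sum_{i}p_i=\tfrac1n\sum_i 1=1$, the two inner minimizations collapse to
$$\min_{\p\in\Delta^n}\ \min_{\|\dl^{(i)}\|_s\le\epsilon\,\forall i}\ g'\bigl(\p,\w,\{\dl^{(i)}\}_{i=1}^n\bigr)\;=\;m'(\w):=M(\w)-\tfrac12\|\w\|_2^2,\qquad M(\w):=\min_{\p\in\Delta^n}\textstyle\sum_{i=1}^n p_i y^{(i)}\x^{(i)\top}\w\,-\,\epsilon\|\w\|_r.$$
Here $M$ is the robust (unnormalized) margin of $\S$: it is concave (a pointwise minimum of linear maps minus a norm), positively $1$-homogeneous, and by definition of $\w_{2,s}^*$ and $\gamma_{2,s}$ it satisfies $M(c\,\w_{2,s}^*)=c\,\gamma_{2,s}$ for $c\ge0$ and $\gamma_{2,s}=\max_{\|\w\|_2\le1}M(\w)$. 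Consequently $m'$ is $1$-strongly concave w.r.t.\ $\|\cdot\|_2$, is maximized at $\gamma_{2,s}\,\w_{2,s}^*$ with value $\gamma_{2,s}^2/2$, and the left-hand side of~\eqref{eqn:thm:1:main equality} equals $M(\widetilde\w_T)/\|\widetilde\w_T\|_2$. Thus the setup is structurally identical to the game~\eqref{eqn:defn:game} with $\min_\p\p^\top\A\w$ replaced by $M(\w)$ and $\|\cdot\|=\|\cdot\|_2$.

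Second, I would prove the multi-player analogue of Lemma~\ref{lem:margin:gap}: for every $\w\in\R^d$, $m'(\w)-m'(\overline{\w}_T)\le C_T$ with $\overline{\w}_T=\widetilde\w_T/\sum_t\alpha_t$. The idea is to bundle the $\p$-player together with the $n$ $\dl^{(i)}$-players into a single minimizing ``meta-player'' on the product set $\Delta^n\times\prod_i\{\|\dl^{(i)}\|_s\le\epsilon\}$, playing against the $\w$-player in the two-player game $\max_\w m'(\w)$. In Protocol~\ref{pro:no-regret for game:2} the $\w$-player moves first and the bundled player observes its loss before responding, which is exactly the move order required by Lemma~\ref{lem:margin:gap}, whose proof (due to \citet{abernethy2018faster}) only uses convexity in the min-variable and concavity in the max-variable. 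The bundled player's weighted loss $\alpha_t\bigl[\ell_t(\p)+\tfrac1n\sum_i s^{(i)}_t(\dl^{(i)})\bigr]$ (the $\w$-only term $-\tfrac12\|\w_t\|_2^2$ is a constant and drops out of regret) is separable across its blocks, so its weighted regret is exactly $\mathrm{Reg}_T^\p+\tfrac1n\sum_i\mathrm{Reg}_T^{\dl^{(i)}}$; adding $\mathrm{Reg}_T^\w$ and dividing by $\sum_t\alpha_t$ recovers $C_T$. This is the only genuinely new step, and it is the one where the specific $\tfrac1n$-weighting of the perturbation term in~\eqref{eqn:the general game} matters: it is exactly what makes the bundled regret decompose into the $\tfrac1n$-weighted sum appearing in the definition of $C_T$.

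Given these two facts, the rest is a transcription of Section~\ref{appendix:Theorem 1}. Plugging $\w=\|\overline{\w}_T\|_2\,\w_{2,s}^*$ into $m'(\w)-m'(\overline{\w}_T)\le C_T$ and using $1$-homogeneity of $M$ gives $\gamma_{2,s}\|\overline{\w}_T\|_2-M(\overline{\w}_T)\le C_T$, hence $M(\widetilde\w_T)/\|\widetilde\w_T\|_2\ge\gamma_{2,s}-C_T\sum_t\alpha_t/\|\widetilde\w_T\|_2$ by scale-invariance of the ratio --- this is the second term of the $\min$ in~\eqref{eqn:thm:1:main equality}. Plugging instead $\w=\gamma_{2,s}\w_{2,s}^*$ and multiplying through by $\sum_t\alpha_t$ yields $M(\widetilde\w_T)\ge\tfrac12\|\widetilde\w_T\|_2^2/\sum_t\alpha_t+\tfrac{\gamma_{2,s}^2}{2}\sum_t\alpha_t-C_T\sum_t\alpha_t\ge\tfrac{\gamma_{2,s}^2}{2}\sum_t\alpha_t-C_T\sum_t\alpha_t$; combined with $M(\widetilde\w_T)\le\min_i y^{(i)}\x^{(i)\top}\widetilde\w_T\le\|\widetilde\w_T\|_2$ (which uses $\epsilon\|\widetilde\w_T\|_r\ge0$ and $\|\x^{(i)}\|_2\le1$ from Assumption~\ref{ass:main:ass}), this forces $\|\widetilde\w_T\|_2\ge\tfrac{\gamma_{2,s}^2}{4}\sum_t\alpha_t$ whenever $C_T\le\gamma_{2,s}^2/4$, turning the previous bound into $M(\widetilde\w_T)/\|\widetilde\w_T\|_2\ge\gamma_{2,s}-4C_T/\gamma_{2,s}^2$; taking the smaller of the two bounds gives~\eqref{eqn:thm:1:main equality}. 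For~\eqref{eqn:theorem 111121}, $1$-strong concavity of $m'$ and the gap lemma give $\tfrac12\|\overline{\w}_T-\gamma_{2,s}\w_{2,s}^*\|_2^2\le C_T$, and then the triangle-inequality chain used for~\eqref{eqn:appb:11111}, combined with the same two lower bounds on $\|\overline{\w}_T\|_2$, delivers the factor $\min\{\sum_t\alpha_t/\|\widetilde\w_T\|_2,\ 4/\gamma_{2,s}^2\}\sqrt{8C_T}$. The condition $\epsilon\in[0,\gamma_2/\|\w_{\|\cdot\|_2}^*\|_r)$ is used only through Lemma~\ref{lemmmmm:1} to ensure $\gamma_{2,s}>0$.

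The main obstacle I anticipate is precisely the second step: setting up the bundled two-player game so that (i) the inner minimization collapses to $M(\w)-\tfrac12\|\w\|_2^2$ independently of $\p$, and (ii) Lemma~\ref{lem:margin:gap} applies with bundled regret equal to the weighted sum in $C_T$. Once this reduction is fixed, everything else follows the clean-data proof almost line for line, the only remaining care being to carry through the two alternative lower bounds on $\|\widetilde\w_T\|_2$ that generate the two terms inside each $\min$.
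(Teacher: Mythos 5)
Your proposal is correct and follows essentially the same route as the paper: both first collapse the inner minimizations to show $m(\w)$ equals the robust margin minus $\tfrac12\|\w\|_2^2$, then invoke the multi-player analogue of Lemma~\ref{lem:margin:gap} (the paper's Lemma~\ref{lem:bound on m(w)}), and finally transcribe the argument of Section~\ref{appendix:Theorem 1} to get the two alternative lower bounds on $\|\widetilde\w_T\|_2$ and the directional error via $1$-strong concavity. The one place you go further than the paper is that you explicitly justify Lemma~\ref{lem:bound on m(w)} by bundling the $\p$-player and the $n$ $\dl^{(i)}$-players into a single minimizing block whose separable loss makes the regret decompose into $\mathrm{Reg}_T^\p + \tfrac1n\sum_i\mathrm{Reg}_T^{\dl^{(i)}}$, whereas the paper just asserts the lemma ``following very similar arguments as in Lemma~\ref{lem:margin:gap}''; your spelling-out is a genuine improvement in rigor but does not change the method.
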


\subsection{$\ell_s$-AT with Gradient Descent}
We begin with the \( \ell_s \)-AT implemented using gradient descent. The specifics of this approach are outlined in the top box of Algorithm \ref{alg:1:GBAT}. This method exemplifies \eqref{eqn:general:AT}, where \( \mathcal{A} \) is set up with gradient descent. Note that this algorithm reduces to the standard FGSM algorithm \citep{goodfellow2014explaining} when $s = \infty.$ For this algorithm, we draw the following conclusion, which is proved in Section \ref{sec:proof:faster rates}. 

\begin{figure}
\centering
\begin{minipage}{0.9\textwidth}
\begin{algorithm}[H]
\caption{$\ell_s$-AT with gradient descent}
{
\fbox{
  \begin{minipage}[t]{0.965\textwidth}
  \begin{algorithmic}[1]
\STATE \textbf{Initialization: $\widetilde{\S}_0=\S$, $\v_0=\mathbf{0}$}
\FOR{$t=1,\dots,T$}
\STATE $\v_t \leftarrow \v_{t-1}-\eta_{t-1}\nabla L\left(\v_{t-1};\widetilde{\S}_{t-1}\right)$
\STATE $\forall i\in[n]$: $\dl^{(i)}_t \leftarrow \argmax\limits_{\|\dl\|_s\leq\epsilon} \exp\left(-y^{(i)}\left(\x^{\left(i\right)}+\dl\right)^{\top}{\v}_t\right)$, $\widetilde{\x}_t^{(i)}\leftarrow\x^{(i)}+\dl_t^{(i)}$
\STATE $\widetilde{\S}_t \leftarrow \left\{\left(\widetilde{\x}_t^{(i)},y^{(i)}\right)\right\}_{i=1}^n$
\ENDFOR
\STATE \textbf{Output}: $\v_{T}$
  \end{algorithmic}
  \end{minipage}
}\\
}
{\fbox{%
     \begin{minipage}[t]{0.975\textwidth}
  \begin{algorithmic}
    \STATE $\w$-player: 
$\w_t =\w_{t-1} - c_{t-1}\alpha_{t-1}\nabla h_{t-1}(\w_{t-1})$
\STATE $\p$-player: 
$ \p_t=   \argmin\limits_{\p\in\Delta^n}\sum_{i=1}^{t}\alpha_i\ell_i(\p)+D_{E}\left(\p,\frac{\mathbf{1}}{n}\right)$
\STATE $\forall i\in[n]$, $\dl^{(i)}$-player: 
$\dl^{(i)}_t=\argmin\limits_{\|\dl\|_p\leq \epsilon}{{\sum_{j=1}^{t}\alpha_js_j^{(i)}(\dl)}}
$ 
\STATE {\bf Output:} $\widetilde{\w}_T=\sum_{t=1}^T\alpha_t{\w}_t.$
  \end{algorithmic}
  \end{minipage}
}

\label{alg:1:GBAT}
}
\end{algorithm}
\end{minipage}
\end{figure}

\begin{theorem}
\label{thm:GDBAT}
Suppose Assumption \ref{ass:main:ass} holds with respect to the $\ell_2$-norm, and $\S$ is linearly separable with $(2,s)$-mix-norm margin $\gamma_{2,s}$. 
{ Let $s\in(1,2]$}. For the top box of Algorithm \ref{alg:1:GBAT}, let the step size $\eta_{t-1}$ be $\eta_{t-1}=\frac{\alpha_{t-1}}{L(\v_{t-1};\widetilde{\S}_{t-1})}$. For the bottom box, let $c_{t-1}=\frac{1}{\alpha_{t-1}}$. Then, the two methods presented in Algorithm \ref{alg:1:GBAT} are equivalent, in the sense that $\widetilde{\w}_T=\v_T$, and $\w_t=\frac{\nabla L(\v_{t-1};\widetilde{\S}_{t-1})}{L(\v_{t-1};\widetilde{\S}_{t-1})}$. Let $\alpha_t=\frac{1}{2}$\ \ for all $t\in[T]$, and  $\epsilon\in\left[0,\frac{\gamma_2}{2\|\w_{\|\cdot\|_2}^*\|_r}\right)$.  Then for the online dynamic, we have $\|\widetilde{\w}_T\|_2\geq T\gamma_2/2$,  and the $(2,s)$-mix-norm margin converge to $\gamma_{2,s}$ on the rate of\ $\mathcal{O}\Biggl( { \frac{\log n + 2(1+\epsilon)^2+\frac{\pi \epsilon^2(d^{\frac{1}{s}-\frac{1}{2}}+\epsilon)^2}{6(s-1)^2\gamma_2^2}}{T\gamma_2/2}}\Biggl).$
\end{theorem}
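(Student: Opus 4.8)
\emph{Algorithm equivalence.} I would prove the first claim by induction on $t$, showing simultaneously $\v_t=\widetilde{\w}_t$ (with $\widetilde{\w}_t:=\sum_{j=1}^t\alpha_j\w_j$) and the stated closed form for $\w_t$; the base case is $\v_0=\mathbf{0}=\widetilde{\w}_0$. The step would rest on four observations. (1) With $c_{t-1}\alpha_{t-1}=1$, the $\w$-player's OGD update on the $1$-strongly-convex quadratic $h_{t-1}(\w)=\tfrac12\|\w\|_2^2-\b_{t-1}^\top\w$ lands on its minimizer, so $\w_t=\b_{t-1}$ with $\b_{t-1}=\sum_i p_{t-1,i}y^{(i)}\x^{(i)}+\tfrac1n\sum_i y^{(i)}\dl_{t-1}^{(i)}$. (2) The $\p$-player's FTRL$^+$ update with the entropic regularizer gives $p_{t,i}\propto\exp(-y^{(i)}\x^{(i)\top}\widetilde{\w}_t)$. (3) Each $\dl^{(i)}$-player's follow-the-leader update gives $\dl_t^{(i)}=-\epsilon\,y^{(i)}\u_t$ with $\u_t:=\argmax_{\|\u\|_s\le1}\u^\top\widetilde{\w}_t$, so $\tfrac1n\sum_i y^{(i)}\dl_t^{(i)}=-\epsilon\u_t$ and $\w_t=\sum_i p_{t-1,i}y^{(i)}\x^{(i)}-\epsilon\u_{t-1}$. (4) On the GD side, the worst $\ell_s$-perturbation against $\v_{t-1}$ is $\dl^{(i)}=-\epsilon y^{(i)}\u'_{t-1}$ with $\u'_{t-1}:=\argmax_{\|\u\|_s\le1}\u^\top\v_{t-1}$, so $L(\w;\widetilde{\S}_{t-1})=e^{\epsilon(\u'_{t-1})^\top\w}L^{\mathrm{clean}}(\w)$ and therefore $\nabla L(\v_{t-1};\widetilde{\S}_{t-1})/L(\v_{t-1};\widetilde{\S}_{t-1})=\epsilon\u'_{t-1}+\nabla\log L^{\mathrm{clean}}(\v_{t-1})$ --- the perturbation contributing only a common scalar shift to all softmax logits (which cancels in the normalization) plus the fixed vector $\epsilon\u'_{t-1}$. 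Under the inductive hypothesis $\v_{t-1}=\widetilde{\w}_{t-1}$, (1)--(4) make the online increment $\alpha_t\w_t=\tfrac12\big(\sum_i p_{t-1,i}y^{(i)}\x^{(i)}-\epsilon\u_{t-1}\big)$ coincide with the GD increment $-\eta_{t-1}\nabla L(\v_{t-1};\widetilde{\S}_{t-1})$, using $\eta_{t-1}=\alpha_{t-1}/L(\v_{t-1};\widetilde{\S}_{t-1})$ and $\alpha_{t-1}=\alpha_t=\tfrac12$; reading off this increment also yields the formula for $\w_t$.

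\emph{Norm growth and the two easy regrets.} Since $\w_t=\sum_i p_{t-1,i}y^{(i)}\x^{(i)}-\epsilon\u_{t-1}$ and $y^{(i)}\x^{(i)\top}\wnt\ge\gamma_2$ for all $i$ with $\|\wnt\|_2=1$, H\"older's inequality gives $\langle\w_t,\wnt\rangle\ge\gamma_2-\epsilon\|\wnt\|_r\ge\gamma_2/2$ under the hypothesis $\epsilon<\gamma_2/(2\|\wnt\|_r)$; summing, $\|\widetilde{\w}_T\|_2\ge\langle\widetilde{\w}_T,\wnt\rangle\ge\tfrac{\gamma_2}2\sum_t\alpha_t=\Omega(T\gamma_2)$. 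For the online regrets, the $\p$-player runs FTRL$^+$ with $D_E(\cdot,\mathbf1/n)=\mathrm{KL}(\cdot\,\|\,\mathbf1/n)\in[0,\log n]$, so the be-the-leader inequality gives $\text{Reg}_T^{\p}\le D_E(\p^\star,\mathbf1/n)-\min_\p D_E(\p,\mathbf1/n)\le\log n$, while each $\dl^{(i)}$-player is plain follow-the-leader (no regularizer), so $\text{Reg}_T^{\dl^{(i)}}\le0$ and the term $\tfrac1n\sum_i\text{Reg}_T^{\dl^{(i)}}$ drops.

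\emph{The crux: the $\w$-player's regret.} Because OGD collapses to ``best response to the previous round'' ($\w_t=\b_{t-1}=\argmin h_{t-1}$) and each $h_t$ is a $1$-strongly-convex quadratic with $\argmin h_t=\b_t=\w_{t+1}$, the identity $h_t(\w)-h_t(\b_t)=\tfrac12\|\w-\b_t\|_2^2$ together with $\min_\w\sum_t\alpha_t h_t(\w)\ge\sum_t\alpha_t\min_\w h_t(\w)$ gives $\text{Reg}_T^{\w}\le\tfrac12\sum_t\alpha_t\|\w_t-\w_{t+1}\|_2^2=\tfrac14\sum_t\|\w_{t+1}-\w_t\|_2^2$. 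The naive estimate $\|\w_{t+1}-\w_t\|_2=O(1+\epsilon)$ makes this linear in $T$ --- the divergent-average-regret issue flagged in the introduction --- so a problem-specific bound on $\sum_t\|\w_{t+1}-\w_t\|_2^2$ is needed, and this is the main obstacle. I would decompose $\w_{t+1}-\w_t=\big(\sum_i(p_{t,i}-p_{t-1,i})y^{(i)}\x^{(i)}\big)-\epsilon(\u_t-\u_{t-1})$. For the perturbation-direction part, $\u(\cdot)=\argmax_{\|\u\|_s\le1}\u^\top(\cdot)$ is scale-invariant and, because $\tfrac12\|\cdot\|_s^2$ is $(s-1)$-strongly convex, Lipschitz with constant $O(d^{1/s-1/2}/(s-1))$ from $(\R^d,\|\cdot\|_2)$ to itself; combined with $\|\widetilde{\w}_t\|_2=\Omega(t\gamma_2)$ and $\|\widetilde{\w}_t-\widetilde{\w}_{t-1}\|_2=O(1+\epsilon)$, so that the direction of $\widetilde{\w}_t$ changes by $O((1+\epsilon)/(t\gamma_2))$ per step, this gives $\|\u_t-\u_{t-1}\|_2=O\big((d^{1/s-1/2}+\epsilon)/((s-1)\gamma_2 t)\big)$ and hence $\epsilon^2\sum_t\|\u_t-\u_{t-1}\|_2^2=O\big(\tfrac{\epsilon^2(d^{1/s-1/2}+\epsilon)^2}{(s-1)^2\gamma_2^2}\big)\cdot\tfrac{\pi^2}{6}$ --- exactly the $\epsilon$-dependent term in the stated rate (the $\tfrac16$ coming from $\sum_{t\ge1}1/t^2$). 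For the softmax part, per-step $1/t$ decay is unavailable (the logits $z_{t,i}=y^{(i)}\x^{(i)\top}\widetilde{\w}_t$ grow linearly, and softmax stability alone only yields $O(1+\epsilon)$ increments); instead I would bound $\sum_t\|\sum_i(p_{t,i}-p_{t-1,i})y^{(i)}\x^{(i)}\|_2^2=O((1+\epsilon)^2)$ directly, exploiting that the logits are non-decreasing and $\|\widetilde{\w}_t\|_2\to\infty$, so that $p_t$ converges with summable squared path-length (equivalently, using the $1$-smoothness and the monotone-decay structure of $\Psi(\w):=\log\sum_i e^{-y^{(i)}\x^{(i)\top}\w}$ along the iterates). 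This self-bounded, geometry-aware estimate --- not an off-the-shelf OGD bound --- is the novel ingredient the proof hinges on, and I expect it to be the hardest part.

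\emph{Putting it together.} Summing, $\text{Reg}_T^{\w}+\text{Reg}_T^{\p}+\tfrac1n\sum_i\text{Reg}_T^{\dl^{(i)}}\le\log n+O((1+\epsilon)^2)+O\big(\tfrac{\pi^2\epsilon^2(d^{1/s-1/2}+\epsilon)^2}{6(s-1)^2\gamma_2^2}\big)=:R$, so $C_T=R/\sum_t\alpha_t$. Plugging this into the $\tfrac{\sum_t\alpha_t}{\|\widetilde{\w}_T\|_2}C_T=R/\|\widetilde{\w}_T\|_2$ branch of Theorem~\ref{thm:main:normalized margin} and using $\|\widetilde{\w}_T\|_2\ge T\gamma_2/2$ yields $(2,s)$-mix-norm margin $\ge\gamma_{2,s}-O(R/(T\gamma_2))$, which is the claimed rate. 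The hypothesis $\epsilon<\gamma_2/(2\|\wnt\|_r)$ both licenses the norm lower bound and, a fortiori via $\epsilon<\gamma_2/\|\wnt\|_r$ and Lemma~\ref{lemmmmm:1}, keeps $\gamma_{2,s}>0$; the $\Omega(T\gamma_2)$ growth of $\|\widetilde{\w}_T\|_2$ is what forces the ``$T$ sufficiently large'' condition implicit in Theorem~\ref{thm:main:normalized margin}.
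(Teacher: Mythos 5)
Your algorithm-equivalence sketch, your bound on the $\dl^{(i)}$-players (regret $\le 0$), your $\ell_s$-perturbation path-length argument via the strongly convex set $\{\|\dl\|_s\le\epsilon\}$ together with $\|\widetilde{\w}_t\|\gtrsim t\gamma_2$, and your final norm lower bound $\|\widetilde{\w}_T\|_2\ge T\gamma_2/2$ all match the paper's proof in substance. The point where your proposal breaks down is exactly the step you flag as ``the novel ingredient the proof hinges on'': you want to directly bound $\sum_t\|\sum_i(p_{t,i}-p_{t-1,i})y^{(i)}\x^{(i)}\|_2^2$ by a constant. That direct bound is not needed and your sketch of it (``summable squared path-length of the softmax because the logits grow'') is not a proof --- the softmax $\p_t$ can in general drift with per-step $\ell_1$-change of order one, so the naive estimate is linear in $T$, and no uniform $O(1)$ control of $\sum_t\|\p_t-\p_{t-1}\|_1^2$ follows from monotonicity of the logits alone.

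The gap is created by your weakening of the $\p$-player's regret bound. You write $\text{Reg}_T^{\p}\le\log n$, dropping the negative stability term that FTRL$^+$ delivers for free: for strongly convex regularization one has
\begin{equation*}
\text{Reg}_T^{\p}\le\log n-\tfrac12\sum_{t=1}^T\|\p_t-\p_{t-1}\|_1^2~.
\end{equation*}
Keeping this term is the whole point. Expanding $\nabla h_t(\w_t)=\w_t-\A^\top\p_t-\tfrac1n\sum_i y^{(i)}\dl^{(i)}_t$ and using $\w_t=\A^\top\p_{t-1}+\tfrac1n\sum_i y^{(i)}\dl^{(i)}_{t-1}$, the OGD regret of the $\w$-player decomposes (with $\alpha_t=\tfrac12$, $c_t=1/\alpha_t$) as roughly $\tfrac{\|\u\|_2^2}{2c}+\tfrac12\sum_t\|\p_t-\p_{t-1}\|_1^2+\tfrac1n\sum_i\tfrac12\sum_t\|\dl^{(i)}_t-\dl^{(i)}_{t-1}\|_2^2$, where $\u$ is the best-in-hindsight $\w$, satisfying $\|\u\|_2\le 1+\epsilon$. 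When you add the $\p$-player's FTRL$^+$ regret, the $+\tfrac12\sum_t\|\p_t-\p_{t-1}\|_1^2$ cancels exactly against the $-\tfrac12\sum_t\|\p_t-\p_{t-1}\|_1^2$; no direct control of the softmax path is required. The $(1+\epsilon)^2$ constant in the stated rate therefore comes from $\|\u\|_2^2$, not --- as you have it --- from an unproven bound on the $\p$-path. Your alternative ``best-response-to-previous-round'' regret derivation (which discards the $\|\u\|^2$ term altogether) would also leave you without the leverage to absorb the $\p$-path into the $\p$-player's regret, since you need both the negative stability term from FTRL$^+$ and the correct coefficient $\alpha_t$ multiplying the positive one. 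Restore the full FTRL$^+$ bound, expose the $\p$-path term on the $\w$-player side via the standard OGD/OMD bound, observe the cancellation, and the remaining $\dl$-path term is handled exactly by your strongly-convex-set argument; the rest of your plan then goes through.
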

The first section of Theorem \ref{thm:GDBAT} elucidates the equivalent representation of \( \ell_p \)-AT with gradient descent within the game framework. The specifics of the online dynamic are detailed in the bottom box of Algorithm \ref{alg:1:GBAT}. In this dynamic, the \( \mathbf{w} \)-player acts first, employing the standard online gradient descent algorithm with a step size \( c_t \). Subsequently, the \( \mathbf{p} \)-player and the \( \dl^{(i)} \)-players utilize, respectively, the FTRL\(^+ \) algorithm with a constant parameter, {and the FTL\(^+ \) algorithm}. 

Compared with the results in Section \ref{sec:main result}, the $\dl^{(i)}$-players introduce extra technical challenges. When analyzing the update of the $\dl^{(i)}$-player, one of our key observation is that, to make the algorithm equivalence work, each $\dl^{(i)}$ would need to perform FTL$^+$. However, to make the regret bound small, the $\dl^{(i)}$ would need to use FTRL$^+$, {in order to cancel some extra terms in the $\w$-player's regret caused by the $\dl^{(i)}$-player. Unfortunately, FTL$^+$ and FTRL$^+$ are in general different. We address this issue by providing a novel tighter bound for the $\w$-player by using the property of strongly convex sets.}  
We refer to the proof for more details. 

The final section of Theorem \ref{thm:GDBAT} highlights the margin maximization rates as well as the direction convergence rates associated with the output of Algorithm \ref{alg:1:GBAT}. {
    When \( s  = 2 \), the \( d \)-dependence vanishes, while as $s$ approaches 1, we obtain a linear dependence on $d$. Comparing to the $\mathcal{O}\left(\frac{\log n + \sqrt{d}}{\log T}\right)$ rate provided in \cite{li2020implicit}, the dependence on $T$ is greatly improved.
    }

\vspace{0.2in}

\subsection{$\ell_s$-AT With Nesterov-style Acceleration}

\begin{figure}
\centering
\begin{minipage}{0.9\textwidth}
\begin{algorithm}[H]
\caption{$\ell_s$-AT with Nesterov-style acceleration}
\fbox{
  \begin{minipage}[t]{0.962\textwidth}
  \begin{algorithmic}[1]
\STATE \textbf{Initialization: $\widetilde{\S}_0=\S$, $\v_0=\mathbf{0}$, $\z_0=\textbf{0}$}
\FOR{$t=1,\dots,T$}
\STATE $\widehat{\v}_t = \beta_{t,1}\v_{t-1}+\beta_{t,2}\z_{t-1}$
\STATE $\forall i\in[n]$: $\dl^{(i)}_t \leftarrow \argmax\limits_{\|\dl\|_s\leq\epsilon} \exp\left(-y^{(i)}\left(\x^{\left(i\right)}+\dl\right)^{\top}{\v}_t\right)$, $\widetilde{\x}_t^{(i)}\leftarrow\x^{(i)}+\dl_t^{(i)}$
\STATE $\widetilde{\S}_t \leftarrow \left\{\left(\widetilde{\x}_t^{(i)},y^{(i)}\right)\right\}_{i=1}^n$
\STATE $\z_t = \z_{t-1}-\eta_{t-1}\nabla L\left(\widehat{\v}_t;\widetilde{\S}_t\right)$
\STATE $\v_t=\beta_{t,3}\v_{t-1}+\beta_{t,4}\z_t$
\ENDFOR
\STATE \textbf{Output}: $\v_{T}$
  \vspace{0.1cm}
  \end{algorithmic}
  \end{minipage}
}
\fbox{%
     \begin{minipage}[t]{0.97\textwidth}
  \begin{algorithmic}
\STATE $\p$-player: 
$ \p_t=   \argmin\limits_{\p\in\Delta^n}\sum_{i=1}^{t-1}\alpha_i\ell_i(\p) +\alpha_t\ell_{t-1}(\p)+D_{E}(\p,\frac{\mathbf{1}}{n})$
\STATE $\forall i\in[n]$, $\dl^{(i)}$-player : 
$\dl^{(i)}_t=\argmin\limits_{\|\dl\|_p\leq \epsilon}{ \sum_{j=1}^{t-1}\alpha_js_j^{(i)}(\dl)+\alpha_ts_{t-1}^{(i)}(\dl)}$ 
 \STATE $\w$-player: 
$\w_t = \argmin\limits_{\w\in\R^d} \sum_{i=1}^t\alpha_ih_i(\w)$
\STATE {\bf Output:} $\widetilde{\w}_T=\sum_{t=1}^T\alpha_t{\w}_t.$
  \end{algorithmic}
  \end{minipage}
}

\label{alg:3:FR}
\end{algorithm}
\end{minipage}
\end{figure}

Finally, we show how Nesterov-style acceleration can help obtain faster convergence rates to the mix-norm maximal margin classifier in adversarial training. The method is summarized in Algorithm \ref{alg:3:FR}, for which the following result holds. The proof is provided in Section \ref{sec:proof:faster rates}. 
\begin{theorem}
\label{thm:acccccc}
Suppose Assumption \ref{ass:main:ass} holds with respect to the $\ell_2$-norm, and $\S$ is linearly separable with $(2,s)$-mix-norm margin $\gamma_{2,s}$. For the top box of Algorithm \ref{alg:3:FR}, set $\beta_{t,1}=1,\beta_{t,2}=\frac{2}{t-1}$, $\eta_{t-1}=\frac{t}{2L(\widehat{\v}_t;\S_t)}$, $\beta_{t,3}=1$, and $\beta_{t,4}=\frac{2}{t+1}$. For the bottom box, let $\alpha_t=\frac{t}{2}$. Then, the two methods presented in Algorithm \ref{alg:3:FR} are equivalent, in the sense that $\widetilde{\w}_T=\v_T$. Moreover, if we assume $\epsilon\in\left[0,\frac{\gamma_2}{3\|\w_{\|\cdot\|_{2}}^*\|_r}\right)$, then, { for $s\in(1,2]$}, $t\geq 2$, the normalized robust margin satisfies
\begin{align*}
\textstyle
&\frac{\min\limits_{\p\in\Delta^n}\min\limits_{\|\dl^{(i)}\|_s\leq \epsilon, \forall i\in[n]} \sum_{i=1}^np_iy^{(i)}(\x^{(i)}+\dl^{(i)})^{\top}\widetilde{\w}_T}{\|\widetilde{\w}_T\|_2}
\geq \gamma_{2,s} - \frac{{20+\log n+ \frac{\pi\epsilon^2(d^{\frac{1}{s}-\frac{1}{2}}+\epsilon)^2}{(s-1)^2\gamma_2^2}}}{T^2\gamma_2/3}.
\end{align*}
{For $s\in(2,\infty),$ the convergence rate to 
$\gamma_{2,s}$ is $\textstyle   \mathcal{O}\left(\frac{\log n + (d^{\frac{1}{2}-\frac{1}{s}}\epsilon)^2\epsilon^2}{T}\right).$}
%
\end{theorem}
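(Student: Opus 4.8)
\textbf{Proof proposal for Theorem \ref{thm:acccccc}.}

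The plan is to instantiate the multi-player game framework of Theorem \ref{thm:main:normalized margin} with the specific online learners in the bottom box of Algorithm \ref{alg:3:FR}, and then control the three regret terms in $C_T$. The proof naturally splits into an \emph{equivalence} part and a \emph{regret-bound} part. For equivalence, I would first show that the $\w$-player's rule $\w_t = \argmin_{\w}\sum_{i=1}^t \alpha_i h_i(\w)$ — follow-the-leader$^+$ against the quadratically regularized losses $h_i(\w) = -g'(\w,\p_i,\{\dl_i^{(j)}\}) $ — has a closed form, namely $\w_t$ is proportional to a weighted combination of the (perturbed) data gradients; this is the same computation as in the proof of Theorem \ref{thm:GDBAT}, except the $\p$- and $\dl$-players now each run a \emph{one-step-lookahead} variant (FTRL$^+$/FTL$^+$ with the extra $\alpha_t\ell_{t-1}$, resp.\ $\alpha_t s_{t-1}^{(i)}$, term). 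I would then unroll the momentum recursion in the top box and verify, by induction on $t$, that with the stated $\beta_{t,1},\dots,\beta_{t,4}$, $\eta_{t-1}$, and $\alpha_t = t/2$, the Nesterov iterates $\v_t,\z_t,\widehat\v_t$ coincide with the game iterates $\widetilde\w_t = \sum_{s\le t}\alpha_s\w_s$, $\sum_{s\le t}\alpha_s\w_s$ rescaled, and the look-ahead point respectively. This is the step I expect to be the most delicate bookkeeping: matching the specific affine-combination coefficients of Nesterov acceleration to the telescoping structure produced by the look-ahead online learners, exactly as in the clean-data case of Theorem \ref{thm:accel} but now with the additional $\dl^{(i)}$-coordinates carried along.

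For the regret-bound part, I would bound each of $\Reg_T^{\w}$, $\Reg_T^{\p}$, and $\Reg_T^{\dl^{(i)}}$ separately. The $\w$-player, running FTL$^+$ on the $\lambda$-strongly concave (here $\lambda = 1$, since $\Phi = \tfrac12\|\cdot\|_2^2$) regularized objective, contributes a \emph{negative} regret term of order $-\Omega(\sum_t \alpha_t \|\w_t-\w_{t-1}\|_2^2)$, mirroring the ``clairvoyant player'' phenomenon already exploited in Theorems \ref{thm:mirror margin} and \ref{thm:accel}. The $\p$-player runs optimistic/look-ahead FTRL over the simplex with entropic regularizer, giving $\Reg_T^{\p} = O(\log n) + O(\sum_t \alpha_t^2\|\p_t-\p_{t-1}\|_1^2)$ via the standard optimistic-FTRL analysis; the path-length term here must be absorbed into the $\w$-player's negative term. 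The genuinely new piece is the $\dl^{(i)}$-players: each runs FTL$^+$-type updates over the $\ell_s$-ball $\{\|\dl\|_s\le\epsilon\}$ against the \emph{linear} losses $s_t^{(i)}(\dl) = y^{(i)}\dl^\top\w_t$. A naive FTL bound diverges, so — as flagged in the discussion after Theorem \ref{thm:GDBAT} — I would invoke the strong convexity of the $\ell_s$-ball for $s\in(1,2]$ (its modulus is $\Theta(s-1)$), which converts the linear-loss FTL regret into something of order $\frac{\epsilon^2}{(s-1)^2}\sum_t \frac{\|\w_t-\w_{t-1}\|_2^2}{\|\cdot\|}$, and then again lean on the $\w$-player's negative regret and on the lower bound $\|\widetilde\w_T\|_2 \ge T\gamma_2/3$ (which follows from the argument in Theorem \ref{thm:main:normalized margin}/\ref{thm:GDBAT} once $\epsilon < \gamma_2/(3\|\w^*_{\|\cdot\|_2}\|_r)$) to tame the per-step increments. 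Carefully accounting, the leftover constants are the $20 + \log n + \frac{\pi\epsilon^2(d^{1/s-1/2}+\epsilon)^2}{(s-1)^2\gamma_2^2}$ appearing in the bound, where the $\pi^2/6$-type factor comes from $\sum_t 1/t^2$-style sums arising from $\alpha_t = t/2$ and the step sizes $\eta_{t-1}\propto t$, and the $d^{1/s-1/2}$ factor from relating the $\ell_s$-geometry of the perturbations to the $\ell_2$-geometry of $\w$.

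With equivalence and $C_T = O\!\big(\frac{\log n + (1+\epsilon)^2 + \epsilon^2(d^{1/s-1/2}+\epsilon)^2/((s-1)^2\gamma_2^2)}{T^2}\big)$ in hand, the margin bound for $s\in(1,2]$ follows by plugging into \eqref{eqn:thm:1:main equality} of Theorem \ref{thm:main:normalized margin} together with $\|\widetilde\w_T\|_2 \ge T\gamma_2/3$, which yields the stated $\gamma_{2,s} - O(1/(T^2\gamma_2))$ rate (and, via \eqref{eqn:theorem 111121}, the square-root-worse directional rate). For the case $s>2$ I would switch the $\dl^{(i)}$-players' regularizer to a uniformly convex surrogate: the $\ell_s$-ball with $s>2$ is not strongly convex, so only a $2$-uniform-convexity-of-order-$s$ bound is available, which degrades the per-step cancellation and forces the step-size/weight schedule back to the non-accelerated $\alpha_t = \Theta(1)$ regime; this gives $C_T = O((\log n + (d^{1/2-1/s}\epsilon)^2\epsilon^2)/T)$ and hence the $O(\log n/T)$ rate claimed. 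The main obstacle throughout is the $\dl^{(i)}$-player analysis — reconciling the FTL$^+$ update needed for algorithmic equivalence with a regret bound small enough not to spoil the $T^{-2}$ rate — which is exactly where the strong-convexity-of-the-$\ell_s$-ball idea does the heavy lifting.
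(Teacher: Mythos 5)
Your plan for $s\in(1,2]$ matches the paper's proof closely: show the $\w$-player's FTL$^+$ update has the claimed closed form, unroll the Nesterov/momentum recursion to match $\v_t,\z_t,\widehat\v_t$ with the game iterates, and then bound $C_T$ by combining the $\w$-player's negative quadratic regret (the paper uses $\text{Reg}_T^{\w} = -\sum_t \tfrac{t(t-1)}{4}\|\w_t-\w_{t-1}\|_2^2$), the optimistic-FTRL bound for the $\p$-player ($\log n$ plus a $\sum_t t^2\|\w_t-\w_{t-1}\|_2^2/8$ path-length term), and the $\dl^{(i)}$-players' optimistic-FTL regret, where the strong convexity of the $\ell_s$-ball (Lemma~\ref{lem:key:path:delta}, modulus $\frac{s-1}{\epsilon}$) is exactly what converts the divergent linear-loss path term into a summable $\sum_t 1/t^2$ bound. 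One small slip: you quote the norm lower bound as $\|\widetilde\w_T\|_2 \ge T\gamma_2/3$, whereas with $\alpha_t = t/2$ the paper establishes $\|\widetilde\w_t\|_s \ge t^2\gamma_2/3$ (the quadratic growth is what makes the $T^{-2}$ rate come out after dividing $\sum_t\alpha_t \cdot C_T$ by $\|\widetilde\w_T\|_2$); your final rate is nonetheless stated correctly, so this reads as a typo rather than a conceptual error.

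For $s>2$ your route departs from the paper and, as stated, would not prove the theorem. The theorem fixes $\alpha_t = t/2$ (the accelerated schedule) and asserts that \emph{this} algorithm converges at rate $O((\log n + d^{1-2/s}\epsilon^4)/T)$ when $s>2$. You instead propose re-deriving the bound for a \emph{modified} algorithm with $\alpha_t = \Theta(1)$ and a uniformly convex surrogate for the $\dl$-constraint set; that would be an analysis of a different method. The paper's argument is more direct: keep $\alpha_t = t/2$ and all the same online learners, but note that when the $\ell_s$-ball is not strongly convex the only available bound on the $\dl$-path is the trivial diameter bound $\|\dl_t^{(i)}-\widehat\dl_{t+1}^{(i)}\|_2^2 \le 4 d^{1-2/s}\epsilon^2$, which is $\Theta(1)$ per step. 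Summed over $T$ rounds this gives an $O(T)$ additive term in $\text{Reg}_T^{\dl^{(i)}}$, so $C_T = O(T)/\sum_t\alpha_t = O(1/T)$, and the $\|\w_t-\w_{t-1}\|_2^2$ path terms still cancel against the $\w$-player's negative regret exactly as in the $s\in(1,2]$ case. In other words, the degradation from $T^{-2}$ to $T^{-1}$ is produced entirely by the $\dl$-player's regret, not by abandoning acceleration — you should keep the same algorithm and simply swap the strong-convexity path bound for the crude diameter bound when $s>2$.
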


\section{Proofs of algorithmic equivalences and rates}
\label{section:detailed proof}
\vspace{-0.05in}

In this section, we provide the proofs of the main results in Sections~\ref{sec:main result} and~\ref{section: game}.

\subsection{Proof for Theorems \ref{thm:mirror margin} and \ref{cor:u2222}}
\label{appendix:Proof of section 3}
Here, we present a more general algorithm framework (given in Algorithm \ref{alg:mirror descent:general}) which allows different step sizes. In the following, we first state a general theorem for this algorithm, and 
Theorems \ref{thm:mirror margin} and \ref{cor:u2222} can be obtained by setting $\alpha_t =1$ and $t$ respectively.

\begin{figure}
\centering
\begin{minipage}{0.9\textwidth}
\begin{algorithm}[H]
\scalebox{0.90}{
\caption{Mirror Descent (General Version)}
\fbox{
  \begin{minipage}[t]{0.46\textwidth}
  \begin{algorithmic}[1]
  
\vspace{0.08cm}

\FOR{$t=1,\dots,T$}

\vspace{0.12cm}

\STATE \hspace{-0.2cm}$\nabla \Phi(\v_t)=\nabla\Phi(\v_{t-1}) - \eta_t \nabla L(\v_{t-1})$

\vspace{0.05cm}

\ENDFOR

\vspace{0.1cm}

\STATE \textbf{Output}: $\widetilde{\v}_{T}=\sum_{t=1}^T\frac{\alpha_t}{\sum_{i=1}^t\alpha_i}\v_t$  

  \end{algorithmic}
  \end{minipage}
}
\hfill
\fbox{
  \begin{minipage}[t]{0.55\textwidth}
  \begin{algorithmic}
  \STATE \hspace{-0.25cm}$\p$-player: $\p_t=  \argmin\limits_{\p\in\Delta^n} \alpha_t\ell_{t-1}(\p)+\beta_tD_{E}\left(\p,\frac{\mathbf{1}}{n}\right)$
  \STATE \hspace{-0.25cm}$\w$-player: $\w_t =  \argmin\limits_{\w\in\R^d} \sum_{j=1}^{t}\alpha_j h_j(\w) $
  \vspace{0.1cm}
  \STATE \hspace{-0.25cm}{\bf Output:} $\widetilde{\w}_T=\sum_{t=1}^T\alpha_t\w_t$
  \vspace{0.08cm}
  \end{algorithmic}
  \end{minipage}
}
\label{alg:mirror descent:general}
}
\end{algorithm}
\end{minipage}
\end{figure}

\begin{theorem}
\label{thm:mirror margin:general}
Suppose Assumption \ref{ass:only} holds wrt $\|\cdot\|_q$-norm for $q\in(1,2]$. For the left box of Algorithm \ref{alg:mirror descent:general}, let $\eta_t=\frac{\alpha_{t}}{L(\v_{t-1})}$. For the right box, let $\beta_t$ be $\frac{\alpha_t}{\sum_{i=1}^{t-1}\alpha_i}$ for $t>1$, $\beta_1=\alpha_1$. Then the methods in the two boxes of Algorithm \ref{alg:mirror descent:general} are identical, in the sense that $\widetilde{\v}_T=\widetilde{\w}_T$, and $\v_t= \w_t\cdot\sum_{i=1}^t\alpha_i$.  $\widetilde{\v}_T$ achieves a positive margin (no smaller than $\gamma^2/4$) for sufficiently large $T$ such that 
\begin{equation}
\label{eqn:theorem 2: condition}
\frac{\gamma^2}{4}\sum_{t=1}^T\alpha_t\geq \left( 2\sum_{t=2}^T \frac{\alpha_t^2}{\sum_{j=1}^{t-1}\alpha_j(q-1)} + 2\log n \sum_{t=2}^T\frac{\alpha_t}{\sum_{i=1}^{t-1}\alpha_i}\right)+\alpha_1(1+2\log n).
\end{equation}
After \eqref{eqn:theorem 2: condition} is satisfied, the margin of $\widetilde{\v}_T$ is lower bounded by 
\begin{equation*}
\frac{\min\limits_{\p\in\Delta^n}\p^{\top}A\widetilde{\v}_T}{\left\|\widetilde{\v}_T\right\|_q} \geq \gamma - \frac{4\left[\left( 2\sum_{t=2}^T \frac{\alpha_t^2}{\sum_{j=1}^{t-1}\alpha_j(q-1)} + 2\log n \sum_{t=2}^T\frac{\alpha_t}{\sum_{i=1}^{t-1}\alpha_i}\right)+\alpha_1(1+2\log n)\right]}{\gamma^2\sum_{t=1}^T\alpha_t},
\end{equation*}
and the directional error $\left\| \frac{\widetilde{\w}_T}{\|\widetilde{\w}_T\|_q}-\w^*_{\|\cdot\|}\right\|_q$ is upper bounded by 
\begin{equation*}
    \begin{split} \frac{8}{\gamma^2\sqrt{q-1}}\sqrt{\frac{2\left[\left( 2\sum_{t=2}^T \frac{\alpha_t^2}{\sum_{j=1}^{t-1}\alpha_j(q-1)} + 2\log n \sum_{t=2}^T\frac{\alpha_t}{\sum_{i=1}^{t-1}\alpha_i}\right)+\alpha_1(1+2\log n)\right]}{(q-1)\sum_{t=1}^T\alpha_t}}.        
    \end{split}
\end{equation*}
\end{theorem}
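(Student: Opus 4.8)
The plan is to prove Theorem~\ref{thm:mirror margin:general} in two logically separate stages: first establish the \emph{algorithmic equivalence} between the two boxes of Algorithm~\ref{alg:mirror descent:general}, and then \emph{plug the regret bounds into Theorem~\ref{thm:margin}}. For the equivalence, I would argue by induction on $t$ that $\v_t = \left(\sum_{i=1}^t\alpha_i\right)\w_t$, where $\w_t$ is the $\w$-player's iterate. The $\w$-player runs \FTL$^+$, $\w_t = \argmin_{\w}\sum_{j=1}^t \alpha_j h_j(\w) = \argmin_\w \sum_{j=1}^t \alpha_j\big(\tfrac12\|\w\|_q^2 - \p_j^\top\A\w\big)$, which has the closed-form stationarity condition $\nabla\Phi(\w_t)\sum_{j=1}^t\alpha_j = \sum_{j=1}^t\alpha_j\A^\top\p_j$ (using $1$-homogeneity of $\nabla\Phi$ for $\Phi = \tfrac12\|\cdot\|_q^2$, i.e. $\nabla\Phi(c\w) = c\,\nabla\Phi(\w)$ for $c>0$). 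The key observation is that $\A^\top\p_j$ is exactly (a scaling of) $-n\nabla L(\v_{j-1})$ provided the $\p$-player's \emph{regularized-greedy} update $\p_t = \argmin_\p \alpha_t\ell_{t-1}(\p) + \beta_t D_E(\p,\mathbf 1/n)$ produces precisely the exponential-weights distribution $p_{t,i}\propto \exp(-y^{(i)}\x^{(i)\top}\v_{t-1})$, which in turn requires the choice $\beta_t = \alpha_t/\sum_{i=1}^{t-1}\alpha_i$. One then telescopes the recursion for $\nabla\Phi(\v_t)$: from the mirror-descent update $\nabla\Phi(\v_t) = \nabla\Phi(\v_{t-1}) - \eta_t\nabla L(\v_{t-1})$ with $\eta_t = \alpha_t/L(\v_{t-1})$ and $\nabla L(\v_{t-1}) = -\tfrac1n\A^\top\q_{t-1}$ where $q_{t-1,i} = \exp(-y^{(i)}\x^{(i)\top}\v_{t-1})$ and $L(\v_{t-1}) = \tfrac1n\sum_i q_{t-1,i}$, so that $\eta_t\nabla L(\v_{t-1}) = -\alpha_t\A^\top\p_t$ where $\p_t = \q_{t-1}/\sum_i q_{t-1,i}$ is exactly the $\p$-player's decision. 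Summing gives $\nabla\Phi(\v_t) = \sum_{j=1}^t \alpha_j\A^\top\p_j = \nabla\Phi\!\big((\sum_{j=1}^t\alpha_j)\w_t\big)$, and strict convexity of $\Phi$ closes the induction; the output identity $\widetilde\v_T = \widetilde\w_T$ follows since $\sum_t \tfrac{\alpha_t}{\sum_{i\le t}\alpha_i}\v_t = \sum_t \alpha_t\w_t$.

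The second stage is to bound the two regret terms feeding into $C_T$. For the $\w$-player running \FTL$^+$ against losses $h_t(\w) = \tfrac12\|\w\|_q^2 - \p_t^\top\A\w$, since $h_t$ is $(q-1)$-strongly convex and the player is clairvoyant, the standard ``be-the-leader'' / strongly-convex \FTL$^+$ analysis gives a \emph{negative} regret: $\sum_t\alpha_t h_t(\w_t) - \min_\w\sum_t\alpha_t h_t(\w) \le -\sum_{t}\frac{q-1}{2}\cdot\frac{\|\text{something}\|^2}{\dots}$; more precisely I expect $\Reg_T^{\w} \le -\tfrac{q-1}{2}\sum_{t=1}^{T}\big(\sum_{i\le t}\alpha_i\big)\|\w_t - \w_{t-1}\|_q^2$ or a comparable nonpositive quantity. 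For the $\p$-player running regularized greedy, the loss $\ell_{t-1}$ is linear in $\p$, bounded by (Assumption~\ref{ass:main:ass}) $\|\A\w_t\|_\infty\le\|\w_t\|_q$, and the regularizer is negative entropy with $D_E(\p,\mathbf 1/n)\le\log n$; the one-step-lookahead mismatch (minimizing $\ell_{t-1}$ while being charged $\ell_t$) produces a term controlled by the strong convexity of $\Phi$ together with the $\w$-player's negative regret — this cancellation is the heart of the argument and is the mechanism the authors flagged in the text. Concretely I would write $\sum_t\alpha_t\ell_t(\p_t) - \min_\p\sum_t\alpha_t\ell_t(\p) = \underbrace{\sum_t\alpha_t(\ell_t(\p_t)-\ell_{t-1}(\p_t))}_{\text{stability}} + \underbrace{\sum_t\alpha_t\ell_{t-1}(\p_t) - \min_\p\sum_t\alpha_t\ell_t(\p)}_{\text{greedy part}}$, bound the greedy part via the regularizer (giving the $(1+2\log n)\alpha_1$ and $2\log n\sum_{t\ge2}\alpha_t/\sum_{i<t}\alpha_i$ contributions) and the stability part by $\|\p_t\|_1\|\A(\w_t-\w_{t-1})\|_\infty \le \|\w_t-\w_{t-1}\|_q$, then use $\w_t - \w_{t-1} = \nabla\Phi^{-1}(\cdots)$ and strong convexity of $\Phi^*$ (smoothness) to convert this into $\frac{\alpha_t^2}{(q-1)\sum_{i<t}\alpha_i}$-type terms that combine with $\Reg_T^{\w}$. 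Adding $\Reg_T^{\p} + \Reg_T^{\w}$ yields $\Reg_T^{\p}+\Reg_T^{\w} \le 2\sum_{t=2}^T\frac{\alpha_t^2}{(q-1)\sum_{j<t}\alpha_j} + 2\log n\sum_{t=2}^T\frac{\alpha_t}{\sum_{i<t}\alpha_i} + \alpha_1(1+2\log n)$, which is exactly the numerator in the claimed bounds.

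Finally, with this bound on $\Reg_T^{\p}+\Reg_T^{\w}$ in hand, the conclusions are immediate from Theorem~\ref{thm:margin}: the positivity condition $C_T\le\gamma^2/4$ translates verbatim into the stated inequality \eqref{eqn:theorem 2: condition} after multiplying through by $\sum_t\alpha_t$; the margin lower bound \eqref{eqn:normalized margin} gives $\widetilde\gamma(\widetilde\v_T)\ge\gamma - 4C_T/\gamma^2$, i.e. the displayed margin bound; and since $\Phi = \tfrac12\|\cdot\|_q^2$ is $\lambda$-strongly convex with $\lambda = q-1$, the directional-error part of Theorem~\ref{thm:margin} gives $\|\widetilde\w_T/\|\widetilde\w_T\|_q - \wn\|_q \le \frac{8\sqrt2}{\gamma^2\sqrt{q-1}}\sqrt{C_T}$, matching the last display (modulo rewriting $C_T = [\Reg_T^{\p}+\Reg_T^{\w}]/\sum_t\alpha_t$ and substituting the bound on the numerator).

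\textbf{Main obstacle.} The crux — and the step I expect to be delicate — is the cancellation between the $\p$-player's (positive, potentially linear-in-$T$ in the worst case) regret from using greedy and the $\w$-player's negative \FTL$^+$ regret. Getting the bookkeeping right so that the ``stability'' term $\sum_t\alpha_t\|\w_t-\w_{t-1}\|_q$ is dominated by the negative $-\tfrac{q-1}{2}\sum_t(\sum_{i\le t}\alpha_i)\|\w_t-\w_{t-1}\|_q^2$ term — and correctly converting the mirror-step difference $\|\w_t - \w_{t-1}\|_q$ into the $\alpha_t^2/((q-1)\sum_{j<t}\alpha_j)$ form via duality/smoothness of $\|\cdot\|_q^2$ — is where all the real work lies; the choice $\beta_t = \alpha_t/\sum_{i<t}\alpha_i$ is precisely what makes both the equivalence and this cancellation go through simultaneously, and verifying that this single choice serves both purposes is the subtle point.
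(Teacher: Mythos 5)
Your proposal follows the paper's proof essentially step for step: the algorithmic equivalence is established exactly as in the paper via the $1$-homogeneity of $\nabla\Phi$ for $\Phi=\tfrac12\|\cdot\|_q^2$ and the identification of the regularized-greedy $\p_t$ with the exponential-weights distribution at $\v_{t-1}=(\sum_{i<t}\alpha_i)\w_{t-1}$; the regret argument uses the same cancellation between the $\p$-player's greedy regret and the $\w$-player's negative \FTL$^+$ regret; and the final conclusions are the same plug-in of $C_T$ into Theorem~\ref{thm:margin}. The only minor departure is in handling the stability term $\sum_t\alpha_t\|\w_t-\w_{t-1}\|_q$: the paper splits it via Young's inequality into $\frac{\alpha_t^2}{(q-1)\sum_{j<t}\alpha_j}$ plus a quadratic piece that is absorbed by $\text{Reg}_T^{\w}\le-\sum_t\frac{(q-1)\sum_{s<t}\alpha_s}{2}\|\w_t-\w_{t-1}\|_q^2$, whereas you gesture at bounding $\|\w_t-\w_{t-1}\|_q$ directly via smoothness of $\Phi^*$ — both routes work and yield the same order of bound.
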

\begin{proof}
We first focus on the algorithm equivalence, and  start from the online learning framework. For $\w$-player, we have 
\[
\w_t 
=  \argmin\limits_{\w\in\R^d} \sum_{j=1}^t -\alpha_j\p_j^{\top}\A\w + \frac{\sum_{j=1}^t\alpha_j}{2}\|\w\|_q^2
= \left[\nabla\Phi\right]^{-1}\Biggl(\frac{1}{\sum_{j=1}^t\alpha_j}\sum_{j=1}^t\alpha_j\A^{\top}\p_j \Biggl),
\]
which implies that 
\begin{equation*}
\nabla \Phi(\w_t) = \frac{1}{\sum_{j=1}^t\alpha_j}\sum_{j=1}^t\alpha_j\A^{\top}\p_j = \frac{\sum_{j=1}^{t-1}\alpha_j}{\sum_{j=1}^t\alpha_j}\nabla \Phi(\w_{t-1})+\frac{\alpha_t}{\sum_{j=1}^t\alpha_j}\A^{\top}\p_t.
\end{equation*}
To proceed, note that 
$\left[\nabla \Phi(\w)\right]_i = \frac{
\text{sign}(w_i)|w_i|^{q-1}}{\|\w\|^{q-2}}.$
Thus, $\forall c>0$, $c\nabla \Phi(\w)=\nabla \Phi(c\w)$ so that, when $c=\sum_{j=1}^t\alpha_j$, we have
\( 
\nabla \Phi\Bigl(\w_t\sum_{j=1}^t\alpha_j\Bigl)=\nabla \Phi\Bigl(\w_{t-1}\sum_{j=1}^{t-1}\alpha_j\Bigl)+\alpha_t\A^{\top}\p_t~.
\)
On the other hand, for the $\p$-player, we have 
$$\p_t=  \argmin\limits_{\p\in\Delta^n}\alpha_t\ell_{t-1}(\p)+\beta_tD_{E}\left(\p,\frac{\mathbf{1}}{n}\right)=\argmin\limits_{\p\in\Delta^n}\frac{\alpha_t}{\beta_t}\p^{\top}\A\w_{t-1}+\sum_{i=1}^n{p_i}\log \frac{p_i}{\frac{1}{n}}.$$
Based on a standard argument on the relationship between OMD with the negative entropy regularizer on the simplex \citep[see, e.g., Section 6.6 of][]{orabona2019modern}, it is easy to verify that $\forall i\in[n], t\in[T]$,
$p_{t,i}=\frac{\exp(-\frac{\alpha_t}{\beta_t}y^{(i)}\x^{(i)\top}\w_{t-1})}{\sum_{j=1}^n\exp(-\frac{\alpha_t}{\beta_t}y^{(j)}\x^{(j)\top}\w_{t-1})},$
where $p_{t,i}$ is the $i$-th element of $\p_t$. Moreover, based on the definition of $L(\w)$, for any $\w\in\R^d$,
$$\frac{\nabla L(\w)}{L(\w)}=-A^{\top}\left[\dots,\frac{\exp(-y^{(i)}\x^{(i)\top}\w)}{\sum_{j=1}^n\exp(-y^{(j)}\x^{(j)\top}\w)},\dots\right]^{\top},$$
which implies that 
$\A^{\top}\p_t = - \frac{\nabla L\left(\frac{\alpha_t}{\beta_t}\w_{t-1}\right)}{L\left(\frac{\alpha_t}{\beta_t}\w_{t-1}\right)}.$ 

Combining the above equations and the definition of $\beta_t=\frac{\alpha_t}{\sum_{i=1}^{t-1}\alpha_i}$, we get 
$$
\nabla \Phi\Bigl(\w_t\sum_{j=1}^t\alpha_j\Bigl)
=
\nabla \Phi\Bigl(\w_{t-1}\sum_{j=1}^{t-1}\alpha_j\Bigl)-\alpha_t\frac{\nabla L\bigl(\w_{t-1}\sum_{j=1}^{t-1}\alpha_j\bigl)}{L\bigl(\w_{t-1}\sum_{j=1}^{t-1}\alpha_j\bigl)}~.
$$
Substituting $\v_t=\w_t\cdot\sum_{j=1}^t\alpha_j$, we get 
$\nabla\Phi(\v_t) =\nabla\Phi(\v_{t-1}) - \alpha_t \frac{\nabla L(\v_{t-1})}{ L(\v_{t-1})},$
and $\widetilde{\w}_T=\sum_{t=1}^T\alpha_t\w_t=\sum_{t=1}^T\frac{\alpha_t}{\sum_{j=1}^t\alpha_j}\v_t$. The proof is finished by replacing $\frac{\alpha_t}{L(\v_{t-1})}$ with $\eta_t$.
Next, we focus on bounding the regret of the two players. For the $\p$-player, let $\p^{*,\ell}=\min_{\p\in\Delta^n}\sum_{t=1}^T\alpha_t\p^{\top}\A\w_t$ be the best decision in hindsight for the online learning problem. We have 
\[
    \begin{split}
    \text{Reg}^{\p}_T 
\overset{(a)}{\leq} {} & \sum_{t=1}^T \left(\alpha_t\p^{*,\ell,\top}\A\w_{t-1} + \beta_t D_{E}\left(\p^{*,\ell},\frac{\mathbf{1}}{n} \right)\right)-\sum_{t=1}^T\alpha_t\p^{*,\ell,\top}\A\w_t \\
{} & + \sum_{t=1}^T\alpha_t\p_t^{\top}\A(\w_t-\w_{t-1})-\sum_{t=1}^T\beta_t D_E\left(\p_t,\frac{\mathbf{1}}{n}\right)\\
\overset{(b)}{\leq} {} & \sum_{t=1}^T \alpha_t \p^{*,\ell,\top}\A(\w_{t-1}-\w_t) + \sum_{t=1}^T\alpha_t \p_t^{\top}\A(\w_t-\w_{t-1}) + 2\log n \sum_{t=1}^T\beta_t\\
\overset{(c)}{\leq} {} & \sum_{t=1}^T \alpha_t \|\p^{*,\ell}\|_1\|\A(\w_{t-1}-\w_t)\|_{\infty} + \sum_{t=1}^T\alpha_t \|\p_t\|_1\|\A(\w_t-\w_{t-1})\|_{\infty} + 2\log n \sum_{t=1}^T\beta_t\\
\overset{(d)}{\leq} {} & 2\sum_{t=1}^T \alpha_t \|\A(\w_{t-1}-\w_t)\|_{\infty} + 2\log n \sum_{t=1}^T\beta_t~\\
\overset{(e)}{\leq} {} &  2 \sum_{t=1}^T \alpha_t \|\w_{t-1}-\w_t\|_q+ 2\log n \sum_{t=1}^T\beta_t\\
\overset{(f)}{\leq} {} & 2\sum_{t=2}^T \frac{2\alpha_t^2}{2\sum_{j=1}^{t-1}\alpha_j(q-1)} + 2\sum_{t=2}^T\frac{\sum_{j=1}^{t-1}\alpha_j(q-1)}{2\cdot2}\|\w_t-\w_{t-1}\|_q^2 + 2\log n \sum_{t=1}^T\beta_t + \alpha_1\|\w_1\|_q~.
\end{split}
\]
In the above, inequality $(a)$ is based on the optimality of $\p_t$, inequality $(b)$ is due to the fact that the negative entropy regularizer is upper bounded, inequality $(c)$ is because of the H\"{o}lder's inequality, inequality $(d)$ is derived from $\p_t,\p\in\Delta^n$, inequality $(e)$ is based on the H\"{o}lder's inequality and $\|\x^{(i)}\|_p\leq 1$ for all $i\in[n]$, and inequality $(f)$ is based on Young's inequality:
$$\sum_{t=2}^T \alpha_t \|\w_{t-1}-\w_t\|_q\leq \sum_{t=2}^T \frac{2\alpha_t^2}{2\sum_{j=1}^{t-1}\alpha_j(q-1)} + \sum_{t=2}^T\frac{\sum_{j=1}^{t-1}\alpha_j(q-1)}{2\cdot2}\|\w_t-\w_{t-1}\|_q^2,$$
where we pick $\frac{\sum_{j=1}^{t-1}\alpha_i(q-1)}{2}$ as the constant of Young's inequality. 

Finally, note that 
$\w_1=\left[\nabla \Phi\right]^{-1}(\A^{\top}\p_1)$, so from  the property of $p$-norm, we have
$$
\alpha_1\|\w_1\|_q = \alpha_1 \|\left[\nabla \Phi\right]^{-1}(\A^{\top}\p_1)\|_q=\alpha_1\|\A^{\top}\p_1\|_p\leq \alpha_1~. 
$$
For the $\w$-player, note that $h_t(\w)$ is $(q-1)$-strongly convex w.r.t. the $\|\cdot\|_q$-norm. We thus have 
$
\textstyle{
\text{Reg}^{\w}_T \leq -\sum_{t=1}^T\frac{(q-1)\sum_{s=1}^{t-1}\alpha_s}{2}\|\w_t-\w_{t-1}\|_q^2~   
}
$.
\end{proof}

\subsection{Proof of Theorem  \ref{thm:mirror momentum}}
\label{section:thm:mirror momentum}
We first focus on the algorithm equivalence. We have 
\[
    \begin{split}
\w_t = \left[\nabla \Phi\right]^{-1}\left(\frac{1}{\sum_{i=1}^t\alpha_i}\left(\sum_{i=1}^{t-1}\alpha_i\A^{\top}\p_i+\alpha_t \A^{\top}\p_{t-1}\right) \right),
    \end{split}
\]
so that\ \ 
\(  
\nabla \Phi(\w_t) = \frac{1}{\sum_{i=1}^t\alpha_i}\left( \left[\sum_{i=1}^{t-1}\alpha_i\right] \nabla\Phi(\w_{t-1}) +\alpha_{t}\A^{\top}\p_{t-1} +  \alpha_{t-1}\A^{\top}(\p_{t-1}-\p_{t-2}) \right).
\)
Therefore, we have
\[
\nabla \Phi\Bigl(\w_t\sum_{i=1}^t\alpha_i\Bigl) =  \nabla \Phi\Bigl(\w_{t-1}\sum_{i=1}^{t-1}\alpha_i\Bigl) +   \alpha_{t}\A^{\top}\p_{t-1} +  \alpha_{t-1}\A^{\top}(\p_{t-1}-\p_{t-2})~.   
\]
On the other hand, for the $\p$-player, 
based on the relationship between OMD with the negative entropy regularizer on the simplex \citep{Intro:Online:Convex}, it is easy to verify that $\forall i\in[n], t\in[T]$,
$p_{t,i}=\frac{\exp(-\frac{\alpha_t}{\beta_t}y^{(i)}\x^{(i)\top}\w_{t})}{\sum_{j=1}^n\exp(-\frac{\alpha_t}{\beta_t}y^{(j)}\x^{(j)\top}\w_{t})},$
which implies that 
$\A^{\top}\p_t = - \frac{\nabla L\left(\frac{\alpha_t}{\beta_t}\w_{t}\right)}{L\left(\frac{\alpha_t}{\beta_t}\w_{t}\right)}.$ Combining the equations above and replace $\sum_{k=1}^t\alpha_k\w_k$ with $\v_t$, we get 
$$
\nabla \Phi(\v_t) =\nabla \Phi(\v_{t-1}) - \alpha_t  \frac{\nabla L({\v}_{t-1})}{L({\v}_{t-1})} - \alpha_{t-1}\left( \frac{\nabla L({\v}_{t-1})}{L({\v}_{t-1})} - \frac{\nabla L({\v}_{t-2})}{L({\v}_{t-2})}\right).
$$
The proof is finished by setting $\alpha_t=t$. 

Next, we focus on the regret. For the $\w$-player, Note that $h_t(\w)$ is $(q-1)$-strongly convex wrt the $\|\cdot\|$-norm. Let 
$\widehat{\w}_t = \argmin\limits_{\w\in\R^d}\sum_{i=1}^{t-1}\alpha_ih_i(\w)$. Then, based on \cite{wang2021no}, for the regret of the $\w$-player, we have 
\begin{equation*}
\begin{split}
\text{Reg}_T^{\w} &\leq  \sum_{t=1}^T\alpha_t \left(h_t(\w_t)-h_t(\widehat{\w}_{t+1}) -h_{t-1}(\w_t) + h_{t-1}(\widehat{\w}_{t+1})  \right)  - \sum_{t=1}^T\frac{\sum_{i=1}^t\alpha_i(q-1)}{2}\|\w_t-\widehat{\w}_{t+1}\|_q^2   \\
& \leq   \sum_{t=1}^T\alpha_t\|(\p_t-\p_{t-1})^{\top}\A\|_p\|\w_t-\widehat{\w}_{t+1}\|_q - \sum_{t=1}^T\frac{\sum_{i=1}^t\alpha_i(q-1)}{2}\|\w_t-\widehat{\w}_{t+1}\|_q^2\\
&\leq  \sum_{t=1}^T\frac{\alpha_t^2}{2\sum_{i=1}^t\alpha_i(q-1)}\|\A^{\top}(\p_t-\p_{t-1})\|_p^2 + \frac{\sum_{i=1}^t\alpha_i(q-1)}{2}\|\w_t-\widehat{\w}_{t+1}\|_q^2 \\
&\qquad -\sum_{t=1}^T\frac{\sum_{i=1}^t\alpha_i(q-1)}{2}\|\w_t-\widehat{\w}_{t+1}\|_q^2
 \leq \frac{1}{2(q-1)}\sum_{t=1}^T\frac{\alpha_t^2}{\sum_{i=1}^t\alpha_i}\left\|\p_t-\p_{t-1}\right\|_1^2~,
\end{split}    
\end{equation*}
where the first inequality is based on H\"{o}lder's inequality, the second inequality is based on Young's inequality.

For the $\p$-player, we have 
\begin{equation*}
    \begin{split}
{} \sum_{t=1}^T\alpha_t\p_t^{T}\A\w_t &-\min\limits_{\p\in\Delta^n}\alpha_t\p^{\top}\A\w_t \\
= {} &    \sum_{t=1}^T(\alpha_t \p^{\top}_t\A\w_t + \beta_t D_{E}\left(\p_t,\frac{\mathbf{1}}{n}\right) - \min\limits_{\p\in\Delta^n}\sum_{t=1}^T\alpha_t\p^{\top}\A\w_t - \sum_{t=1}^T\beta_t D_E\left(\p_t,\frac{\mathbf{1}}{n}\right) \\
\leq {} & 2\sum_{t=1}^T\beta_t\log n=2\sum_{t=1}^T\frac{\alpha_t}{\sum_{i=1}^t\alpha_i}\log n.
    \end{split}
\end{equation*}

Finally, we focus on the margin and implicit bias. Since $\alpha_t=t$, for the $\w$-player's regret, we have 
$\frac{1}{2(q-1)}\sum_{t=1}^T\frac{\alpha_t^2}{\sum_{i=1}^t\alpha_i}\left\|\p_t-\p_{t-1}\right\|_1^2\leq \frac{1}{(q-1)}\sum_{t=1}^T\|\p_t-\p_{t-1}\|^2_1,$
and for the $\p$-player, we have 
$\sum_{t=1}^T\frac{\alpha_t}{\sum_{i=1}^t\alpha_i}\log n \leq 4\log T\log n.  $
Following Theorem \ref{thm:margin}, we have the margin and implicit bounds when 
\begin{align*}
 \sum_{t=1}^T\alpha_t = \frac{T(T+1)}{2}\geq \frac{T^2}{2} \geq {} & \frac{4}{\gamma^2} \left(4\log n\log T + \frac{2T}{(q-1)} \right)\\
 \geq {} & \frac{4}{\gamma^2} \left(4\log n\log T + \frac{1}{(q-1)}\sum_{t=1}^T\|\p_t-\p_{t-1}\|_1^2 \right),
\end{align*}
since the RHS is exactly $\frac{\gamma^2}{4}(\text{Reg}^{\p}_{T}+\text{Reg}^{\w}_{T})$.

\subsection{Proof for Theorem \ref{thm:steep:acc::main}}
\label{appendix:sddddd}
In this section, we provide the proof related to the steepest descent algorithm. We first restate Algorithm \ref{alg:steepet descent:main}, which is presented in Algorithm \ref{alg:steepet descent}. Here, we provide two online dynamics under the game framework. They both are equivalent to the steepest descent algorithm in the left box, in the sense that $\v_T=\widetilde{\w}_T$. The left one is good for recovering the results in \citet{nacson2019convergence}, while the bottom one is more suitable for analyzing our accelerated rates. Before starting the proof, we introduce the following lemma.  

\begin{figure}
\centering
\begin{minipage}{0.9\textwidth}
\begin{algorithm}[H]
\caption{Steepest Descent [Recall $\ell_t(\p)=g(\p,\w_t),$ and $h_t(\w)=-g(\p_t,\w)$]}
\scalebox{0.89}{\fbox{
  \begin{minipage}[t]{0.46\textwidth}
  \begin{algorithmic}[1]

\FOR{$t=1,\dots,T$}
\STATE $\s_{t-1} = \argmin_{\|\s\|\leq 1} \s^{\top}\nabla L(\v_{t-1})$
\STATE $\v_{t}=\v_{t-1} + \eta_{t-1} \s_{t-1}$
\ENDFOR
\STATE \textbf{Output}: $\v_{T}$
  
  \end{algorithmic}
  \end{minipage}
}
\hfill
\fbox{
  \begin{minipage}[t]{0.59\textwidth}
  \begin{algorithmic}
\vspace{0.1cm}
  \STATE $\p$-player: $\p_t=\argmin\limits_{\p\in\Delta^n}\sum_{i=1}^{t-1}\alpha_i\ell_i(\p)+D_{E}\left(\p,\frac{\mathbf{1}}{n}\right)$
  \vspace{0.1cm}
  \STATE $\w$-player: $\w_t=\argmin_{\w\in\R^d}\alpha_th_t(\w)$
  \vspace{0.09cm}
  \STATE {\bf Output:} $\widetilde{\w}_T=\sum_{t=1}^T\alpha_t\w_t$
  \vspace{0.1cm}
  \end{algorithmic}
  \end{minipage}
}
}
\noindent\scalebox{0.992}{\fbox{%
    \parbox{0.99\textwidth}{%
  \begin{algorithmic}
    \STATE $\w$-player: 
$\w_t =  \argmin\limits_{\w\in\R^d} \left\langle \delta_{t-1}\alpha_{t-1}\nabla h_{t-1}(\w_{t-1}),\w\right\rangle + D_{\frac{1}{2}\|\cdot\|^2}(\w,\w_{t-1})$
  \STATE $\p$-player: $ \p_t=   \argmin\limits_{\p\in\Delta^n}\sum_{i=1}^{t}\alpha_i\ell_{i}(\p)+D_{E}\left(\p,\frac{\mathbf{1}}{n}\right)$
  \STATE 
 {\bf Output:} $\widetilde{\w}_T=\sum_{t=1}^T\alpha_t{\w}_t.$
  \end{algorithmic}
    }%
}
}
\label{alg:steepet descent}

\end{algorithm}
\end{minipage}
\end{figure}

\begin{lemma}
\label{lem:steepst and mirror}
Let $\|\cdot\|$ be any norm in $\R^d$. Let $
\a\in\R^d$, and
\begin{equation}
\label{eqn:theorem5 ssss}
    \s=\argmax\limits_{\|\s'\|\leq 1}\s^{'\top}\a.
\end{equation}
Then 
\begin{equation}
\label{eqn:eqn:theorem5 sssss} \|\a\|_*\s=\argmin\limits_{\x\in\R^d}-\a^{\top}\x+\frac{1}{2}\|\x\|^2.   
\end{equation}
\end{lemma}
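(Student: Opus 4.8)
<br>

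The plan is to prove Lemma~\ref{lem:steepst and mirror} by directly evaluating the right-hand side of \eqref{eqn:eqn:theorem5 sssss} and showing it equals $\|\a\|_*\s$. The objective $f(\x) = -\a^{\top}\x + \frac{1}{2}\|\x\|^2$ is strictly convex and coercive, so it has a unique minimizer; it therefore suffices to exhibit \emph{one} point satisfying the first-order optimality condition, or equivalently to lower-bound $f$ by its value at $\|\a\|_*\s$. First I would restrict attention to the ray $\{c\,\s : c \geq 0\}$: along this ray, using $\|\s\| \le 1$ and the defining property $\s^{\top}\a = \max_{\|\s'\|\le 1}\s'^{\top}\a = \|\a\|_*$ (by definition of the dual norm), we get $f(c\s) = -c\|\a\|_* + \frac{1}{2}c^2\|\s\|^2$. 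If $\|\s\| = 1$ this is minimized at $c = \|\a\|_*$, giving value $-\frac{1}{2}\|\a\|_*^2$.

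Next I would establish the matching lower bound over \emph{all} of $\R^d$. For any $\x$, by the generalized Cauchy--Schwarz (H\"older) inequality $\a^{\top}\x \leq \|\a\|_*\|\x\|$, hence
\begin{equation*}
f(\x) = -\a^{\top}\x + \tfrac{1}{2}\|\x\|^2 \;\geq\; -\|\a\|_*\|\x\| + \tfrac{1}{2}\|\x\|^2 \;=\; \tfrac{1}{2}\bigl(\|\x\| - \|\a\|_*\bigr)^2 - \tfrac{1}{2}\|\a\|_*^2 \;\geq\; -\tfrac{1}{2}\|\a\|_*^2.
\end{equation*}
So $f(\x) \ge -\frac12\|\a\|_*^2$ everywhere, with equality at $\x = \|\a\|_*\s$ by the ray computation above (this also forces $\|\s\|=1$, which holds whenever $\a \neq \mathbf{0}$; the case $\a = \mathbf{0}$ is trivial since then $\|\a\|_*\s = \mathbf{0}$ is the minimizer). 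By strict convexity of $f$ (the quadratic $\frac12\|\cdot\|^2$ is strictly convex), the minimizer is unique, so it must equal $\|\a\|_*\s$, which is \eqref{eqn:eqn:theorem5 sssss}.

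I do not anticipate a serious obstacle here; the only point requiring a little care is matching the dual-norm quantity $\|\a\|_*$ that appears in the H\"older bound with the value $\s^{\top}\a$ of the linear-maximization problem \eqref{eqn:theorem5 ssss} — but this is exactly the definition of the dual norm, $\|\a\|_* = \sup_{\|\s'\|\le 1}\s'^{\top}\a$, and the supremum is attained since the unit ball is compact, so the maximizer $\s$ in \eqref{eqn:theorem5 ssss} satisfies $\s^{\top}\a = \|\a\|_*$. A secondary subtlety is that \eqref{eqn:theorem5 ssss} need not have a unique solution when $\|\cdot\|$ is not strictly convex; the argument above shows that \emph{every} such maximizer $\s$ yields the \emph{same} point $\|\a\|_*\s$ as the (unique) minimizer of $f$, which is the desired conclusion and is in fact the form in which the lemma is used to identify the steepest-descent direction with a mirror-descent/best-response step.
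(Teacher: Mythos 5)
Your core argument is the same as the paper's: both invoke H\"older/Cauchy--Schwarz to get $-\a^{\top}\x \ge -\|\a\|_*\|\x\|$, then optimize the resulting one-dimensional function $-c\|\a\|_* + \tfrac{1}{2}c^2$ over $c = \|\x\|$, concluding that $\|\a\|_*\s$ attains the minimum value $-\tfrac{1}{2}\|\a\|_*^2$. That part is fine, and organizing it as a global lower bound matched by a ray computation is if anything a bit cleaner than the paper's presentation.

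However, your concluding remarks about uniqueness are incorrect. The function $f(\x) = -\a^{\top}\x + \tfrac{1}{2}\|\x\|^2$ is \emph{not} strictly convex for an arbitrary norm: $\tfrac{1}{2}\|\cdot\|^2$ is strictly convex as a function precisely when the norm is a strictly convex norm, which fails for $\|\cdot\|_1$ and $\|\cdot\|_\infty$, among others. Concretely, with $\|\cdot\|_\infty$ on $\R^2$ and $\a=(1,0)$, both $\x=(1,0)$ and $\x=(1,1)$ attain the minimum $-\tfrac{1}{2}$, so the minimizer is not unique; and the two corresponding maximizers $\s=(1,0)$, $\s=(1,1)$ of \eqref{eqn:theorem5 ssss} do \emph{not} yield the same point $\|\a\|_*\s$. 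So your claim that ``every such maximizer $\s$ yields the same point $\|\a\|_*\s$ as the (unique) minimizer of $f$'' is false. What your argument actually establishes --- and what the paper's proof (and its downstream use in the steepest-descent equivalence) needs --- is the weaker and correct statement that for any maximizer $\s$ of \eqref{eqn:theorem5 ssss}, the point $\|\a\|_*\s$ is \emph{a} minimizer of $f$. Drop the strict-convexity/uniqueness sentences and your proof matches the paper's.
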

\begin{proof}
We first focus on \eqref{eqn:eqn:theorem5 sssss}. Note that the objective has two terms. For the first term, based on H\"{o}lder's inequality, we have 
    $-\a^{\top}\x\geq -\|\a\|_*\|\x\|.$
Let $\|\x\|=c$, where $c>0$ is a constant, then the equality is achieved (and thus the first term of the objective function is minimized) when 
\begin{equation}
    \label{eqn:theorme5 eqeqeqeq}
    \x=\argmin_{\|\x'\|\leq c}-\x^{'\top}\a=\argmax_{\|\x'\|\leq  c}\x^{'\top}\a.
\end{equation}
In this case, for the objective function of  \eqref{eqn:eqn:theorem5 sssss}, we have $-\a^{\top}\x+\frac{1}{2}\|\x\|^2=-c\|\a\|_*+\frac{1}{2}c^2$. It's easy to see that the objective function is minimized when $c=\|\a\|_*.$ The proof is finished by combining \eqref{eqn:theorem5 ssss} and \eqref{eqn:theorme5 eqeqeqeq}.
\end{proof}

We first focus on the bottom box. For the $\w$-player, we show that, if we set $\delta_{t-1}=\frac{1}{\alpha_{t-1}}$, then this OMD algorithm is equivalent to the best response algorithm. 

Specifically, note that $\Phi(\w)=\frac{1}{2}\|\w\|^2$ is now $\lambda$-strongly convex. Thus the corresponding mirror map is well-defined and unique, and the function $\nabla \Phi(\cdot)$ is invertible.   Therefore, the solution for best response is 
\begin{equation}
\label{eqn:steep:meee:1}  \widehat{\w}_t=\argmin_{\w}\alpha_{t-1}h_{t-1}(\w)=\argmin_{\w}h_{t-1}(\w)=\nabla \Phi^{-1}(\A^{\top}\p_{t-1}).
\end{equation}
On the other hand, since the $\w$-player uses OMD, and the decision set is unbounded, we have 
\begin{equation}
    \begin{split}
    \label{eqn:steep:meee:2}
\nabla \Phi(\w_t)= \nabla \Phi(\w_{t-1})-\delta_{t-1}\alpha_{t-1}\nabla h_{t-1}(\w_{t-1}) =\A^{\top}\p_{t-1}.     
    \end{split}
\end{equation}
Note that $\delta_{t-1}\alpha_{t-1}=1$. Combining \eqref{eqn:steep:meee:1} and \eqref{eqn:steep:meee:2}, we can draw the conclusion that $\w_t$ and $\widehat{\w}_t$ are identical, which shows OMD (with $\delta_{t-1}=\frac{1}{\alpha_{t-1}}$) and BR (at round $t-1$) here  are the same. We use the BR  form the algorithm equivalence analysis, and OMD form for the regret analysis. Next, we prove the algorithm equivalence of the left and bottom boxes in Algorithm \ref{alg:steepet descent}. Firstly,  For the $\p$-player, based on the connection between FTRL and EWA, we have 
$$p_{t,i}\propto\exp\left( -y^{(i)}\x^{(i)\top}\left( \sum_{j=1}^{t}\alpha_j\w_j \right)\right) = \exp\left( -y^{(i)}\x^{(i)\top}\widetilde{\w}_{t}\right).$$ 
Combining with the definition of $L$, it implies that $\frac{\nabla L(\widetilde{\w}_{t})}{L(\widetilde{\w}_{t})}=-\A^{\top}\p_t,$
That is, $\nabla L(\widetilde{\w}_{t})=-L(\widetilde{\w}_{t})\A^{\top}\p_t$. Let 
\begin{equation}
\label{eqn:proof:defn:ssss}
\widetilde{\s}_t=\argmax_{\|\s\|\leq 1} -\s^{\top}\frac{\nabla L(\widetilde{\w}_{t})}{L(\widetilde{\w}_{t})}=\argmin_{\|\s\|\leq 1} \s^{\top}\nabla L(\widetilde{\w}_{t}),
\end{equation}
Combining the first equality in \eqref{eqn:proof:defn:ssss} and Lemma \ref{lem:steepst and mirror}, we have 
$$\frac{\|\nabla L(\widetilde{\w}_{t})\|_*}{L(\widetilde{\w}_{t})}\widetilde{\s}_t=\argmin_{\w\in\R^d} \w^{\top}\frac{\nabla L(\widetilde{\w}_{t})}{L(\widetilde{\w}_{t})}+\frac{1}{2}\|\w\|^2=\argmin_{\w\in\R^d}-\p_t^{\top}\A\w+\frac{1}{2}\|\w\|^2=\w_{t+1}.$$
Thus, \ 
\(
\w_t = \frac{\|\nabla L(\widetilde{\w}_{t-1})\|_*}{L(\widetilde{\w}_{t-1})}\widetilde{\s}_{t-1}=\frac{\|\nabla L(\widetilde{\w}_{t-1})\|_*}{(L(\widetilde{\w}_{t-1}))}\argmin_{\|\s\|\leq 1} \s^{\top}\nabla L(\widetilde{\w}_{t-1}).
\)
Finally, we have 
$$\widetilde{\w}_t=\widetilde{\w}_{t-1}+\alpha_t\w_t=\widetilde{\w}_{t}+\alpha_t\frac{\|\nabla L(\widetilde{\w}_{t-1})\|_*
}{L(\widetilde{\w}_{t-1})}\argmin_{\|\s\|\leq 1}\s^{\top}\nabla L(\widetilde{\w}_{t-1}).$$
We can finish the first part of the proof by replacing $\widetilde{\w}_t$ with $\v_t$, $\alpha_t\frac{\|\nabla L(\widetilde{\w}_{t-1})\|_*
}{L(\widetilde{\w}_{t-1})}$ with $\eta_{t-1}$, and $\argmin_{\|\s\|\leq 1}\s^{\top}\nabla L(\widetilde{\w}_{t-1})$ with $\s_t$. Next, we focus on regret. For the $\w$-player, it uses the OMD algorithm, and we set the initial point $\w_0=\mathbf{0}$. Note that we fixed $\alpha_t=\frac{\lambda}{4}$ for all $t$, and thus step size $\delta_{t-1}=\frac{1}{\alpha_{t-1}}=\frac{4}{\lambda}$ is also fixed. Therefore,  the regret bound of OMD \cite[Theorem 6.8,][]{orabona2019modern} can be applied. Define
$\u=\argmin\limits_{\w\in\R^d}\sum_{t=1}^T\alpha_th_t(\w)$. We have 
\begin{align*}
\sum_{t=1}^{T} \alpha_th_t(\w_t) -  \sum_{t=1}^T \alpha_t h_t(\u) 
\leq {} & \frac{\Phi(\u)}{\delta} + \sum_{t=1}^T\frac{\delta\alpha_t^2}{\lambda}\left\|\nabla h_{t}(\w_t) \right\|^2_*\\
= {} & \alpha_T\Phi(\u) + \sum_{t=1}^T\frac{\alpha_t}{\lambda}\Biggl\| \sum_{i=1}^ny^{(i)}\x^{(i)}(p_{t,i}-p_{t-1,i}) \Biggl\|_*^2\\
\leq {} & \alpha_T{\Phi(\u)} + \sum_{t=1}^T\frac{\alpha_t}{\lambda}\left\|\p_t-\p_{t-1}\right\|^2_{1},
\end{align*}
where the second-to-last inequality is derived using triangle inequality and the assumption that $\|\x^{(i)}\|_*$ is upper bounded by 1.  
Next, for $\u$, note that 
\[
\argmin\limits_{\w\in\R^d}\sum_{t=1}^T\alpha_th_t(\w)
= \argmin\limits_{\w\in\R^d}-\frac{1}{\sum_{t=1}^T\alpha_t}\sum_{t=1}^T\alpha_t\p_t^{\top}\A\w+\frac{1}{2}\|\w\|^2~. 
\]
Based on Lemma \ref{lem:steepst and mirror}, we have $\u=\left\|\A^{\top}\left(\frac{1}{\sum_{t=1}^T\alpha_t}\sum_{t=1}^T\alpha_t\p_t\right)\right\|_*\s,$
where 
$\s=-\argmax_{\|\s\|\leq 1} \s^{\top}\left(\A^{\top}\left(\frac{1}{\sum_{t=1}^T\alpha_t}\sum_{t=1}^T\alpha_t\p_t\right)\right).$ Therefore, $\Phi(\u)\leq\frac{1}{2}.$ 

Finally, for the $\p$-player, since it uses FTRL$^+$, we have 
$\text{Reg}^{\p}_T\leq \log n - \sum_{t=1}^T\frac{1}{2}\|\p_t-\p_{t-1}\|_1^2.$
To summarize, when $\alpha_t=\frac{\lambda}{2}$, we have obtained
$\frac{\text{Reg}_{T}^{\w}+\text{Reg}_{T}^{\p}}{\sum_{t=1}^T\alpha_t} = \frac{\frac{\lambda}{4}+\log n }{T\lambda}.$

\subsection{Proof of Theorem \ref{thm:accel}}
\label{sec:thm:accel}
We first focus on the equivalence between the left and bottom boxes of Algorithm \ref{alg:accel}. For the $\w$-player, similar to the proof of Theorem \ref{thm:mirror margin:general}, we have 
\[
\w_t =   \argmin\limits_{\w\in\R^d}\sum_{j=1}^{t}\alpha_j h_j(\w) 
= \left[\nabla\Phi\right]^{-1}\Biggl(\frac{1}{\sum_{j=1}^t\alpha_j}\sum_{j=1}^t\alpha_j\A^{\top}\p_j \Biggl),
 \]
which implies that 
$\nabla \Phi(\w_t) = \frac{\sum_{j=1}^{t-1}\alpha_j}{\sum_{j=1}^t\alpha_j}\nabla \Phi(\w_{t-1})+\frac{\alpha_t}{\sum_{j=1}^t\alpha_j}\A^{\top}\p_t,$
and thus
$$
\nabla \Phi\Bigl(\w_t\sum_{j=1}^t\alpha_j\Bigl)=\nabla \Phi\Bigl(\w_{t-1}\sum_{j=1}^{t-1}\alpha_j\Bigl)+\alpha_t\A^{\top}\p_t~.
$$
On the other hand, for the $\p$-player, since it performs Optimistic FTRL on a simplex with the negative entropy regularizer, we have 
\[
 p_{t,i}\propto {}  \exp\Biggl(-c\left(\sum_{i=1}^{t-1} \alpha_i\w_i+\alpha_t\w_{t-1}\right)^{\top}\x^{(i)}y^{(i)}\Biggl),
\]
which implies that 
$ \frac{\nabla L(c\widetilde{\w}_{t-1}+c\alpha_t\w_{t-1})}{L(c\widetilde{\w}_{t-1}+c\alpha_t\w_{t-1})} = -\A^{\top}\p_t.$
Let $\z_t=\w_t\sum_{i=1}^{t}\alpha_i$, then we have 
$$\textstyle \nabla \Phi\left(\z_t\right)=\nabla \Phi\left(\z_{t-1}\right)-\alpha_t \frac{\nabla L(c\widetilde{\w}_{t-1}+c\alpha_t\w_{t-1})}{L(c\widetilde{\w}_{t-1}+c\alpha_t\w_{t-1})}=\nabla \Phi\left(\z_{t-1}\right)-\alpha_t \frac{\nabla L(c\widetilde{\w}_{t-1}+\frac{c\alpha_t}{\sum_{i=1}^{t-1}\alpha_i}\z_{t-1})}{L(c\widetilde{\w}_{t-1}+\frac{c\alpha_t}{\sum_{i=1}^{t-1}\alpha_i}\z_{t-1})}.$$
Finally, notice that $\widetilde{\w}_t=\widetilde{\w}_{t-1}+\alpha_{t}\w_{t}=\widetilde{\w}_{t-1}+\frac{\alpha_t}{\sum_{i=1}^t\alpha_i}\z_{t}.$

The proof is finished by replacing $\widetilde{\w}_t$ with $\widetilde{\v}_t$, $\w_t$ with $\frac{\z_t}{\sum_{i=1}^t\alpha_i}$, $c\widetilde{\w}_{t-1}+c\alpha_t\w_{t-1}$ with $\v_t$, configuring $\beta_{t,1}=c=\frac{\lambda}{4}$, $\beta'_{t,1}=\frac{c\alpha_{t}}{\sum_{i=1}^{t-1}\alpha_i}=\frac{\lambda}{2(t-1)}$, 
$\beta_{2,t}=1$, $\beta'_{2,t}=\frac{\alpha_t}{\sum_{i=1}^t\alpha_t}=\frac{2}{t+1}$, $\eta_t=\frac{t}{L(\v_t)}$.

Next, we focus on the equivalence between the right and bottom boxes. Note that for the $\w$-player, we also have 
\[
\w_t =   \argmin\limits_{\w\in\R^d}\sum_{j=1}^{t}\alpha_j h_j(\w) = \argmin\limits_{\w\in\R^d}  -\frac{1}{\sum_{j=1}^t\alpha_j}\sum_{j=1}^t\alpha_j\p_j^{\top}\A\w + \frac{1}{2}\|\w\|^2.
\]
Let 
$\s_t=-\argmax_{\|\s\|\leq 1}\s^{\top}\left[\frac{1}{\sum_{j=1}^t\alpha_j} \sum_{j=1}^t \alpha_j\A^{\top}\p_j\right].$
Based on Lemma \ref{lem:steepst and mirror}, we have 
$\left\|\frac{1}{\sum_{j=1}^t\alpha_j} \sum_{j=1}^t \alpha_j\A^{\top}\p_j\right\|_*\s_t = \w_t. $
Next, let $\g_t=-\frac{1}{\sum_{j=1}^t\alpha_j}\sum_{j=1}^t\alpha_j\A^{\top}\p_j$, we know
$ \g_t = \frac{\sum_{j=1}^{t-1}\alpha_j}{\sum_{j=1}^{t}\alpha_j}\g_t + \left(- \frac{\alpha_t}{\sum_{j=1}^t\alpha_j}\A^{\top}\p_t\right).$
For the $\p$-player, it is clear that due to the optimistic term, we have 
$-\A^{\top}\p_t =\frac{\nabla L(c\widetilde{\w}_{t-1}+c\alpha_t\w_{t-1})}{L(c\widetilde{\w}_{t-1}+c\alpha_t\w_{t-1})}.$ Hence, when $\alpha_t=t$, we can conclude the proof by the following algorithm:
\begin{equation*}
    \begin{split}
\g_t  {} &=\frac{t-1}{t+1}\g_{t-1} + \frac{2}{t+1}\frac{\nabla L(c\widetilde{\w}_{t-1}+c{t}{\|\g_{t-1}\|_*}\s_{t-1})}{L(c\widetilde{\w}_{t-1}+c{t}{\|\g_{t-1}\|_*}\s_{t-1})}\\
\s_t  {} & =-\argmax_{\|\s\|\leq 1} -\s^{\top}\g_t=\argmin_{\|\s\|\leq 1} \s^{\top}\g_t\\
\widetilde{\w}_t &= {}  \widetilde{\w}_{t-1}  + {t}{\|\g_t\|_*}\s_t,
    \end{split}
\end{equation*}
with $\beta_{t,3}=\frac{t-1}{t+1}$, $\beta_{t,4}=\frac{\lambda}{4}$, $\beta'_{t,4}=\frac{\lambda t\|\g_{t-1}\|_*}{4}$,
$\beta'_{t,3}=\frac{2}{(t+1)L(\beta_{t,4}\v_{t-1}+\beta'_{t,4}\s_{t-1})}$, $\eta_t=t\|\g_t\|_*$. Finally, we focus on the regret bound. For the $\w$-player, note that $\Phi(\w)$ is $\lambda$-strongly convex with respect to $\|\cdot\|$. Thus, based on \citet[Lemma 3,][]{wang2021no}, we have 
\[
\sum_{t=1}^T\alpha_th_t(\w_t) -    \sum_{t=1}^T\alpha_th_t(\w) \leq - \sum_{t=1}^T\frac{\lambda(t-1)}{4}\|\w_t-\w_{t-1}\|^2.   
\]
On the other hand, note that ${c}=\frac{\lambda}{4}$, so based on \citet[Lemma 7.35,][]{orabona2019modern}, we have 
\[ 
\sum_{t=1}^T\alpha_t\ell_t(\p_t) -    \sum_{t=1}^T\alpha_t\ell_t(\p) 
\leq  \frac{4{\log n}}{\lambda} + \frac{\lambda}{8}\sum_{t=1}^Tt^2\|\w_t-\w_{t-1}\|^2.
\]
It is easy to verify that $\frac{t^2}{8}\leq \frac{t(t-1)}{4}$ for $t\geq 2$. So to summarize we get 
$C_T=\frac{8\log n}{\lambda T^2}.$
The proof can be finished by plugging in Theorem \ref{thm:margin}.

\subsection{Proof of Theorem~ \ref{thm:main:normalized margin}}
\label{proof:Theorem:77}
In this section, we provide the proofs of the main results in Section~\ref{section: game}. Firstly, following very similar arguments as in Lemma \ref{lem:margin:gap}, we can obtain the following lemma. 
\begin{lemma}
\label{lem:bound on m(w)}
Let $
m(\w)=\min\limits_{\p\in\Delta^n}\min\limits_{\|\dl^{(i)}\|_s\leq \epsilon, \forall i\in[n]}g'(\p,\w,\{\dl^{(i)}\}_{i=1}^n)~,
$ 
and 
$\overline{\w}_T 
= 
\frac{\widetilde{\w}_T}{\sum_{t=1}^T\alpha_t} 
= 
\frac{\sum_{t=1}^{T}\alpha_t\w_t}{\sum_{t=1}^T\alpha_t}
$ 
be the weighted average of the decisions of the $\w$-player across the $T$ rounds. We have  $m(\w) - m(\overline{\w}_T)
\leq C_T$\, $\forall \w\in\R^d$.    
\end{lemma}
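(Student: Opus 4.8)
The idea is to replay the short argument behind Lemma~\ref{lem:margin:gap}, now treating the $\p$-player together with the $n$ perturbation players $\dl^{(1)},\dots,\dl^{(n)}$ jointly as the ``minimizing side'' of the game, whose combined weighted regret is precisely $\text{Reg}_T^{\p}+\frac1n\sum_{i=1}^n\text{Reg}_T^{\dl^{(i)}}$. Fix an arbitrary comparator $\w^{*}\in\R^d$ (this is the universally quantified $\w$ of the statement). I will sandwich the quantity $\sum_{t=1}^{T}\alpha_t\,g'(\p_t,\w_t,\{\dl_t^{(i)}\}_{i=1}^n)$ between $\left(\sum_t\alpha_t\right)m(\w^{*})$ minus a regret term, and $\left(\sum_t\alpha_t\right)m(\overline{\w}_T)$ plus regret terms; dividing through by $\sum_t\alpha_t$ then gives $m(\w^{*})-m(\overline{\w}_T)\le C_T$.

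For the lower side I would use the $\w$-player's regret guarantee: its losses are $h_t(\w)=-g'(\p_t,\w,\{\dl_t^{(i)}\})$, so $\sum_t\alpha_t g'(\p_t,\w^{*},\{\dl_t^{(i)}\})\le\sum_t\alpha_t g'(\p_t,\w_t,\{\dl_t^{(i)}\})+\text{Reg}_T^{\w}$, and since $m(\w^{*})\le g'(\p_t,\w^{*},\{\dl_t^{(i)}\})$ for every $t$ by the definition of $m$, we obtain $\left(\sum_t\alpha_t\right)m(\w^{*})\le\sum_t\alpha_t g'(\p_t,\w_t,\{\dl_t^{(i)}\})+\text{Reg}_T^{\w}$. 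For the upper side, the starting point is the linearity identity
\[
g'(\p_t,\w_t,\{\dl_t^{(i)}\})=\ell_t(\p_t)+\tfrac1n\textstyle\sum_{i=1}^n s_t^{(i)}(\dl_t^{(i)})-\tfrac12\|\w_t\|_2^2,
\]
read directly off the definitions of $g'$ and of $\ell_t,s_t^{(i)}$ in Protocol~\ref{pro:no-regret for game:2}. Applying the $\p$-player's regret bound to $\sum_t\alpha_t\ell_t(\p_t)$ and each $\dl^{(i)}$-player's regret bound to $\sum_t\alpha_t s_t^{(i)}(\dl_t^{(i)})$, then recombining with the same identity, yields for every fixed $\p\in\Delta^n$ and every $\{\dl^{(i)}\}$ with $\|\dl^{(i)}\|_s\le\epsilon$ that
\[
\textstyle\sum_t\alpha_t g'(\p_t,\w_t,\{\dl_t^{(i)}\})\le\sum_t\alpha_t g'(\p,\w_t,\{\dl^{(i)}\})+\text{Reg}_T^{\p}+\tfrac1n\sum_{i=1}^n\text{Reg}_T^{\dl^{(i)}}.
\]
Since $\w\mapsto g'(\p,\w,\{\dl^{(i)}\})$ is concave (its only term nonlinear in $\w$ is $-\tfrac12\|\w\|_2^2$) and $\overline{\w}_T$ is the $\alpha_t/\!\sum_s\alpha_s$-weighted average of the $\w_t$, Jensen's inequality upgrades $\sum_t\alpha_t g'(\p,\w_t,\{\dl^{(i)}\})$ to $\left(\sum_t\alpha_t\right)g'(\p,\overline{\w}_T,\{\dl^{(i)}\})$; minimizing over $\p$ and $\{\dl^{(i)}\}$ (which do not appear in the regret terms) produces $m(\overline{\w}_T)$ on the right.

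Chaining the two sandwiching inequalities and dividing by $\sum_t\alpha_t$ gives $m(\w^{*})-m(\overline{\w}_T)\le C_T$ with $C_T$ exactly as stated. There is no real obstacle beyond bookkeeping once the template of Lemma~\ref{lem:margin:gap} is in hand; the only points needing a little care are (i) carrying the $1/n$ weight on the $\dl^{(i)}$-players consistently from the identity for $g'$ through to the definition of $C_T$, and (ii) applying Jensen to $g'(\p,\cdot,\{\dl^{(i)}\})$ for \emph{fixed} $\p,\{\dl^{(i)}\}$ before taking the outer minimum, rather than to the function $m$ directly.
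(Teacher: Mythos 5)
Your proof is correct and is exactly the argument the paper intends: the paper does not spell out a proof of Lemma~\ref{lem:bound on m(w)} but simply invokes the same template as Lemma~\ref{lem:margin:gap} (the Abernethy et al.\ game-to-optimization conversion), which is the sandwich-by-regret argument you reproduce, extended to fold the $n$ perturbation players into the minimizing side with their $\tfrac1n$ weights. Your bookkeeping---the linearity identity for $g'$, Jensen on the concave map $\w\mapsto g'(\p,\w,\{\dl^{(i)}\})$ for fixed $(\p,\{\dl^{(i)}\})$ before minimizing, and the consistent $\tfrac1n$ weighting into $C_T$---is exactly what is needed and matches the definition of $C_T$ in Theorem~\ref{thm:main:normalized margin}.
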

We can write 
\begin{align*}
m(\w)= {} & \min\limits_{\p\in\Delta^n}\min\limits_{\|\dl^{(i)}\|_s\leq \epsilon, \forall i\in[n]} \sum_{i=1}^np_{i}y^{(i)}\x^{(i)\top}\w + \frac{1}{n}\sum_{i=1}^ny^{(i)}\dl^{(i)\top}\w-\frac{1}{2}\|\w\|_2^2\notag \\
= {} &  \min\limits_{\p\in\Delta^n}\sum_{i=1}^np_{i}y^{(i)}\x^{(i)\top}\w + \min\limits_{\|\dl^{(i)}\|_s\leq \epsilon, \forall i\in[n]} \frac{1}{n}\sum_{i=1}^ny^{(i)}\dl^{(i)\top}\w-\frac{1}{2}\|\w\|_2^2 \notag\\
= {} &  \min\limits_{\p\in\Delta^n}\sum_{i=1}^np_{i}y^{(i)}\x^{(i)\top}\w - \epsilon\|\w\|_r-\frac{1}{2}\|\w\|_2^2 \notag\\
= {} & \min\limits_{\p\in\Delta^n}\sum_{i=1}^np_{i}y^{(i)}\x^{(i)\top}\w - \epsilon\left(\sum_{i=1}^np_i\|\w\|_r\right)-\frac{1}{2}\|\w\|_2^2 \notag\\
= {} & \min\limits_{\p\in\Delta^n}\sum_{i=1}^n p_i\left(y^{(i)}\x^{(i)}-\epsilon\|\w\|_r\right) -\frac{1}{2}\|\w\|_2^2 \notag\\
= {} & \min\limits_{\p\in\Delta^n}\sum_{i=1}^n\left(\min\limits_{\|\dl^{(i)}\|_s\leq \epsilon}p_i\left(y^{(i)}(\x^{(i)}+\dl^{(i)})^{\top}\w\right) \right) -\frac{1}{2}\|\w\|_2^2 \notag\\
= {} & \min\limits_{\p\in\Delta^n}\min\limits_{\|\dl^{(i)}\|_s\leq \epsilon, \forall i\in[n]} \sum_{i=1}^np_iy^{(i)}(\x^{(i)}+\dl^{(i)})^{\top}\w -\frac{1}{2}\|\w\|_2^2.
\end{align*}
%
Next, following similar procedure as in \eqref{eqn:lower bound of bar wT}, we can also write $m(\overline{\w}_T) \geq  
\gamma_{2,s}\|\overline{\w}_T\|_2  - \frac{1}{2}\|\overline{\w}_T\|_2^2  - C_T~.$
Let $\widetilde{\w}_T = \sum_{t=1}^T \alpha_t \w_t$, 
Then, it implies that
\[
\frac{\min\limits_{\p\in\Delta^n}\min\limits_{\|\dl^{(i)}\|_s\leq \epsilon, \forall i\in[n]} \sum_{i=1}^np_iy^{(i)}(\x^{(i)}+\dl^{(i)})^{\top}\widetilde{\w}_T}{\sum_{t=1}^T\alpha_t} 
\geq \gamma_{2,s} \left\|\frac{\widetilde{\w}_T}{\sum_{t=1}^T\alpha_t}\right\|_2 -C_T~,
\]
where $- \frac{1}{2}\|\overline{\w}_T\|_2^2$ is canceled on both sides.
Dividing both sides by $\|\widetilde{\w}_T\|_2$ gives us
%
\[
\frac{\min\limits_{\p\in\Delta^n}\min\limits_{\|\dl^{(i)}\|_s\leq \epsilon, \forall i\in[n]} \sum_{i=1}^np_iy^{(i)}(\x^{(i)}+\dl^{(i)})^{\top}\widetilde{\w}_T}{\|\widetilde{\w}_T\|_2} \geq \gamma_{2,s} -\frac{\text{Reg}_T^{\w}+\text{Reg}_T^{\p}+\frac{1}{n}\sum_{i=1}^n\text{Reg}^{\dl^{(i)}}_T}{\|\widetilde{\w}_T\|_2}~. 
\]
%
Thus, it suffices to lower bound the norm of $\widetilde{\w}_T$.
Based on Lemma \ref{lem:bound on m(w)} (now applied to $\w = \gamma_{2,s} \w^*_{2,s}$), we have $m(\overline{\w}_T) \geq  \frac{1}{2}(\gamma_{2,s})^2+C_T~. $  
Therefore,
\begin{align*}
\min\limits_{\p\in\Delta^n}&\min\limits_{\|\dl^{(i)}\|_s\leq \epsilon, \forall i\in[n]} \sum_{i=1}^np_iy^{(i)}(\x^{(i)}+\dl^{(i)})^{\top}\widetilde{\w}_T
\geq \frac{\|\widetilde{\w}_T\|_2^2}{2\sum_{t=1}^T\alpha_t} + \frac{\sum_{t=1}^T\alpha_t}{2}\gamma^2_{2,p} - C_T\sum_{t=1}^T\alpha_t~.
\end{align*}
On the other hand, similar to \eqref{eqn:proof:main:2}, we can get 
%
\[
    \|\widetilde{\w}_T\|_2\geq \|y\x\|_2\|\widetilde{\w}_T\|_2\geq \min\limits_{\p\in\Delta^n}\min\limits_{\|\dl^{(i)}\|_s\leq \epsilon, \forall i\in[n]} \sum_{i=1}^np_iy^{(i)}(\x^{(i)}+\dl^{(i)})^{\top}\widetilde{\w}_T~.
\]
Combining the above two inequalities, we obtain
$
\|\widetilde{\w}_T\|_2\geq \frac{\sum_{t=1}^T\alpha_t}{4}\gamma^2_{2,s}~,   
$
provided with the average regret bound $C_T\leq \frac{\gamma^2_{2,s}}{4}$.  Next, we study the directional error. 
Note that $m(\w)$ is 1-strongly concave with respect to the $\|\cdot\|_2$-norm. Based on the definition of $\w^*_{2,s}$, it is easy to draw the conclusion that $m(\w)$ is  maximized at $\gamma_{2,s}\w_{2,s}^*$. Note that $\overline{\w}_T=\frac{\widetilde{\w}_T}{\sum_{t=1}^T\alpha_t}$, and we have 
$ \frac{1}{2}\left\|\overline{\w}_T-\gamma_{2,s}\w^*_{2,s}\right\|_2^2\leq m(\w^*_{2,s})-m(\overline{\w}_T)\leq C_T,$
where the second inequality is based on Lemma \ref{lem:bound on m(w)}. On the other hand, following the same arguments as in \eqref{eqn:appb:11111}, we have 
\begin{align*}
\left\|\frac{\widetilde{\w}_T}{\|\widetilde{\w}_T\|}-\w_{2,s}^* \right\|_2 = \left\|\frac{\overline{\w}_T}{\|\overline{\w}_T\|}-\w_{2,s}^* \right\|_2 
\leq\frac{2\|\overline{\w}_T-\gamma_{2,s}\w_{2,s}^*\|_2}{\|\overline{\w}_T\|_2}.
\end{align*}
Therefore, 
\[
    \left\|\frac{\widetilde{\w}_T}{\|\widetilde{\w}_T\|}-\w_{2,s}^* \right\|_2\leq   \min\left\{\frac{\sum_{t=1}^T\alpha_t}{\|\widetilde{\w}_T\|_2},\frac{4}{\gamma_{2,s}^2}\right\}{2\sqrt{2C_T}}~.
\]


\subsection{Proof of Theorem \ref{thm:GDBAT}}
\label{appendix:proof of theorem GDBT}
We first focus on the algorithm equivalence. 
by setting $c_{t-1}=\frac{1}{\alpha_{t-1}}$, we have 
\begin{equation}
\label{eqn:proof:thm1:w_t}
    \w_{t}=\w_{t-1}-c_{t-1}\alpha_{t-1}\nabla h_{t-1}(\w_{t-1})=\sum_{i=1}^np_{t-1,i}y^{(i)}\x^{(i)}+\sum_{i=1}^n\frac{1}{n}y^{(i)}\dl_{t-1}^{(i)}.
\end{equation}

Next, for the $\dl^{(i)}$-player, we have 
\begin{equation}
    \begin{split}
    \label{eqn:delta:eqeq}
\dl^{(i)}_t =\argmax\limits_{\|\dl\|_s\leq\epsilon} \exp\left(-y^{(i)}(\x^{(i)}+\dl)^{\top}\widetilde{\w}_t\right).     
    \end{split}
\end{equation}
Note that the set of  $\argmin_{\|\dl\|_s\leq \epsilon}y^{(i)}\dl^{\top}\widetilde{\w}_t$ might not be unique.  Also note that, when $y^{(i)}=1$, $\dl_t^{(i)} \in\argmin_{\|\dl\|_s\leq \epsilon}\dl^{\top}\widetilde{\w}_t$, and when $y^{(i)}=-1$, $\dl_t^{(i)}\in\argmin_{\|\dl\|_s\leq \epsilon}-\dl^{\top}\widetilde{\w}_t$.  
Let $\widetilde{\dl}_t\in\argmin\limits_{\|\dl\|_s\leq \epsilon}\dl^{\top}\widetilde{\w}_t$. Based on Holder's inequality, we know that 
$-\widetilde{\dl}_t\in\argmin\limits_{\|\dl\|_s\leq \epsilon}-\dl^{\top}\widetilde{\w}_t$. Therefore, $y^{(i)}\dl^{(i)}_t$ serves as a universal solution for all $\dl^{(i)}$-players. We assume each  $\dl^{(i)}$-player adopts this solution, so  $y^{(i)}\dl^{(1)}=\dots=y^{(n)}\dl^{(n)}=\widetilde{\dl}_t,$ and thus 
$\sum_{i=1}^n\frac{1}{n}y^{(i)}\dl^{(i)}_{t-1}=\sum_{i=1}^np_{t-1,i}y^{(i)}\dl^{(i)}_{t-1},$
which, combined with \eqref{eqn:proof:thm1:w_t}, implies that 
$\w_t = \sum_{i=1}^np_{t-1,i}y^{(i)}(\x^{(i)}+\dl^{(i)}_{t-1}).$
To complete the proof, we focus on the $\p$-player and provide an expression for each term $p_{t,i}$. Based on the update rule and the relationship between FTRL with the negative entropy regularizer and Hedge \citep[for example]{orabona2019modern}, we have $\forall i\in[n]$,
\[
p_{t,i}= 
 \frac{
\exp\left(-y^{(i)}\x^{(i)\top}\widetilde{\w}_t\right)}{\sum_{k=1}^n\exp\left(-y^{(k)}\x^{(k)\top}\widetilde{\w}_t\right)}  
=  \frac{
\exp\left(-y^{(i)}(\x^{(i)}+\dl^{(i)}_{t})^{\top}\widetilde{\w}_t\right)}{\sum_{k=1}^n\exp\left(-y^{(k)}(\x^{(k)}+\dl^{(k)}_{t})^{\top}\widetilde{\w}_t\right)},
\]
where the equality is based on the fact that all $y^{(i)}\dl_t^{(i)}$ are equal with each other for $i\in[n]$. Substituting this above gives us
\begin{align*}
\w_t = \sum_{i=1}^n\Biggl(\frac{\exp\bigl(-y^{(i)}(\x^{(i)}+\dl^{(i)}_{t-1})^{\top}\widetilde{\w}_{t-1}\bigl)}{\sum_{k=1}^n\exp\bigl(-y^{(k)}(\x^{(k)}+\dl^{(k)}_{t-1})^{\top}\widetilde{\w}_{t-1}\bigl
)}y^{(i)}\left(\x^{(i)}+\dl^{(i)}_{t-1}\right)\Biggl).
\end{align*}
Finally, we relate the above expression to the iterates $\v_t$ realized by GD-AT.
Based on the definition of the exponential training loss $L$, we have 
$$\textstyle \frac{\nabla L(\widetilde{\w}_{t-1};\widetilde{\S}_{t-1})}{ L(\widetilde{\w}_{t-1};\widetilde{\S}_{t-1})}=-\sum_{i=1}^n\left(\frac{\exp\left(-y^{(i)}(\x^{(i)}+\dl^{(i)}_{t-1})^{\top}\widetilde{\w}_{t-1}\right)}{\sum_{k=1}^n\exp\left(-y^{(k)}(\x^{(k)}+\dl^{(k)}_{t-1})^{\top}\widetilde{\w}_{t-1}\right)}y^{(i)}\left(\x^{(i)}+\dl^{(i)}_{t-1}\right)\right)=-\w_t,$$
so $\widetilde{\w}_T=\widetilde{\w}_{T-1}+\alpha_T\w_T=\widetilde{\w}_{T-1}-\alpha_T\frac{\nabla L(\widetilde{\w}_{t-1};\widetilde{\S}_{T-1})}{ L(\widetilde{\w}_{t-1};\widetilde{\S}_{T-1})}.$
Recall that for each training example $(\x^{(i)}+\dl^{(i)}_t,y^{(i)})$ in $\widetilde{\mathcal{S}}_t$, the adversarial perturbation through GD-AT is defined as $\dl^{(i)}_t=\argmax\limits_{\|\dl\|_s\leq\epsilon} \exp\left(-y^{(i)}(\x^{(i)}+\dl)^{\top}\widetilde{\w}_t\right)$ which exactly matches the adversarial perturbation we obtain through the game framework in.  Thus, the equivalence part of the proof is finished by replacing $\widetilde{\w}_T$ with $\v_T$, and $\frac{\alpha_t}{L(\widetilde{\w}_{t-1};\widetilde{S}_{t-1})}$ with $\eta_{t-1}$. Next, we focus on the regret bounds, beginning with computing the regret bound for the $\w$-player. Recall that the $\w$-player uses online gradient descent. Let $\u=\argmin\limits_{\w\in\R^d}\sum_{t=1}^T\alpha_th_t(\w).$ 

Based on Theorem 6.10 of \citet{orabona2019modern}, we have  
\begin{align}
\sum_{t=1}^{T}& \alpha_th_t(\w_t) -  \sum_{t=1}^T \alpha_t h_t(\u) 
\leq  \frac{\|\u\|_2^2}{2c_{T+1}} + \sum_{t=1}^T\frac{c_{t}\alpha_t^2}{2}\left\|\nabla h_{t}(\w_t) \right\|^2_2\notag\\
=  {} & \frac{\|\u\|_2^2}{2c_{T+1}} + \frac{1}{2}\sum_{t=1}^T \alpha_t \left\|-\sum_{i=1}^np_{t,i}y^{(i)}\x^{(i)}-\sum_{i=1}^n\frac{1}{n}y^{(i)}\dl^{(i)}_{t}+\w_{t}\right\|_2^2\notag\\
= {} & \frac{\|\u\|_2^2}{2c_{T+1}} + \frac{1}{2}\sum_{t=1}^T \alpha_t \left\|-\sum_{i=1}^np_{t,i}y^{(i)}\x^{(i)}-\sum_{i=1}^n\frac{1}{n}y^{(i)}\dl^{(i)}_{t}+\sum_{i=1}^np_{t-1,i}y^{(i)}\x^{(i)}+\sum_{i=1}^n\frac{1}{n}y^{(i)}\dl^{(i)}_{t-1}\right\|_2^2\notag\\
\leq  {} & \frac{\|\u\|_2^2}{2c_{T+1}} +  \sum_{t=1}^T\alpha_t\|\p_t-\p_{t-1}\|_1^2 + \frac{1}{n}\sum_{i=1}^n\sum_{t=1}^T\alpha_t\|y^{(i)}\dl^{(i)}_{t}-y^{(i)}\dl^{(i)}_{t-1}\|_2^2~. \label{eqn:ATGD1111112}
\end{align}
To bound the first term, we focus on $\u$. 

We have  
\[
    \begin{split}
\u = {} & \frac{1}{\sum_{t=1}^T\alpha_t}\left(\sum_{t=1}^T\alpha_tA^{\top}\p_t+\frac{1}{n}\sum_{i=1}^n\sum_{t=1}^T\alpha_ty^{(i)}\dl^{(i)}_t\right)
= A^{\top}\overline{\p}_T +\frac{1}{n}\sum_{i=1}^ny^{(i)} \overline{\dl}^{(i)}_T,    
    \end{split}
\]
where $\overline{\p}_T$ and $\overline{\dl}^{(i)}_T$ are the weighted average of $p_t$ and $\dl_t^{(i)}$. 
Note that $\p_t\in\Delta^n$, and $\|\dl^{(i)}_t\|_s\leq \epsilon$, which are two convex sets, so it is easy to verify that $\|A^{\top}\overline{\p}_t\|_2\leq 1$. For the second term, if $s\in(1,2]$, then 
$\|\overline{\dl}_T^{(i)}\|_2\leq \|\overline{\dl}_T^{(i)}\|_s\leq \epsilon$. Thus, we have $\|\u\|_2\leq 1+\epsilon$. 
{ Later, we will show that the second term of \eqref{eqn:ATGD1111112} can be cancelled by the $\p$-player's regret. It remains to bound the third term. We first introduce the following lemma.
\begin{lemma}[{\citet[Lemma 18,][]{wang2021no}}] 
\label{lem:key:path:delta}
Let $\mathcal{K}$ be a $\lambda$-strongly convex set with respect to some norm $\|\cdot\|$. Let $\m,\mathbf{n}\in \R^d$ be two vectors, and 
$$
\textstyle
\r=\argmax\limits_{\r'\in\K} \r'^{\top}\m~,\qquad  
\s=\argmax\limits_{\s'\in\K} \s'^{\top}\mathbf{n}~. 
$$
Then $\|\r-\s\| \leq \frac{2\|\m-\mathbf{n}\|}{\lambda(\|\m\|+\|\mathbf{n}\|)}.$ 
\end{lemma}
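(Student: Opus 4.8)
The plan is to convert the two linear-maximization problems defining $\r$ and $\s$ into quadratically sharp optimality conditions by exploiting the ``bulge'' property of a $\lambda$-strongly convex set, and then to add the two conditions. Recall that $\K$ being $\lambda$-strongly convex with respect to $\|\cdot\|$ means: for all $\x,\y\in\K$, all $\gamma\in[0,1]$, and all $\v$ with $\|\v\|\le 1$, the point $\gamma\x+(1-\gamma)\y+\tfrac{\lambda}{2}\gamma(1-\gamma)\|\x-\y\|^2\v$ lies in $\K$.

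First I would prove the growth inequality: if $\r=\argmax_{\r'\in\K}\langle\m,\r'\rangle$ and $\x$ is any point of $\K$, then
\[
\langle\m,\r-\x\rangle\;\ge\;\frac{\lambda}{2}\,\|\m\|_*\,\|\r-\x\|^2 .
\]
To see this, pick $\v$ with $\|\v\|\le 1$ and $\langle\m,\v\rangle=\|\m\|_*$, and for $\gamma\in(0,1]$ feed the bulged point $\z_\gamma=\gamma\x+(1-\gamma)\r+\tfrac{\lambda}{2}\gamma(1-\gamma)\|\x-\r\|^2\v\in\K$ into the optimality of $\r$: $\langle\m,\r\rangle\ge\langle\m,\z_\gamma\rangle=\gamma\langle\m,\x\rangle+(1-\gamma)\langle\m,\r\rangle+\tfrac{\lambda}{2}\gamma(1-\gamma)\|\x-\r\|^2\|\m\|_*$. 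Rearranging and dividing by $\gamma$ gives $\langle\m,\r-\x\rangle\ge\tfrac{\lambda}{2}(1-\gamma)\|\x-\r\|^2\|\m\|_*$, and letting $\gamma\to0^{+}$ yields the claim. The identical argument applied to $\mathbf{n}$ gives $\langle\mathbf{n},\s-\r\rangle\ge\tfrac{\lambda}{2}\|\mathbf{n}\|_*\|\s-\r\|^2$.

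Then I would add the two inequalities, taking $\x=\s$ in the first and $\x=\r$ in the second. Since $\|\r-\s\|=\|\s-\r\|$, the left-hand side becomes $\langle\m-\mathbf{n},\r-\s\rangle$, which is at most $\|\m-\mathbf{n}\|_*\|\r-\s\|$ by the definition of the dual norm; the right-hand side is $\tfrac{\lambda}{2}(\|\m\|_*+\|\mathbf{n}\|_*)\|\r-\s\|^2$. If $\r=\s$ the bound is trivial; otherwise cancel one factor of $\|\r-\s\|$ and rearrange to get $\|\r-\s\|\le \dfrac{2\,\|\m-\mathbf{n}\|_*}{\lambda(\|\m\|_*+\|\mathbf{n}\|_*)}$, which is the asserted inequality (the norms of $\m,\mathbf{n}$ on the right being understood as dual norms; we also assume $\m,\mathbf{n}\neq\mathbf 0$ so the maximizers, and the denominator, are well defined).

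This proof is short, so the ``main obstacle'' is really just care in bookkeeping. The one genuine subtlety is getting the leading constant right: it is the passage $\gamma\to0^{+}$ (rather than fixing $\gamma=\tfrac12$) that pins the constant to $2$ instead of $4$, and this in turn depends on matching the exact normalization of the bulge term in whichever definition of ``$\lambda$-strongly convex set'' is in force. The only other point to watch is the degenerate case where $\m$ or $\mathbf{n}$ vanishes, in which the argmax is not a singleton — this is excluded by the nondegeneracy implicit in the statement. Everything else is a two-line computation.
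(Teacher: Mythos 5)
The paper does not prove this lemma; it is cited directly from Lemma 18 of \citet{wang2021no}, so there is no in-paper proof to compare against. On its own merits, your argument is correct and is the standard route for such distance bounds over strongly convex sets: you use the ``bulge'' form of strong convexity to obtain a quadratic growth inequality at each maximizer, add the two with the roles of $\r$ and $\s$ swapped, and finish with H\"older. The passage $\gamma\to 0^{+}$ is indeed what secures the factor $2$ rather than $4$, and your assumption $\m,\mathbf{n}\neq\mathbf{0}$ together with strong convexity is exactly what makes each $\argmax$ a singleton, so those preconditions are handled properly.

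The one thing to keep front and center is the dual-norm point that you flag at the end: your derivation yields
\[
\|\r-\s\| \;\le\; \frac{2\,\|\m-\mathbf{n}\|_*}{\lambda\bigl(\|\m\|_*+\|\mathbf{n}\|_*\bigr)}\,,
\]
with the dual norm $\|\cdot\|_*$ on the right-hand side, whereas the statement as quoted writes $\|\cdot\|$ on both sides. These coincide only when $\|\cdot\|$ is self-dual. In the application inside the paper (proof of Theorem~\ref{thm:GDBAT}), $\K$ is the $\ell_s$-ball and strong convexity is taken with respect to $\|\cdot\|_s$ for $s\in(1,2]$, so the correct right-hand norms are $\ell_{s'}$ with $1/s+1/s'=1$, not $\ell_s$. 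That discrepancy belongs to the paper's usage rather than to your proof, but it is worth stating the lemma with the dual norms explicitly so the downstream bounding is carried out against the correct quantities.
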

When $s\in(1,2]$, the set $\{\dl|\|\dl\|_s\leq \epsilon\}$ is $\frac{s-1}{\epsilon}$-strongly convex with respect to the $\ell_s$-norm \citep{garber2015faster}. 

Therefore, we have $\forall i\in[n], t>1$,
\begin{align}
 \|y^{(i)}\dl_t^{(i)}  - y^{(i)}\dl_{t-1}^{(i)}\|_2^2 &=    \|\dl_t^{(i)}  - \dl_{t-1}^{(i)}\|_2^2 
 \leq \|\dl_t^{(i)}  - \dl_{t-1}^{(i)}\|_s^2\notag \\
 \overset{\eqref{eqn:delta:eqeq}}{=} {} & \left\|\argmin \limits_{\|\dl\|_s\leq \epsilon} y^{(i)}\dl^{\top}\widetilde{\w}_t  - \argmin \limits_{\|\dl\|_s\leq \epsilon} y^{(i)}\dl^{\top}\widetilde{\w}_{t-1}\right\|_s^2\notag\\
 = {} & \left\|\argmax\limits_{\|\dl\|_s\leq \epsilon} -y^{(i)}\dl^{\top}\widetilde{\w}_t  - \argmax\limits_{\|\dl\|_s\leq \epsilon} -y^{(i)}\dl^{\top}\widetilde{\w}_{t-1}\right\|_s^2\notag\\
 {\leq} {} & \left[\frac{\epsilon\|\widetilde{\w}_t-\widetilde{\w}_{t-1}\|_s}{(s-1)( \|\widetilde{\w}_t\|_s+ \|\widetilde{\w}_{t-1}\|_s)}\right]^2
 = \left[\frac{\epsilon\alpha_t\|\w_t\|_s}{(s-1) \|\widetilde{\w}_t\|_s}\right]^2~.\label{eqn:bound:path:delta}
\end{align}
To proceed, we show an upper bound of $\|\w_t\|_s$, and a lower bound for $\|\widetilde{\w}_t\|_s.$ We have 
$$
\|\w_t\|_s=\left\| \sum_{i=1}^np_{t-1,i}y^{(i)}(\x^{(i)}+\dl^{(i)}_{t-1}) \right\|_s\leq d^{\frac{1}{s}-\frac{1}{2}}+\epsilon~,
$$
\begin{equation}
    \begin{split}
    \label{eqn:advtalowboundforwwww}
        \|\widetilde{\w}_t\|_s\geq  \|\widetilde{\w}_t\|_2=  \|\w^*\|_2\|\widetilde{\w}_T\|_2 \geq \w^{*\top}\widetilde{\w}_{t}  = \sum_{i=1}^t\alpha_i\w^{*\top}\w_i
 \geq \frac{t\gamma_2}{2}~.
    \end{split}
\end{equation}
To summarize, we get  $\|y^{(i)}\dl_t^{(i)}  - y^{(i)}\dl_{t-1}^{(i)}\|_2^2 \leq \frac{(d^{\frac{1}{s}-\frac{1}{2}}+\epsilon)^2\epsilon^2}{t^2(s-1)^2\gamma_2^2}$ for $t> 1$, and $\|y^{(i)}\dl_1^{(i)}  - y^{(i)}\dl_{0}^{(i)}\|_2^2 \leq  \epsilon^2$ for $t=1$.

Thus 
\begin{equation*}
\frac{1}{n}\sum_{i=1}^n\sum_{t=1}^T   \|y^{(i)}\dl_t^{(i)}  - y^{(i)}\dl_{t-1}^{(i)}\|_2^2 \leq \frac{\pi \epsilon^2(d^{\frac{1}{s}-\frac{1}{2}}+\epsilon)^2}{6(s-1)^2\gamma_2^2}+\epsilon^2.
\end{equation*}
Combined with \eqref{eqn:ATGD1111112}, we have 
\begin{equation*}
    \begin{split}
  \sum_{t=1}^{T} {} &\alpha_th_t(\w_t) -  \min\limits_{\w\in\R^d}\sum_{t=1}^T \alpha_t h_t(\w) \leq (1+\epsilon)^2 +  \frac{\pi \epsilon^2(d^{\frac{1}{s}-\frac{1}{2}}+\epsilon)^2}{6(s-1)^2\gamma_2^2}+\epsilon^2 + \frac{1}{2}\sum_{t=1}^T\|\p_t-\p_{t-1}\|_1^2~.      
    \end{split}
\end{equation*}
}

Next, for the $\p$-player, since it uses FTRL$^+$  \citep{wang2021no}, we have
$\sum_{t=1}^T\alpha_t\ell_{t}(\p_t)-\sum_{t=1}^T\ell_{t}(\p^*)\leq \log n - \sum_{t=1}^T\frac{1}{2}\|\p_t-\p_{t-1}\|_1^2,$ and for the $\dl^{(i)}$-player {since it uses the FTL$^+$ algorithm, we know its regret bounded by 0. The proof is finished by applying Theorem \ref{thm:main:normalized margin}, combining  the regret bound for all players, and \eqref{eqn:advtalowboundforwwww}.}

%


\subsection{Proof of Theorem \ref{thm:acccccc}}
\label{sec:proof:faster rates}
We first focus on the algorithm equivalence. For the $\w$-player, similar to the proof in Section \ref{appendix:Proof of section 3},


we can obtain the following closed form:
\begin{equation}
    \begin{split}
    \label{eqn:wwww:accc}
    \w_t =  \frac{1}{\sum_{j=1}^t\alpha_j}\sum_{j=1}^t \alpha_j \left(\sum_{i=1}^np_{j,i}y^{(i)}\x^{(i)}+\sum_{i=1}^n \frac{1}{n}y^{(i)}\dl_{j}^{(i)}\right). 
    \end{split}
\end{equation}
For the $\dl^{(i)}$-player, we have $\dl^{(i)}_t =\argmax\limits_{\|\dl\|_s\leq \epsilon}\exp\left(-y^{(i)}\left(\sum_{j=1}^{t-1}\alpha_j\w_j+\alpha_t\w_{t-1}\right)^{\top}\dl \right)$. Finally, we focus on the $\p$-player. Following similar arguments as in \eqref{eqn:delta:eqeq}, we know $\forall i\in[n]$, $y^{(i)}\dl^{(i)}_t$ are equivalent to each other. Based on the relationship between Hedge and FTRL, we have
\begin{equation*}
    \begin{split}
p_{t,i}= \frac{
\exp\left(-y^{(i)}(\x^{(i)}+\dl^{(i)}_{t})^{\top}\left(\sum_{j=1}^{t-1}\alpha_j\w_j+\alpha_t\w_{t-1}\right)\right)}{\sum_{k=1}^n\exp\left(-y^{(k)}(\x^{(k)}+\dl^{(k)}_{t})^{\top}\left(\sum_{j=1}^{t-1}\alpha_j\w_j+\alpha_t\w_{t-1}\right)\right)}~.
    \end{split}
\end{equation*}
Let $\widetilde{\mathcal{S}}_t$ contains all $\widetilde{\x}^{(i)}_t=\x^{(i)}+\dl^{(i)}_t$.  Then
\( 
\frac{\nabla L(\widehat{\w}_t;\widetilde{\mathcal{S}}_t)}{L(\widehat{\w}_t;\widetilde{\mathcal{S}}_t)}
=
\Bigl(\sum_{i=1}^np_{t,i}y^{(i)}\x^{(i)}+\sum_{i=1}^n\frac{1}{n}y^{(i)}\dl_t^{(i)}\Bigl).
\)
Note that $\alpha_t=\frac{t}{2}$. Let $\z_t=\w_t\sum_{j=1}^t\alpha_j$, and we have 
\begin{equation*}
    \begin{split}
\z_t\overset{\eqref{eqn:wwww:accc}}{=} {} & \z_{t-1}+\alpha_t\left(\sum_{i=1}^np_{t,i}y^{(i)}\x^{(i)}+\sum_{i=1}^n\frac{1}{n}y^{(i)}\dl_t^{(i)}\right) 
=  \z_{t-1} - \alpha_t \frac{\nabla L(\widehat{\w}_{t};\widetilde{\mathcal{S}}_t)}{L(\widehat{\w}_{t};\widetilde{\mathcal{S}}_t)}.
    \end{split}
\end{equation*}
Moreover,  
$ \widehat{\w}_t = \widetilde{\w}_{t-1} + \alpha_t \w_{t-1} = \widetilde{\w}_{t-1}+ \frac{2}{t-1}\z_{t-1}, $
and $\widetilde{\w}_t= \widetilde{\w}_{t-1} + \alpha_t\w_t = \widetilde{\w}_{t-1} + \frac{2}{t+1} \z_t.$      

To summarize, we get:
\begin{equation*}
\text{For each round}\ t:  
\begin{cases}
    & \widehat{\w}_t = \widetilde{\w}_{t-1}+ \frac{2}{t-1}\z_{t-1}\\
    & \dl_t^{(i)} = \argmax_{\|\dl\|\leq \epsilon} \exp\left(-y^{(i)}\left(\x^{(i)}+\dl\right)^{\top}\widehat{\w}_t\right)\,\,\forall i\in[n]  \\
    & \z_t = \z_{t-1} - \frac{t}{2L(\widehat{\w}_t;\S_t)}\nabla L(\widehat{\w}_t;\S_t)\\
    & \widetilde{\w}_t = \widetilde{\w}_{t-1}+\frac{2}{t+1}\z_t~.
    \end{cases}        
\end{equation*}
The proof is finished by replacing $\widehat{\w}_t$ with $\widehat{\v}_t$, $\widetilde{\w}_t$ with $\v_t$, and 
setting $\beta_{t,1}=1,\beta_{t,2}=\frac{2}{t-1}$, $\eta_{t-1}=\frac{t}{2L(\widehat{\w}_t;\S_t)}$, $\beta_{t,3}=1$, and $\beta_{t,4}=\frac{2}{t+1}$.

Next, we focus on the regret bounds. For the $\w$-player, note that $h_i(\w)$ is 1-strongly convex. Therefore, by conducting FTL$^+$ \citep{orabona2019modern}, we have 
$\text{Reg}_T^{\w} = -\sum_{t=1}^T\frac{t(t-1)}{4}\|\w_t-\w_{t-1}\|^2_2.  $
For the $\p$-player, since it uses Optimistic FTRL with a $1$-strongly convex regularizer with respect to the $\ell_1$-norm, we have 
$\text{Reg}_T^{\p}\leq \log n + \frac{t^2\|\w_t-\w_{t-1}\|^2_2}{8}.$
{
For the $\dl^{(i)}$-player, it uses the optimistic FTL algorithm. Let $\widehat{\dl}^{(i)}_t=\argmin\limits_{\|\dl\|_s\leq \epsilon}\sum_{j=1}^{t-1}\alpha_js_j^{(i)}(\dl)$ 
be the solution for FTL. We have
\begin{align*}
 \text{Reg}^{\dl^{(i)}}_T 
 &\leq \sum_{t=1}^T y^{(i)}\alpha_t \left[ \left(\w_t^{\top}\dl_t^{(i)}-\w_t^{\top}\widehat{\dl}_{t+1}^{(i)}\right) - \left( \w_{t-1}^{\top}\dl_t^{(i)}-\w_{t-1}^{\top}\widehat{\dl}_{t+1}^{(i)}\right) \right] \\
 &= \sum_{t=1}^Ty^{(i)}\alpha_t \left( \w_t -\w_{t-1} \right)^{\top}\left(\dl_t^{(i)}-\widehat{\dl}_{t+1}^{(i)}\right)
 \leq \sum_{t=1}^T \frac{t^2\|\w_t-\w_{t-1}\|^2_2}{16} + \sum_{t=1}^T \bigl\| \dl_t^{(i)}-\widehat{\dl}_{t+1}^{(i)}\bigl\|^2_2~,
\end{align*}
where the first inequality is based on the regret of optimistic FTL (see, e.g.,   \citet[Lemma 9,][]{wang2021no}), and the last inequality is based  on Young's inequality. To proceed, we need to bound the second term at the RHS. Let $\widehat{\w}_{t} = \sum_{j=1}^{t-1}\alpha_j\w_j + \alpha_t\w_{t-1}$.
For $s\in(1,2]$, following similar procedures in \eqref{eqn:bound:path:delta}, applying Lemma \ref{lem:key:path:delta}, we have 
\begin{equation*}
 \|\dl_t^{(i)}  - \widehat{\dl}_{t+1}^{(i)}\|_2^2 
 \leq
 \left[\frac{\epsilon\|\widehat{\w}_t-\widetilde{\w}_{t}\|_s}{(s-1)( \|\widehat{\w}_t\|_s+ \|\widetilde{\w}_{t}\|_s)}\right]^2
 =  
 \left[\frac{\epsilon\alpha_t\|\w_{t-1}-\w_{t}\|_s}{(s-1) \|\widetilde{\w}_t\|_s}\right]^2.
\end{equation*}
We provide a tighter bound on $\|\widetilde{\w}_t\|_s$. Note that 
\begin{align*}
\|\widetilde{\w}_t\|_s\geq \|\widetilde{\w}_t\|_2 
&= \|\w^*\|_2\|\widetilde{\w}_t\|_2\geq \w^{*\top}\widetilde{\w}_t= \Bigl(\sum_{k=1}^{T}\alpha_k \w^{*\top}{\w}_k\Bigl) \\
&= \sum_{k=1}^{t}\alpha_k\left(\frac{1}{\sum_{j=1}^k\alpha_j} \sum_{j=1}^k\alpha_j\Bigl(\sum_{i=1}^np_{j,i}y^{(i)}\x^{(i)\top}\w^*+y^{(i)}\dl_j^{(j)\top}\w^*\Bigl)\right)\\
&\geq {(\gamma_2-\epsilon\|\w^*\|_r)\sum_{k=1}^t\alpha_k}
\geq 
2\gamma_2\sum_{k=1}^T\alpha_k/3
\geq 
t^2\gamma_2/3.    
\end{align*}
On the other hand, from \eqref{eqn:wwww:accc}, it is easy to see that
$\textstyle \|\w_t-\w_{t-1}\|_s\leq 2(d^{\frac{1}{s}-\frac{1}{2}}+\epsilon).$ Thus, 
\begin{equation*}
 \|\dl_t^{(i)}  - \widehat{\dl}_{t+1}^{(i)}\|_2^2 
 \leq \left[ \frac{2\epsilon (d^{\frac{1}{s}-\frac{1}{2}}+\epsilon)}{(s-1)t\gamma_2}\right]^2 = \frac{4\epsilon^2(d^{\frac{1}{s}-\frac{1}{2}}+\epsilon)^2}{(s-1)^2t^2\gamma_2^2}. 
\end{equation*}
Summing over $t$, we get $\sum_{t=1}^T \bigl\| \dl_t^{(i)}-\widehat{\dl}_{t+1}^{(i)}\bigl\|^2_2 \leq \frac{\pi\epsilon^2(d^{\frac{1}{s}-\frac{1}{2}}+\epsilon)^2}{(s-1)^2\gamma_2^2}.$
The proof is finished by combining the regret bounds, and noticing that when $t\geq 4$, $\frac{t(t-1)}{4}\geq \frac{t^2}{8} + \frac{t^2}{16}.$ If $s>2$, we can bound the regret of the last term by $\sum_{t=1}^T \bigl\| \dl_t^{(i)}-\widehat{\dl}_{t+1}^{(i)}\bigl\|^2_2\leq 4Td^{\frac{1}{2}-\frac{1}{s}}\epsilon^2~,$
so the regret of the $\dl^{(i)}$-player satisfies
$\text{Reg}_T^{\dl^{(u)}} \leq  \sum_{t=1}^T \frac{t^2\|\w_t-\w_{t-1}\|^2_2}{16} + 4T\epsilon^2d^{\frac{1}{2}-\frac{1}{s}}$~. 

We can finish the proof by combining the above regret with the regret bound of other players, and also the lower bound of $\widetilde{\w}_t$, and then apply Theorem \ref{thm:main:normalized margin}.\\
}

\section{Future directions}
Despite the effectiveness of the game framework in handling generic methods and adversarial training, it presently holds some limitations. First, the algorithmic equivalences are currently operational only for exponential loss; the extension to handle more general losses a vital area for future research. More generally, identifying algorithmic equivalence is nuanced and non-trivial, and it is as yet unresolved whether this framework can elucidate other methods, such as the last-iterate of MD, or MD with non-strongly convex norms.
It also remains to be seen whether more advanced adaptive online learning algorithms can be captured by our framework, such as parameter-free online learning  \citep{orabona2016coin,cutkosky2018black}.\\

\noindent \textbf{Acknowledgements}\ GW was supported by a ARC-ACO fellowship provided by Georgia Tech. VM was supported by the NSF (through CAREER award CCF-2239151 and award IIS-2212182), an Adobe Data Science Research Award, an Amazon Research Award and a Google Research Colabs Award. JA was supported by the AI4OPT Institute, as part of NSF Award 2112533, and NSF through Award IIS-1910077.




\bibliography{arXiv/ref}
\bibliographystyle{plainnat}

\end{document}